\documentclass[10pt,journal]{IEEEtran}
\usepackage{amsmath,amssymb,amsfonts}
\usepackage{amsthm}
\usepackage{algorithmic}
\usepackage{array}
\usepackage[caption=false,font=footnotesize]{subfig}
\usepackage{textcomp}
\usepackage{url}
\usepackage{verbatim}
\usepackage{graphicx}
\usepackage{cite}
\usepackage{booktabs}
\usepackage{nicefrac}
\usepackage{microtype}
\usepackage{xcolor}
\usepackage{mathtools}
\usepackage{multirow}
\usepackage{colortbl}
\usepackage[hidelinks]{hyperref}
\usepackage[ruled,vlined,linesnumbered]{algorithm2e}
\graphicspath{{figures/}}
\newtheorem*{theoremfinal}{Theorem}
\newtheorem*{corollaryfinal}{Corollary}
\newtheorem*{remarkfinal}{Remark}
\newtheorem*{propositionfinal}{Proposition}

\title{Preserving Geometric Integrity in Graph Prompting\\ via Measure-Constrained Optimal Transport}

\author{Xiangyu~Wang,
        Shuo~Wang,
        Ruiyi~Fang,
        and~Zhao~Kang%
\thanks{X. Wang and S. Wang contributed equally to this work.}%
\thanks{X. Wang and Z. Kang are with the School of Computer Science and Engineering, University of Electronic Science and Technology of China, Chengdu 611731, China (e-mail: xy.texwang.cs@gmail.com; zkang@uestc.edu.cn).}%
\thanks{S. Wang is with the School of Computer Science and Engineering, University of Electronic Science and Technology of China, Chengdu 611731, China, and also with the Institute for Interdisciplinary Information Sciences, Tsinghua University, Beijing 100084, China (e-mail: runner21st@gmail.com).}%
\thanks{R. Fang is with the Department of Computer Science, Western University, Canada (e-mail: rfang32@uwo.ca).}%
\thanks{Z. Kang is the corresponding author.}}

\begin{document}

\pagestyle{plain}

\maketitle

\begin{abstract}

Graph prompt learning enables parameter-efficient adaptation of frozen
Graph Neural Networks to downstream tasks through lightweight prompt
parameters. As routing becomes increasingly node-adaptive, however,
independently optimized local decisions can collectively concentrate assignment
mass on a small subset of a finite shared prompt bank, even when individual
node--prompt matches remain locally meaningful. We propose
\textbf{MINT (Measure-INtegrity Transport)}, an entropically regularized
optimal transport framework that formulates node-to-prompt adaptation as
a globally coupled allocation problem. The transport cost favors local
geometric compatibility, while a prescribed prompt-side marginal explicitly
controls graph-wide prompt utilization. We further derive an exact variance
decomposition that separates prompt-side geometric variance into retained
prompt-update variation and within-node barycentric dispersion, together with
a conditional stability bound for the frozen-encoder forward map. Across
standard citation networks and additional heterophilic graphs, MINT remains
competitive in few-shot adaptation. Controlled and end-to-end experiments
further distinguish the roles of routing and topology: fixed-marginal routing
controls graph-wide prompt utilization and has measurable end-to-end effects
on citation networks, while topology augmentation provides a complementary,
graph-dependent mechanism for addressing structural mismatch. Code is available
at \url{https://github.com/Ga1axy0051/MINT}.

\end{abstract}

\noindent\textbf{Keywords:} Graph pretraining, Graph Prompting, Optimal Transport, Few-Shot Adaptation, Hubness.

\section{Introduction}

Graph Neural Networks (GNNs) learn representations from
node attributes and graph connectivity and have become a standard foundation
for learning on graph-structured data
~\cite{kipf2017semi,hamilton2017inductive,velickovic2018graph}.
Graph pre-training and prompt learning further enable frozen GNN backbones to
adapt to downstream tasks through a small number of task-specific parameters
~\cite{sun2022gppt,liu2023graphprompt,fang2023gpf}. Existing studies have
progressively enriched graph prompting through more expressive prompt
parameterizations, instance-conditioned prompts, and topology-aware adaptation.
As routing becomes increasingly node- and instance-adaptive, however, a
distinct challenge emerges: local node-to-prompt decisions collectively
determine how a finite shared prompt bank is utilized across the graph.

\begin{figure*}[!t]
    \centering
    \subfloat[\textbf{Evolution of $G_p$.}\label{fig:motivation_gini}]{%
        \includegraphics[width=0.4\textwidth]{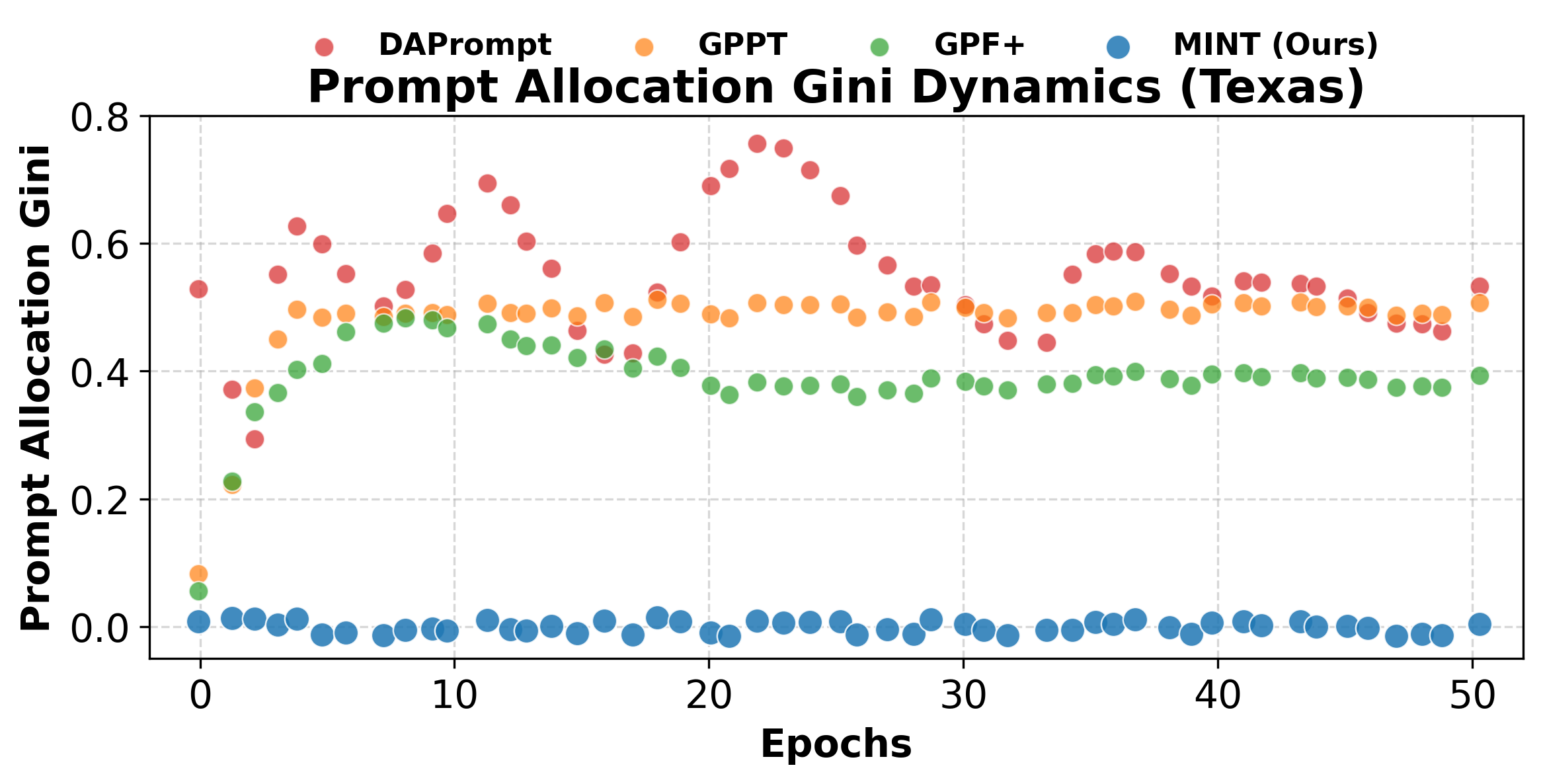}%
    }
    \hfil 
    \subfloat[\textbf{Evolution of Dirichlet Energy.}\label{fig:motivation_energy}]{%
        \includegraphics[width=0.4\textwidth]{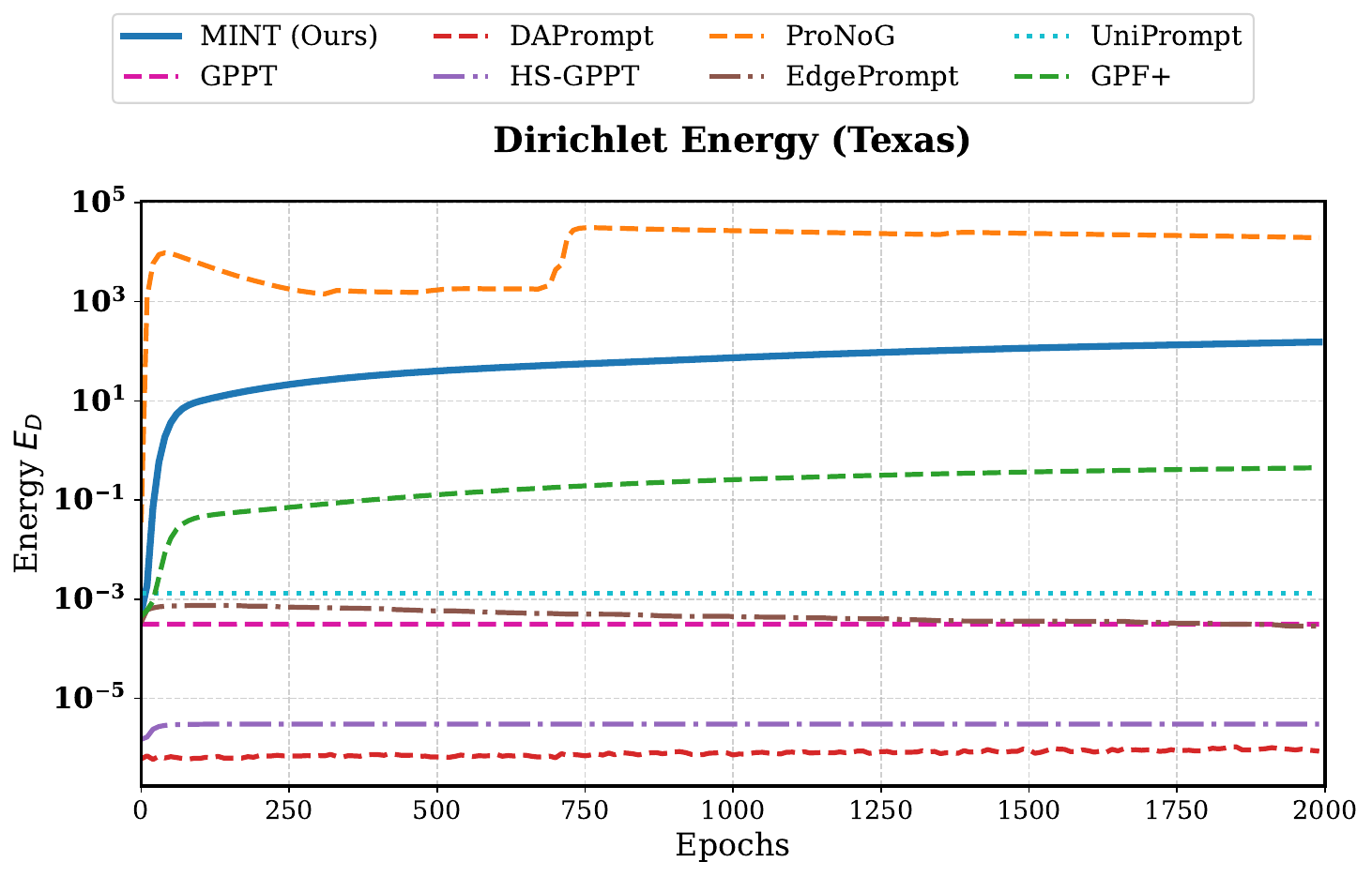}%
    }
    \caption{\textbf{Allocation and representation diagnostics for graph prompting.} (a) The Prompt Allocation Gini Coefficient ($G_p$) measures allocation concentration across the prompt bank. (b) Feature Dirichlet Energy ($E_D$), shown on a logarithmic axis, measures graph-relative representation geometry. Prompt-update variation serves as the intermediate quantity analyzed in Section~\ref{subsec:theory_laws}, while Feature Dirichlet Energy is retained as a downstream graph-relative representation diagnostic.}
    \label{fig:motivation_dynamics}
\end{figure*}

A useful geometric analogy is \textbf{hubness} in high-dimensional
representation spaces
~\cite{radovanovic2010hubness,feldbauer2019comprehensive},
where a small subset of points occurs disproportionately often in the
nearest-neighbor lists of other points. Analogously, node-to-prompt routing can
assign a disproportionate fraction of the aggregate mass to a small subset of
prompts even when the corresponding local matches are individually meaningful.
We quantify this graph-wide allocation concentration using the Prompt
Allocation Gini Coefficient ($G_p$), where $G_p=0$ denotes uniform aggregate
utilization. As shown by the optimization trajectories in
Fig.~\ref{fig:motivation_gini}, substantial allocation concentration can arise
under representative local routing strategies.

This graph-wide allocation profile matters because the routing matrix directly
determines the prompt-induced feature update supplied to the frozen model.
When aggregate allocation becomes highly concentrated, the variation retained
in this update can be reduced. The downstream effect is not determined by
allocation alone: final representations and predictions also depend on the
original node features, prompt geometry, graph propagation, and optimization.
Accordingly, the central object is not a generic notion of representation
collapse, but the barycentric prompt-induced update that enters the actual
frozen-GNN forward path. In Section~IV-B, we formalize this relation through
an exact allocation identity, an exact variance decomposition, and a
conditional forward-stability result for the frozen encoder. This issue is also
distinct from the optimization difficulty introduced by discrete routing: the
present problem concerns graph-wide utilization under otherwise valid local
assignments.

These observations motivate considering graph prompting at both local and
graph-wide scales. Node--prompt assignments should remain locally compatible,
while aggregate use of the shared prompt bank should be explicitly controlled.
Achieving both objectives requires moving beyond independently normalized
local routing toward a globally coupled allocation formulation.

To this end, we propose \textbf{Measure-INtegrity Transport (MINT)}, which
formulates node-to-prompt adaptation as a globally coupled, fixed-marginal
allocation problem through entropically regularized optimal transport (OT)
~\cite{peyre2019computational,cuturi2013sinkhorn}. MINT optimizes a transport
plan between nodes and prompts: the transport cost favors local geometric
compatibility, whereas a prescribed prompt-side marginal specifies graph-wide
prompt utilization. Entropic regularization yields a differentiable routing
mechanism implemented with finite Sinkhorn iterations. Separately, MINT
includes a topology-augmentation branch for graph-space structural mismatch.
The routing and topology mechanisms therefore act on different computational
objects: fixed-marginal OT controls prompt-space allocation, whereas topology
augmentation modifies node-to-node propagation.

We evaluate MINT across homophilic citation networks and additional
heterophilic graphs using multiple pretrained GNN backbones. Beyond aggregate
few-shot accuracy, we analyze the mechanism at several levels: graph-wide
prompt utilization, prompt-induced feature variation, frozen-forward behavior,
and the separate contributions of routing and topology augmentation. The
results show that MINT remains competitive across the evaluated few-shot
settings, that fixed-marginal routing has measurable end-to-end effects on
standard citation networks, and that topology augmentation plays a
complementary but graph-dependent role.

Our main contributions are summarized as follows:

\begin{itemize}

\item We identify graph-wide utilization of a finite shared prompt bank as a
distinct aspect of graph prompt learning: locally meaningful node-to-prompt
routing can still produce substantial aggregate concentration.

\item We introduce MINT, which couples node-to-prompt allocation through a
prescribed prompt-side marginal while favoring local geometric compatibility
through entropically regularized OT. A separate topology-augmentation branch
addresses graph-space structural mismatch.

\item We derive an exact variance decomposition linking the graph-wide
allocation profile, prompt geometry, and barycentric mixing to prompt-update
variation, together with a conditional bound on the frozen-encoder deviation
under severe single-prompt concentration. We further evaluate few-shot
adaptation and the complementary effects of routing and topology augmentation
across datasets, pretrained backbones, and graph regimes.

\end{itemize}


\section{Related Work}
\label{sec:related_work}

\subsection{Graph Pre-training and Few-Shot Adaptation}
Graph pre-training extracts transferable, task-agnostic information from
unlabeled graph data. Early contrastive self-supervised learning (SSL) methods
such as DGI \cite{velickovic2019deep} and InfoGraph \cite{sun2020infograph}
maximized mutual information, while subsequent methods (e.g.,
GraphCL \cite{you2020graphcl}, GCC \cite{qiu2020gcc},
MVGRL \cite{hassani2020mvgrl}, GCA \cite{zhu2021gca}) used augmented views.
BGRL \cite{thakoor2022bgrl} and CCA-SSG \cite{zhang2021cca}
removed negative sampling. Because heuristic augmentations such as edge
dropping may alter important topology, generative pre-training---including
Masked Graph Autoencoders (GAEs) such as GraphMAE \cite{hou2022graphmae},
GraphMAE2 \cite{hou2023graphmae2}, MaskGAE \cite{li2023maskgae},
and HGMAE \cite{tian2023hgmae}---instead learns priors by reconstructing
masked graph information.

Adapting these priors to data-scarce downstream tasks has also been studied
through graph meta-learning methods, including Meta-GNN
\cite{zhou2019metagnn}, G-Meta \cite{huang2020gmeta}, and
GPN \cite{ding2020gpn}. Parameter-efficient graph prompting offers another
route by adapting frozen pretrained models to downstream tasks with lightweight
parameters.
Unified reviews organize graph SSL objectives and clarify how pretext design, augmentation, and downstream transfer interact across contrastive, predictive, and generative paradigms~\cite{xie2023selfsupervised}.
Recent graph foundation models pursue shared interfaces across graphs and tasks through text-space evaluation, joint graph--language modeling, text-free multi-domain pre-training, and transferable text-attributed representations~\cite{chen2024textspace,kong2025gofa,yu2025samgpt,zhu2025graphclip}.
Unsupervised, source-free, and invariant-learning formulations further address shifts in graph attributes, structure, and environments under label scarcity~\cite{yin2025dream,luo2024gala,wu2022handling}.
Multi-domain transfer has consequently been studied through topology alignment, attribute-driven adaptation, and structure-centric geometric bases for few-shot transfer across heterogeneous feature spaces~\cite{wang2025mdgfm,fang2025attribute,he2026scgfm}.

Evidence from heterophily-focused designs, datasets, and metrics shows that the utility of graph propagation varies with the graph regime rather than following from a single homophily score~\cite{zhu2020beyond,lim2021large,luan2022revisiting}.
Accordingly, topology effects should be interpreted as graph-regime dependent; robust structure learning under heterophily and curvature-stratified evaluation offer complementary structural and geometric perspectives without identifying homophily with curvature~\cite{shen2025heterophily,wang2026postgcn}.

\subsection{Graph Prompt Learning}

Graph prompting bridges the pretext--downstream gap with lightweight parameters that steer frozen GNNs~\cite{sun2022gppt, zi2024prog}. GPF adds a shared learnable prompt feature to node inputs, whereas GPF+ generates node-dependent prompts from learnable basis prompts~\cite{fang2023gpf}. GraphPrompt uses a task-specific learnable prompt vector in its prompt-assisted readout~\cite{liu2023graphprompt}, while All-in-One constructs graph prompts with learnable prompt tokens and prompt structure~\cite{sun2023allinone}. Other multi-prompt and instance-dependent designs increase local
flexibility~\cite{fu2025edgeprompt,yu2025pronog}; additional frameworks
enrich prompt conditioning or task generality~\cite{huang2023prodigy,
liu2024ofa,yu2024hgprompt}. Topology- and frequency-aware approaches further
address structural mismatch~\cite{jiang2026daprompt,luo2025hsgppt}.

Sparse graph prompting, adaptive prompt pools for continual test-time training, and graph prompt clustering extend parameter-efficient adaptation along complementary axes~\cite{jiang2026gsp,cai2026continual,chen2025gpc}.
Graph foundation models also study cross-domain transfer through multi-domain pre-training, text-attributed transfer, and topology alignment~\cite{yu2025samgpt,zhu2025graphclip,wang2025mdgfm}.
Topology-routed mixed-curvature experts provide a recent dynamic-graph prompting perspective~\cite{wang2026curvprompt}; this approach does not impose MINT's fixed prompt-side marginal.

Graph structure learning separately studies topology refinement and sparsification for robustness, including regularized reconstruction, learned edge removal, and statistically tested similarities~\cite{jin2020prognn,zheng2020neuralsparse,wang2022multiple}.
Feature-derived structural views, heterophily-aware refinement, and information-aware multiplex fusion further adapt graph topology~\cite{fang2022spgrl,shen2025heterophily,shen2024infomgf}, which remains distinct from MINT's prompt-space allocation mechanism.

These developments primarily improve prompt content, specificity, or structural adaptation. Graph-wide utilization has received substantially less attention. In particular, row-wise attention determines a valid mixture for every node but leaves aggregate assignment mass uncontrolled. MINT instead couples node-to-prompt adaptation globally through a prescribed prompt-side marginal that controls the allocation profile.

\subsection{Hubness Mitigation in Representation Learning}
Classical hubness occurs when a small subset of points appears disproportionately often in the $k$-nearest-neighbor lists of other points in high-dimensional spaces~\cite{radovanovic2010hubness}. In broader alignment tasks, heuristic techniques such as Mutual Proximity~\cite{schnitzer2012local} and Cross-Domain Similarity Local Scaling (CSLS)~\cite{conneau2017word} mitigate hubness by adjusting similarity scores using local neighborhood densities.

These post-hoc neighborhood corrections address a different setting from end-to-end prompt allocation. They adjust local scores, whereas MINT uses the prescribed prompt-side marginal of a transport coupling to encode graph-wide prompt utilization directly while preserving differentiability through entropic regularization.

\subsection{Optimal Transport in Graph Learning}
Optimal Transport (OT) provides a mathematical framework for measuring spatial discrepancies between probability distributions \cite{peyre2019computational}. In graph learning, Gromov-Wasserstein (GW) and related distances have been applied to graph matching, barycenter computation, and cross-domain alignment \cite{peyre2017gromov, vayer2019optimal, xu2019gwl, chen2020got}, including the Wasserstein Weisfeiler-Lehman (WWL) kernel \cite{togninalli2019wwl} and GOT \cite{petric2019got}. Recent studies have also formulated generalized global pooling as regularized OT problems across sample indices and feature dimensions \cite{xu2023regularized}. Prior graph-learning applications use OT for graph embeddings, distances to learnable templates, and feature aggregation~\cite{kolouri2021wegl,vincentcuaz2022template,mialon2021trainable}, while unbalanced and partial OT relax mass or matching assumptions~\cite{sejourne2021unbalanced,ratnayaka2024learning}.
GW alignment has also been used to map heterogeneous graphs to shared learnable geometric bases for cross-domain transfer~\cite{he2026scgfm}.

Prior graph-learning uses of OT commonly emphasize graph comparison, alignment, or pooling. MINT uses fixed-marginal entropic transport~\cite{cuturi2013sinkhorn} to couple node-to-prompt assignments within the graph-prompting forward path: the transport cost favors locally compatible node--prompt assignments, while the prescribed prompt-side marginal specifies graph-wide prompt utilization. This allocation role distinguishes MINT's use of OT from the graph comparison, alignment, and pooling settings above.

Balanced prototype assignment predates our formulation: SwAV uses Sinkhorn-based equipartition for online prototype assignment in self-supervised visual representation learning~\cite{caron2020swav}. Sinkhorn scaling, equipartition, and prototype assignment are established ingredients; MINT instead uses a prescribed prompt-side marginal to couple graph-wide prompt allocation with the barycentric prompt-induced feature update.

\section{Preliminaries}
\label{sec:preliminaries}
In this section, we introduce the prompt-space notation and two diagnostics: allocation concentration and graph-relative representation geometry.

\subsection{Prompt Space}
\label{subsec:pre_notations_prompt}
Let $G = (\mathcal{V}, \mathcal{E})$ be an undirected graph with $N = |\mathcal{V}|$ nodes and edge set $\mathcal{E}$, where the topological structure is represented by the adjacency matrix $A \in \mathbb{R}^{N \times N}$. Node features are denoted by $X \in \mathbb{R}^{N \times d}$ for an embedding dimension $d$. We define the augmented adjacency matrix as $\tilde{A} = A + I$, with $\tilde{D}$ as its corresponding degree matrix. The normalized graph Laplacian is given by $L = I - \tilde{D}^{-1/2} \tilde{A} \tilde{D}^{-1/2}$.

For downstream adaptation, we consider a node classification task over a discrete label space $\mathcal{Y}$. The corresponding ground-truth one-hot label matrix is defined as $Y \in \{0, 1\}^{N \times |\mathcal{Y}|}$. 
Let $\mathcal{V}_{tr} \subset \mathcal{V}$ denote the subset of labeled training nodes. 
We represent the prompt space by a bank of $M$ learnable prompt vectors, collected in the prompt matrix $P=[p_1,\ldots,p_M]^\top\in\mathbb{R}^{M\times d}$. Let $F_\theta(U,G')$ denote the representation map of the frozen pretrained encoder on node features $U$, where $G'$ denotes the weighted node-to-node propagation graph supplied to the encoder, and let $h_\phi$ denote the separately trained linear classifier.

To define node-to-prompt transport, let $\mu = \frac{1}{N}\mathbf{1}_N$ and $\nu = \frac{1}{M}\mathbf{1}_M$ denote the uniform source (nodes) and target (prompts) marginal distributions, respectively, where $\mathbf{1}$ is an all-ones vector. The prompt-side marginal $\nu_j$ specifies the aggregate transport mass assigned to prompt vector $p_j$. The feasible set of transport couplings with prescribed marginals is $\mathcal{U}(\mu, \nu) = \{ \Pi \in \mathbb{R}_{+}^{N \times M} \mid \Pi \mathbf{1}_M = \mu, \Pi^\top \mathbf{1}_N = \nu \}$.

\subsection{Prompt Allocation Concentration}
\label{subsec:pre_prompt_gini}
Classical hubness describes points that occur disproportionately often in the $k$-nearest-neighbor lists of other points in high-dimensional spaces. In node--prompt routing, an analogous allocation concentration can place disproportionate aggregate assignment mass on a small subset of prompts. To quantify this disparity, we define the Prompt Allocation Gini Coefficient ($G_p$).
Let $w_{ij}\ge0$ be generic node-to-prompt assignment weights with $\sum_{i,j}w_{ij}>0$, and define the raw mass assigned to prompt $j$ by $a_j:=\sum_{i=1}^Nw_{ij}$.
We use the normalized allocation profile $\rho_j=a_j/\sum_{\ell=1}^M a_\ell$, so $\sum_j\rho_j=1$. Given the non-decreasing ordered sequence $\{a_{(1)}, a_{(2)}, \dots, a_{(M)}\}$, the coefficient is formulated as:
\begin{equation*}
G_{p} = \frac{\sum_{j=1}^M (2j - M - 1) a_{(j)}}{M \sum_{j=1}^M a_{(j)}}
\end{equation*}
For fixed finite $M$, $0\le G_p\le(M-1)/M$: uniform allocation gives $G_p=0$, while complete concentration on one prompt gives $G_p=(M-1)/M$. Because $G_p$ is invariant to a common positive scaling, row-stochastic routing weights $w_{ij}=R_{ij}$ and the corresponding unit-mass MINT coupling induce the same normalized profile. For row-stochastic $R$, $\sum_{i,j}R_{ij}=N$ and hence $\rho_j=\frac{1}{N}\sum_iR_{ij}$.

\subsection{Representation Smoothness and Geometry Diagnostics}
\label{subsec:pre_manifold_metrics}

GNN representations depend on both node features and graph structure. We therefore use the Dirichlet energy of node representations, denoted $E_D$~\cite{zhou2021dirichlet}, as a descriptive measure of graph-relative representation variation:
\begin{equation*}
E_D(X) = \frac{1}{|\mathcal{V}|} \text{tr}(X^\top L X)
\end{equation*}
Low energy indicates that adjacent-node representations vary little; high energy indicates stronger graph-relative variation. Its scale is determined jointly by feature normalization, the graph, and the representation layer. We report $G_p$ for aggregate routing and $E_D$ for graph-relative representation geometry as complementary diagnostics.

\section{Motivation and Theoretical Framework}
\label{sec:motivation_theory}

\subsection{Allocation Concentration and Prompt-Update Geometry}
\label{subsec:empirical_obs}

Figure~\ref{fig:motivation_dynamics}(a) shows that independently normalized routing can yield concentrated graph-wide prompt use, whereas a prescribed prompt-side marginal controls aggregate assignment mass. Figure~\ref{fig:motivation_dynamics}(b) reports graph-relative representation geometry as a downstream diagnostic. The prompt-induced feature update $\Delta X_P=RP$ directly connects routing and prompt geometry; its relationship to representations and predictions further depends on the frozen encoder and optimization. Section~\ref{subsec:mechanism_analysis} evaluates these relationships empirically.

MINT controls prompt allocation through fixed transport marginals, while topology augmentation separately targets graph-space structural mismatch, whose effect varies with graph regime.

\subsection{Prompt-Update Geometry and Forward Stability}
\label{subsec:theory_laws}

Let $R\in\mathbb{R}_{+}^{N\times M}$ be a row-stochastic node-to-prompt routing matrix, $R\mathbf{1}_M=\mathbf{1}_N$, and let $P\in\mathbb{R}^{M\times d}$ be the prompt matrix. The prompt-induced feature update and adapted input used by the frozen model are
\begin{equation}
\Delta X_P=RP, \qquad X_{\mathrm{adapted}}=X+\alpha\Delta X_P,
\label{eq:actual_forward_message}
\end{equation}
followed by the frozen encoder $F_\theta(X_{\mathrm{adapted}},G')$; predictions use $\hat Y=h_\phi(F_\theta(X_{\mathrm{adapted}},G'))$. Thus, $R$ is a node-to-prompt routing matrix, distinct from $G'$, the weighted node-to-node propagation graph supplied to the encoder.

For any such $R$, the normalized graph-wide allocation profile from Section~\ref{subsec:pre_prompt_gini} is $\rho_j=\frac{1}{N}\sum_{i=1}^N R_{ij}$. Under MINT, $R=N\Pi^\star$ and $\Pi^\star\in\mathcal{U}(\mu,\nu)$, giving the exact identity
\begin{equation}
 \rho_j=\frac{1}{N}\sum_iR_{ij}
 =\sum_i\pi^\star_{ij}=\nu_j.
 \label{eq:normalized_capacity_identity}
\end{equation}
For a prompt-index subset $S\subseteq\{1,\ldots,M\}$, define $\rho(S)=\sum_{j\in S}\rho_j$. Under MINT, $\rho(S)=\sum_{j\in S}\nu_j$, which becomes $|S|/M$ for uniform $\nu$. This identity specifies graph-wide prompt utilization, while the realized prompt-induced update additionally depends on prompt geometry and row-wise routing.

Let $\Delta x_{P,i}$ denote row $i$ of $\Delta X_P$, let $\overline{\Delta x}_P=\frac{1}{N}\sum_i\Delta x_{P,i}$, and let
\begin{equation}
\begin{aligned}
 V_{\Delta X_P}
 &=\frac{1}{N}\|H_N\Delta X_P\|_F^2\\
 &=\frac{1}{N}\sum_i
 \|\Delta x_{P,i}-\overline{\Delta x}_P\|_2^2,\\
 H_N&=I-\frac{1}{N}\mathbf{1}_N\mathbf{1}_N^\top.
\end{aligned}
 \label{eq:prompt_update_variance}
\end{equation}
The empirical mechanism study uses the same node-centered variance, computed from the squared Frobenius norm across features and normalized by $N$.

Row stochasticity and the definition of $\rho$ give the mean identity
\begin{equation}
 \overline{\Delta x}_P
 =\frac{1}{N}\sum_i\sum_jR_{ij}p_j
 =\sum_j\rho_jp_j.
 \label{eq:prompt_update_mean_identity}
\end{equation}
We define the prompt-side geometric variance carried by the prompt matrix under the graph-wide allocation profile as
\begin{equation}
 V_P(\rho):=\sum_{j=1}^M\rho_j
 \|p_j-\overline{\Delta x}_P\|_2^2.
 \label{eq:prompt_side_variance}
\end{equation}

\begin{theoremfinal}[Variance Decomposition]
\begin{equation}
\begin{aligned}
 V_P(\rho)=V_{\Delta X_P}
 &+\frac{1}{N}\sum_{i=1}^N\sum_{j=1}^M R_{ij}\,
 \|p_j-\Delta x_{P,i}\|_2^2,
\end{aligned}
 \label{eq:variance_decomposition}
\end{equation}
and hence $V_{\Delta X_P}\le V_P(\rho)$.
\end{theoremfinal}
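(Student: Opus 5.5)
The plan is to expand the law of total variance / the parallel-axis identity row by row. Write $m:=\overline{\Delta x}_P$. For each node $i$, the row $R_{i\cdot}$ is a probability vector on $\{1,\ldots,M\}$, and $\Delta x_{P,i}=\sum_j R_{ij}p_j$ is its barycenter. The standard barycentric identity, applied for any fixed vector $c\in\mathbb{R}^d$, reads
\begin{equation*}
\sum_{j=1}^M R_{ij}\|p_j-c\|_2^2=\|\Delta x_{P,i}-c\|_2^2+\sum_{j=1}^M R_{ij}\|p_j-\Delta x_{P,i}\|_2^2 .
\end{equation*}
To obtain it, I will write $p_j-c=(p_j-\Delta x_{P,i})+(\Delta x_{P,i}-c)$ and expand the square. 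The cross term equals $2\langle \sum_j R_{ij}(p_j-\Delta x_{P,i}),\,\Delta x_{P,i}-c\rangle$, and it vanishes because $\sum_j R_{ij}p_j=\Delta x_{P,i}$ and $\sum_j R_{ij}=1$ (row stochasticity).

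Next I will take $c=m$, average over $i$ with weight $1/N$, and sum. On the left, exchanging the sums gives $\sum_j\big(\frac1N\sum_i R_{ij}\big)\|p_j-m\|_2^2=\sum_j\rho_j\|p_j-m\|_2^2$. By the definition of $\rho$, this is exactly $V_P(\rho)$ in \eqref{eq:prompt_side_variance}. Here it matters that the centering point in $V_P(\rho)$ is $\overline{\Delta x}_P$, which equals the $\rho$-weighted prompt mean by \eqref{eq:prompt_update_mean_identity}; this makes the left side a genuine weighted variance, although the algebra only needs $c=m$ to be a fixed vector. On the right, the first term becomes $\frac1N\sum_i\|\Delta x_{P,i}-m\|_2^2=V_{\Delta X_P}$ by \eqref{eq:prompt_update_variance}. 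The second term is exactly the within-node dispersion term in \eqref{eq:variance_decomposition}.

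The inequality $V_{\Delta X_P}\le V_P(\rho)$ then follows immediately, since $R_{ij}\ge0$ makes the dispersion term nonnegative.

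There is no real obstacle. The only point requiring care is the vanishing of the cross term, which uses both the barycenter definition of $\Delta x_{P,i}$ and $R\mathbf{1}_M=\mathbf{1}_N$. I will state explicitly that the argument holds for any row-stochastic $R$, not only the MINT coupling, and that it is purely algebraic, with no dependence on $\nu$.
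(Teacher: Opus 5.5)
Your proof is correct and follows essentially the same route as the paper. The paper also expands $p_j-\overline{\Delta x}_P=(p_j-\Delta x_{P,i})+(\Delta x_{P,i}-\overline{\Delta x}_P)$ row by row, kills the cross term using $\sum_j R_{ij}(p_j-\Delta x_{P,i})=0$ (barycenter definition plus row stochasticity), and then averages over $i$ to identify $V_P(\rho)$, $V_{\Delta X_P}$, and the nonnegative within-node dispersion term.
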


The decomposition separates allocation-induced weighting from the prompt geometry and node-wise barycentric mixing that determine the realized update. MINT couples these quantities through the prescribed prompt-side marginal and the node--prompt transport cost. Accordingly, the association between allocation Gini and $V_{\Delta X_P}$ may vary as the prompt matrix and row-wise mixtures co-adapt.

\begin{figure*}[!t]
    \centering
    \includegraphics[width=0.95\textwidth]{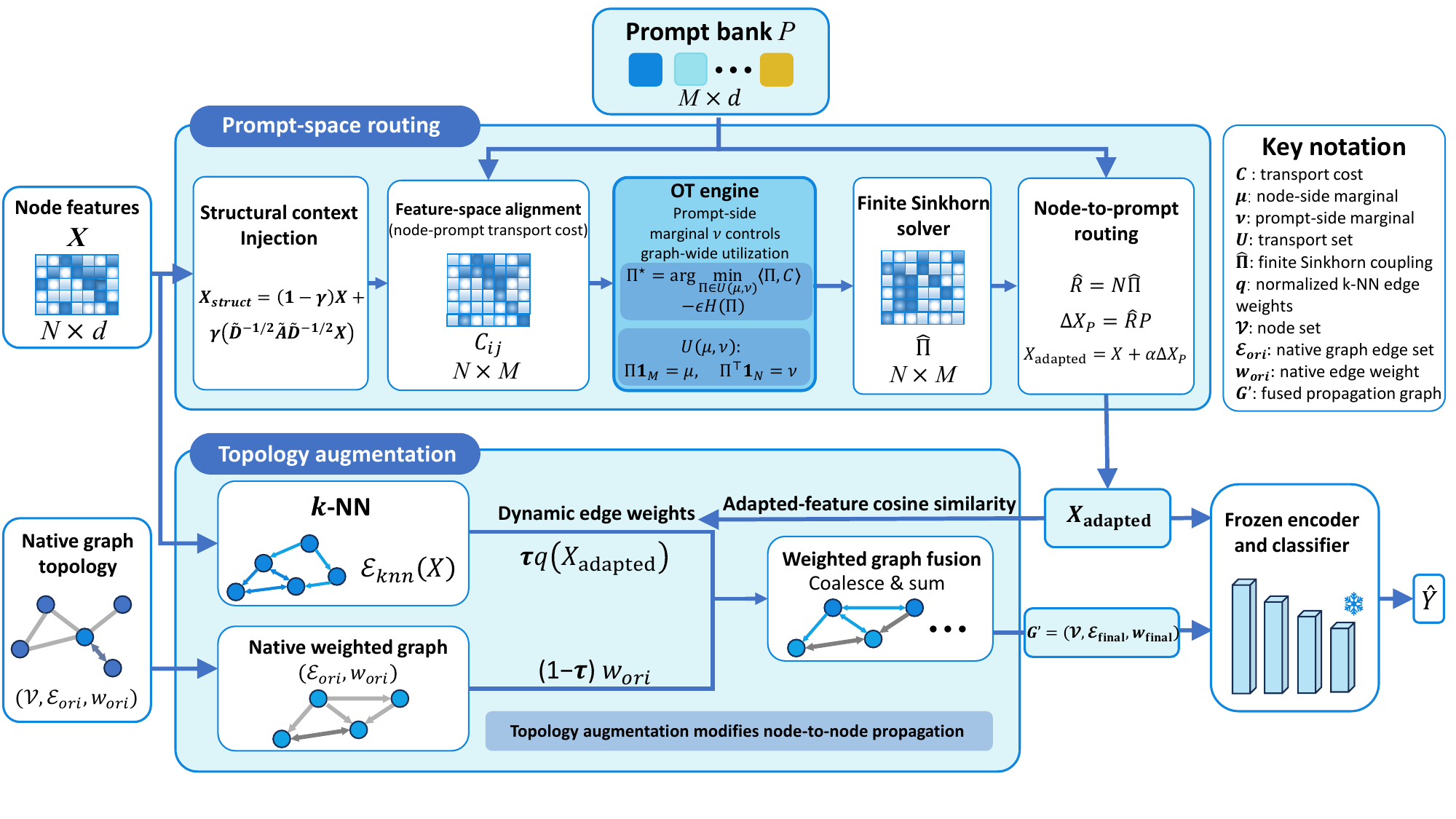}
    \caption{Overview of the MINT framework.
Fixed-marginal optimal transport globally couples node-to-prompt routing for prompt-induced feature adaptation, while topology augmentation dynamically reweights a fixed feature-derived $k$-NN graph and fuses it with the native graph.
The adapted node features and fused propagation graph are jointly supplied to the frozen encoder for downstream prediction.}
    \label{fig:framework}
\end{figure*}

For nonempty $S$, write $P_S=\{p_j:j\in S\}$ and define
$\operatorname{diam}(P_S)=\max_{j,k\in S}\|p_j-p_k\|_2$ and
$\operatorname{diam}(P)=\max_{j,k}\|p_j-p_k\|_2$.

\begin{corollaryfinal}[Subset Concentration]
For every nonempty $S\subseteq\{1,\ldots,M\}$,
\begin{equation}
 V_{\Delta X_P}
 \le \rho(S)\operatorname{diam}(P_S)^2
 +[1-\rho(S)]\operatorname{diam}(P)^2.
 \label{eq:subset_variance_bound}
\end{equation}
\end{corollaryfinal}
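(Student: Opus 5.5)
We derive the corollary from the Variance Decomposition theorem, which gives $V_{\Delta X_P}\le V_P(\rho)$. The remaining work is to bound $V_P(\rho)$ by the right-hand side of \eqref{eq:subset_variance_bound}.

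\textbf{Step 1: freedom in the centering point.} The weighted variance $V_P(\rho)=\sum_j\rho_j\|p_j-\overline{\Delta x}_P\|_2^2$ is centered at $\overline{\Delta x}_P=\sum_j\rho_jp_j$ by \eqref{eq:prompt_update_mean_identity}. This is the $\rho$-weighted mean of the prompts, and so it minimizes $c\mapsto\sum_j\rho_j\|p_j-c\|_2^2$. Hence, for any point $c\in\mathbb{R}^d$,
\[
V_P(\rho)\le\sum_{j\in S}\rho_j\|p_j-c\|_2^2+\sum_{j\notin S}\rho_j\|p_j-c\|_2^2 .
\]

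\textbf{Step 2: choose $c$ and bound the two sums.} Take $c=p_k$ for an arbitrary fixed $k\in S$; this is possible because $S$ is nonempty.
\begin{itemize}
\item For $j\in S$, we have $\|p_j-p_k\|_2\le\operatorname{diam}(P_S)$.
\item For $j\notin S$, we have $\|p_j-p_k\|_2\le\operatorname{diam}(P)$.
\end{itemize}
Summing with weights $\rho_j$ gives $\rho(S)\operatorname{diam}(P_S)^2+[1-\rho(S)]\operatorname{diam}(P)^2$. Here we use $\sum_j\rho_j=1$, which holds because $R$ is row-stochastic. This is exactly the claimed bound.

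\textbf{Main obstacle.} The only delicate point is the centering choice. Bounding the distances to $\overline{\Delta x}_P$ directly would not separate the contributions of $S$ and its complement. The variational characterization of the weighted mean resolves this. It can be checked by expanding
\[
\sum_j\rho_j\|p_j-c\|_2^2=V_P(\rho)+\|c-\overline{\Delta x}_P\|_2^2 ,
\]
a parallel-axis identity that again uses $\sum_j\rho_j=1$ and $\rho_j\ge0$.
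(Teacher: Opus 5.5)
Your proposal is correct and follows essentially the same route as the paper: both combine $V_{\Delta X_P}\le V_P(\rho)$ from the decomposition theorem with the fact that the $\rho$-weighted mean $\overline{\Delta x}_P$ minimizes weighted squared deviation, then recenter at a prompt $p_s$ with $s\in S$ and bound the in-$S$ and out-of-$S$ distances by $\operatorname{diam}(P_S)$ and $\operatorname{diam}(P)$. Your parallel-axis identity spells out the minimization step that the paper states without proof.
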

For $S=\{j^\star\}$, $\operatorname{diam}(P_S)=0$, so
\begin{equation}
 V_{\Delta X_P}\le(1-\rho_{j^\star})\operatorname{diam}(P)^2.
 \label{eq:single_prompt_variance_bound}
\end{equation}

\begin{remarkfinal}[Additive Cost Potentials]
If $C'_{ij}=C_{ij}+\xi_i+\zeta_j$, then every $\Pi\in\mathcal{U}(\mu,\nu)$ satisfies
$\langle\Pi,C'\rangle=\langle\Pi,C\rangle+\sum_i\mu_i\xi_i+\sum_j\nu_j\zeta_j$.
The added term is independent of the feasible coupling, so additive row and column potentials leave the fixed-marginal optimizer unchanged. This classical OT property is used to interpret graph-wide prompt-attractiveness bias.
\end{remarkfinal}

For any prompt $j^\star$, convexity of the squared norm gives the common-prompt update proximity bound
\begin{equation}
\begin{aligned}
 \frac{1}{N}\|\Delta X_P-\mathbf{1}_Np_{j^\star}^\top\|_F^2
 &\le\sum_j\rho_j\|p_j-p_{j^\star}\|_2^2\\
 &\le(1-\rho_{j^\star})\operatorname{diam}(P)^2.
\end{aligned}
 \label{eq:common_prompt_proximity_bound}
\end{equation}

\begin{propositionfinal}[Forward Stability]
Fix $G'$ and the frozen parameters $\theta$. Suppose the map $U\mapsto F_\theta(U,G')$ is $L_F$-Lipschitz in Frobenius norm on a convex neighborhood containing both $X+\alpha\Delta X_P$ and $X+\alpha\mathbf{1}_Np_{j^\star}^\top$. Then
\begin{equation}
\begin{aligned}
 &\frac{1}{\sqrt N}\big\|
 F_\theta(X+\alpha\Delta X_P,G')\\
 &\quad-F_\theta(X+\alpha\mathbf{1}_Np_{j^\star}^\top,G')\big\|_F\\
 &\le L_F|\alpha|\operatorname{diam}(P)
 \sqrt{1-\rho_{j^\star}}.
\end{aligned}
 \label{eq:forward_stability_bound}
\end{equation}
\end{propositionfinal}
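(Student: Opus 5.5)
The argument is a direct combination of the Lipschitz hypothesis with the common-prompt update proximity bound \eqref{eq:common_prompt_proximity_bound}. Write $U_1=X+\alpha\Delta X_P$ and $U_0=X+\alpha\mathbf{1}_Np_{j^\star}^\top$. Both points lie, by hypothesis, in a convex neighborhood on which $U\mapsto F_\theta(U,G')$ is $L_F$-Lipschitz in Frobenius norm. Applying the hypothesis directly gives
\begin{equation*}
\|F_\theta(U_1,G')-F_\theta(U_0,G')\|_F\le L_F\|U_1-U_0\|_F=L_F|\alpha|\,\|\Delta X_P-\mathbf{1}_Np_{j^\star}^\top\|_F .
\end{equation*}
Convexity of the neighborhood is stated only so that the Lipschitz property makes sense along the segment $[U_0,U_1]$, for instance if $L_F$ is derived from a Jacobian bound. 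If the constant is given as a genuine Lipschitz constant on the set, no integration argument is needed.

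Next I divide by $\sqrt N$ and control the remaining Frobenius norm with \eqref{eq:common_prompt_proximity_bound}. For completeness I would re-derive that bound briefly. Row $i$ of $\Delta X_P-\mathbf{1}_Np_{j^\star}^\top$ equals $\sum_jR_{ij}(p_j-p_{j^\star})$, because $R$ is row-stochastic. Jensen's inequality for $\|\cdot\|_2^2$ gives $\|\sum_jR_{ij}(p_j-p_{j^\star})\|_2^2\le\sum_jR_{ij}\|p_j-p_{j^\star}\|_2^2$. Averaging over $i$ and using $\rho_j=\frac1N\sum_iR_{ij}$ yields $\sum_j\rho_j\|p_j-p_{j^\star}\|_2^2$. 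In this sum the $j^\star$ term vanishes and every other term is at most $\operatorname{diam}(P)^2$, so the whole sum is bounded by $(1-\rho_{j^\star})\operatorname{diam}(P)^2$, since $\sum_{j\ne j^\star}\rho_j=1-\rho_{j^\star}$.

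Taking square roots gives $\frac{1}{\sqrt N}\|\Delta X_P-\mathbf{1}_Np_{j^\star}^\top\|_F\le\operatorname{diam}(P)\sqrt{1-\rho_{j^\star}}$. Substituting this into the Lipschitz estimate yields \eqref{eq:forward_stability_bound}.

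There is no substantial obstacle. The only points needing care are bookkeeping. The $1/N$ normalization in \eqref{eq:common_prompt_proximity_bound} must match the $1/\sqrt N$ on the left-hand side. The scalar $\alpha$ should be pulled out as $|\alpha|$, which also covers negative $\alpha$. Finally, the statement should be read as conditional on the Lipschitz assumption holding on a set containing both inputs; the proof does not establish any global regularity of the frozen encoder.
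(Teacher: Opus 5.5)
Your proof is correct and follows the paper's argument: the paper also applies the Lipschitz hypothesis to extract $L_F|\alpha|\,\|\Delta X_P-\mathbf{1}_Np_{j^\star}^\top\|_F/\sqrt N$, then invokes the common-prompt proximity bound, which it derives, as you do, from row-wise convexity of the squared norm, averaging with $\rho_j$, and bounding each $j\neq j^\star$ term by $\operatorname{diam}(P)^2$. Your remark is also accurate: convexity of the neighborhood is never used when $L_F$ is a genuine Lipschitz constant on the set.
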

The proposition bounds the forward-map deviation associated with single-prompt concentration when the weighted propagation graph and frozen encoder are fixed; the complete method is evaluated in Section~\ref{sec:experiments}.

\section{Methodology}
\label{sec:methodology}

Motivated by the analysis in Section~\ref{sec:motivation_theory}, \textbf{MINT} couples node-to-prompt decisions through a prescribed prompt-side marginal, while topology augmentation separately addresses graph-space structural mismatch, as illustrated in Fig.~\ref{fig:framework}.

\subsection{Structural Context Injection}
\label{subsec:struct_aggr}
To inject local structural context before prompt routing, we perform structural pre-aggregation on the native graph. Using the graph notation from Section~\ref{subsec:pre_notations_prompt}, we compute the structural feature $X_{struct}$ with a learnable scalar $\gamma$, initialized at $0.5$:
\begin{equation}
    X_{struct} = (1 - \gamma) X + \gamma (\tilde{D}^{-1/2} \tilde{A} \tilde{D}^{-1/2} X).
\label{eq:x_struct}
\end{equation}
The learnable scalar $\gamma$ adjusts the relative contribution of original features and normalized neighborhood aggregation~\cite{wu2019simplifying, gasteiger2019predict}.

\subsection{OT Engine: Fixed-Marginal Node--Prompt Transport}
\label{subsec:ot_engine}
Let $\bar x_i^{\mathrm{struct}}$ and $\bar p_j$ denote row-$L_2$-normalized structural features and prompt vectors. MINT computes
\begin{equation}
 C^{\mathrm{raw}}_{ij}=\|\bar x_i^{\mathrm{struct}}-\bar p_j\|_2^2,
 \qquad
 C=\frac{C^{\mathrm{raw}}}{\max_{a,b}C^{\mathrm{raw}}_{ab}+10^{-8}}.
 \label{eq:implemented_cost}
\end{equation}
For normalized rows, $C^{\mathrm{raw}}_{ij}=2(1-\cos(\bar x_i^{\mathrm{struct}},\bar p_j))$; the matrix-wise maximum rescaling is retained in the implemented cost. MINT solves the entropically regularized OT objective to obtain the formulation-level optimal transport plan $\Pi^\star$:
\begin{equation}
    \Pi^\star = \arg\min_{\Pi \in \mathcal{U}(\mu, \nu)} \sum_{i,j} \pi_{ij} C_{ij} - \epsilon \mathcal{H}(\Pi),
    \label{eq:ot_objective}
\end{equation}
where $\epsilon > 0$ controls the smoothness of the assignment, and $\mathcal{H}(\Pi) = -\sum_{i,j} \pi_{ij} \log \pi_{ij}$ is the Shannon entropy of the transport plan. With $K=\exp(-C/\epsilon)$, the implementation initializes $u=v=\mathbf{1}$ and performs 20 differentiable $u$-then-$v$ Sinkhorn scaling updates with $10^{-8}$ denominator safeguards before forming $\widehat\Pi=\operatorname{diag}(u)K\operatorname{diag}(v)$\cite{cuturi2013sinkhorn}. At the formulation level, aggregate balance follows from the prescribed transport marginals; the finite Sinkhorn solver realizes them to numerical tolerance.

We instantiate the prescribed prompt-side marginal as $\nu=\frac{1}{M}\mathbf{1}_M$, a symmetric equal-capacity prior over the prompt bank. The prior is defined over prompt utilization rather than class frequencies; settings with intrinsically non-uniform prompt demand may instead motivate non-uniform marginals. At the formulation level, defining the node-to-prompt routing matrix $R=N\Pi^\star$ gives $R\mathbf{1}_M=\mathbf{1}_N$ and, by~\eqref{eq:normalized_capacity_identity}, $\rho_j=\frac{1}{N}\sum_iR_{ij}=\nu_j$. Under the usual fixed-marginal conditions, the converged entropic transport solution depends smoothly on nondegenerate costs. This supports gradient-based optimization while retaining the allocation identity.

Feature adaptation uses the prompt-induced feature update. The formulation-level node-to-prompt routing matrix is $R=N\Pi^\star$. The implementation instead defines $\widehat R=N\widehat\Pi$, which numerically realizes the prescribed routing marginals, and computes
\begin{equation}
    \Delta X_P=\widehat R P=N\widehat\Pi P, \qquad X_{\mathrm{adapted}}=X+\alpha\Delta X_P,
    \label{eq:feat_adapt}
\end{equation}
where $\alpha$ is a learned scalar initialized at $0.01$. The prescribed prompt-side marginal fixes aggregate utilization; the exact decomposition separates retained prompt-update variation from within-node barycentric dispersion, while the frozen encoder and classifier determine downstream representation and prediction effects.

\subsection{Topology Augmentation}
\label{subsec:edge_fusion}
The transport branch operates in node--prompt space, whereas topology augmentation modifies the node-to-node propagation operator. For each experimental run, MINT constructs a directed exact $k$-NN index set $\mathcal{E}_{\mathrm{knn}}(X)$ once from the raw input features using cosine similarity, with self-neighbors excluded and no explicit symmetrization. For every indexed edge, each forward pass computes
\begin{equation}
\begin{aligned}
s_{ij}&=\operatorname{ReLU}\!\left(\cos(x_i^{\mathrm{adapted}},x_j^{\mathrm{adapted}})\right),\\
q_{ij}(X_{\mathrm{adapted}})&=
\frac{s_{ij}}{\max_{(a,b)\in\mathcal{E}_{\mathrm{knn}}}s_{ab}+10^{-8}}.
\end{aligned}
\label{eq:dynamic_knn_weight}
\end{equation}
With a fixed hyperparameter $\tau$, the fused propagation graph is
\begin{equation}
\begin{aligned}
(\mathcal{E}_{\mathrm{final}},w_{\mathrm{final}})=\operatorname{coalesce}\big(&
(\mathcal{E}_{\mathrm{ori}},(1-\tau)w_{\mathrm{ori}})\cup{}\\
&(\mathcal{E}_{\mathrm{knn}}(X),\tau q(X_{\mathrm{adapted}}))\big),
\end{aligned}
\label{eq:weighted_graph_fusion}
\end{equation}
where overlapping edges are merged by summing their weights and no post-fusion normalization is applied. We denote the resulting weighted propagation graph by $G'=(\mathcal{V},\mathcal{E}_{\mathrm{final}},w_{\mathrm{final}})$. Structural pre-aggregation in~\eqref{eq:x_struct} uses the native graph, whereas downstream frozen-encoder propagation uses $G'$.

\textbf{Adapting to Discrete Architectural Constraints:} Prompt-utilization control and graph topology address different objects. For backbones that support topology replacement but do not consume external scalar edge weights, the corresponding experiments use topology-compatible variants. Original MINT retains the weighted directed fusion in~\eqref{eq:weighted_graph_fusion}, with fixed indices and dynamically recomputed scalar weights.

\subsection{Joint Optimization and Inference}
\label{subsec:optimization_inference}

The complete forward pass and optimization procedure are summarized in Algorithm~\ref{alg:mint}. The framework differentiates through the finite Sinkhorn computation and the adapted-feature scalar edge weights. With frozen encoder parameters $\theta$, the predictions are $\hat{Y}=h_\phi(F_\theta(X_{\mathrm{adapted}},G'))$. The classification loss $\mathcal{L}_{cls}$ is cross-entropy over $\mathcal{V}_{tr}$, and $\mathcal{L}_{OT}=\sum_{i,j}\widehat\pi_{ij}C_{ij}$ is the expected transport cost:
\begin{equation}
    \mathcal{L}_{total} = \mathcal{L}_{cls} + \beta \mathcal{L}_{OT}.
\label{eq:total_loss}
\end{equation}
The coefficient $\beta$ weights geometric compatibility in the training objective. Aggregate balance follows from the feasible transport marginals; downstream representations are determined by the complete forward and optimization paths.

At transductive evaluation, the selected prompt and classifier states are restored; transport and adapted-feature $k$-NN weights are recomputed before the frozen-encoder forward pass on the full graph.

\begin{algorithm}[!t]
\caption{Forward Pass and Optimization of MINT}
\label{alg:mint}
\LinesNumbered
\KwIn{Graph $\mathcal{G}=(\mathcal{V}, \mathcal{E}_{ori}, X)$ with native weights $w_{ori}$, pretrained encoder $F_{\theta}$, prompt bank size $M$, hyperparameters $\epsilon, \beta, k, \tau$.}
\KwOut{Adapted features $X_{\mathrm{adapted}}$, predictions $\hat{Y}$, optimized parameters $\{P,\gamma,\alpha,\phi\}$.}

\tcp{Initialization}
Initialize $P$ with Xavier uniform, $\alpha=0.01$, and $\gamma=0.5$; initialize classifier $h_\phi$\;
Construct directed $k$-NN indices $\mathcal{E}_{knn}(X)$ once from raw $X$ using cosine similarity, excluding self-neighbors\;
Freeze pretrained encoder parameters $\theta$\;

\For{each optimization epoch}{
    \tcp{1. Structural Context Injection (Eq.~\ref{eq:x_struct})}
    Compute $X_{struct} = (1 - \gamma) X + \gamma (\tilde{D}^{-1/2} \tilde{A} \tilde{D}^{-1/2} X)$\;
    
    \tcp{2. Fixed-Marginal OT Engine (Eq.~\ref{eq:ot_objective})}
    Row-normalize $X_{struct}$ and $P$; compute $C^{raw}$ and $C$ by Eq.~\ref{eq:implemented_cost}\;
    Set $K=\exp(-C/\epsilon)$, $\mu=\frac{1}{N}\mathbf{1}_N$, $\nu=\frac{1}{M}\mathbf{1}_M$, and $u=v=\mathbf{1}$\;
    \For{$iter = 1$ \KwTo $20$}{
        \tcp{$\oslash$ denotes element-wise division}
        $u = \mu \oslash (K v + 10^{-8})$\;
        $v = \nu \oslash (K^\top u + 10^{-8})$\;
    }
    Form the finite coupling $\widehat\Pi = \operatorname{diag}(u)K\operatorname{diag}(v)$\;
    
    \tcp{3. Prompt-Induced Feature Adaptation (Eq.~\ref{eq:feat_adapt})}
    Set $\widehat R=N\widehat\Pi$; compute $\Delta X_P=\widehat R P$ and $X_{\mathrm{adapted}}=X+\alpha\Delta X_P$\;

    \tcp{4. Dynamically Weighted Topology Augmentation}
    Compute $q(X_{\mathrm{adapted}})$ on fixed $\mathcal{E}_{knn}(X)$ by Eq.~\ref{eq:dynamic_knn_weight}\;
    Coalesce $(\mathcal{E}_{ori},(1-\tau)w_{ori})$ and $(\mathcal{E}_{knn}(X),\tau q)$, summing overlaps\;
    
    \tcp{5. Downstream Inference \& Joint Optimization}
    Get predictions $\hat{Y}=h_\phi(F_\theta(X_{\mathrm{adapted}},G'))$\;
    Compute $\mathcal{L}_{cls} = \text{CrossEntropy}(\hat{Y}, Y)$ over $\mathcal{V}_{tr}$\;
    Compute transport-cost term $\mathcal{L}_{OT} = \sum \widehat\pi_{ij} C_{ij}$\;
    $\mathcal{L}_{total} = \mathcal{L}_{cls} + \beta \mathcal{L}_{OT}$ \tcp*[r]{Eq.~\ref{eq:total_loss}}
    Update $\{P,\gamma,\alpha,\phi\}$ via backpropagation\;
}
\Return $X_{\mathrm{adapted}}$, $\hat{Y}$, $\{P,\gamma,\alpha,\phi\}$
\end{algorithm}

\subsection{Complexity Analysis}
\label{subsec:complexity}

MINT uses a node--prompt transport problem with $M\ll N$, separating its bipartite transport cost from graph construction.

\textbf{Time Complexity:} Native structural aggregation costs $\mathcal{O}(|\mathcal{E}_{ori}|d)$. Constructing the transport cost and kernel and applying the prompt-induced feature update costs $\mathcal{O}(NMd)$. The $L_{\mathrm{SK}}$ Sinkhorn scaling updates cost $\mathcal{O}(L_{\mathrm{SK}}NM)$, where $L_{\mathrm{SK}}=20$ denotes the number of Sinkhorn iterations, giving $\mathcal{O}(NMd+L_{\mathrm{SK}}NM)$ for the bipartite OT component. For each experimental run, exact cosine $k$-NN construction is performed once with $\Theta(N^2d)$ work, separate from the reported per-epoch training cost. Dynamic weights on the fixed $Nk$ indices add $\mathcal{O}(Nkd)$ work per forward pass.

\textbf{Space Complexity:} MINT freezes the pretrained encoder and optimizes $\{P,\gamma,\alpha,\phi\}$. The cost, kernel, and finite coupling require $\mathcal{O}(NM)$ primary transport storage, in addition to features/prompts, Sinkhorn scaling vectors, graph storage, and autograd intermediates. Batched exact cosine search can bound temporary score storage while retaining the quadratic work of exact search; minibatch transport is a possible extension for graphs whose full $N\times M$ coupling exceeds device memory.

\section{Experiments}
\label{sec:experiments}
\subsection{Experimental Setup}
\label{subsec:exp_setup}

We evaluate MINT on nine real-world node classification datasets. Cora, CiteSeer, and PubMed are standard citation networks~\cite{yang2016cora}; Cornell, Texas, Wisconsin, Chameleon, Squirrel, and Actor~\cite{pei2020cornell} broaden the evaluation to heterophilic regimes. To assess compatibility across pretrained backbones, we use three self-supervised GNN backbones: GraphMAE~\cite{hou2022graphmae}, GraphMAE2~\cite{hou2023graphmae2}, and MaskGAE~\cite{li2023maskgae}. We compare MINT against 13 baselines, including two standard tuning paradigms (Fine-tune and Linear Probe) and $11$ graph prompting baselines (GPF~\cite{fang2023gpf}, GPF+~\cite{fang2023gpf}, GPPT~\cite{sun2022gppt}, GraphPrompt~\cite{liu2023graphprompt}, EdgePrompt~\cite{fu2025edgeprompt}, EdgePrompt+~\cite{fu2025edgeprompt}, All-in-One~\cite{sun2023allinone}, HS-GPPT~\cite{luo2025hsgppt}, ProNoG~\cite{yu2025pronog}, UniPrompt~\cite{huang2025uniprompt}, and DAPrompt~\cite{jiang2026daprompt}).

\begin{table}[!t]
  \centering
  \caption{Statistics of Real-World Datasets}
  \label{tab:dataset_stats}
  \setlength{\tabcolsep}{5pt}
  \begin{tabular}{lrrrrr}
    \toprule
    Dataset & \#Nodes & \#Edges & \#Features & \#Classes & \#Homophily \\
    \midrule
    Cora      & 2,708  & 5,278   & 1,433 & 7 & 0.81 \\
    CiteSeer  & 3,327  & 4,552   & 3,703 & 6 & 0.74 \\
    PubMed    & 19,717 & 44,324  & 500   & 3 & 0.80 \\
    Cornell   & 183    & 298     & 1,703 & 5 & 0.31 \\
    Texas     & 183    & 325     & 1,703 & 5 & 0.11 \\
    Wisconsin & 251    & 515     & 1,703 & 5 & 0.20 \\
    Chameleon & 2,277  & 36,101  & 2,277 & 5 & 0.24 \\
    Squirrel  & 5,201  & 217,073 & 2,089 & 5 & 0.22 \\
    Actor     & 7,600  & 30,019  & 932   & 5 & 0.22 \\
    \bottomrule
  \end{tabular}
\end{table}

We report results over 30 trials; complete adaptation, selection, seed, perturbation, ablation, checkpoint, and hardware details are provided in Appendix F.

\begin{table*}[!t]
  \centering
  \caption{Mean accuracy $\pm$ standard deviation under 1-shot node classification. The best results across all methods are highlighted in \textbf{bold}, and the second-best results are \underline{underlined}.}
  \label{tab:1shot_main_results}
  \resizebox{\textwidth}{!}{
    \begin{tabular}{c l ccc cccccc}
      \toprule
      \multirow{2}{*}{\textbf{Pretrain}} & \multirow{2}{*}{\textbf{Methods}} & \multicolumn{3}{c}{\textbf{Homophilic Graphs}} & \multicolumn{6}{c}{\textbf{Heterophilic Graphs}} \\
      \cmidrule(lr){3-5} \cmidrule(lr){6-11}
      & & \textbf{Cora} & \textbf{CiteSeer} & \textbf{PubMed} & \textbf{Cornell} & \textbf{Texas} & \textbf{Wisconsin} & \textbf{Chameleon} & \textbf{Squirrel} & \textbf{Actor} \\
      \midrule

      \multirow{11}{*}{\textbf{GraphMAE}} 
      & Fine-tune    & 52.57$\pm$7.44 & 42.02$\pm$7.53 & \underline{57.63$\pm$6.73} & 32.30$\pm$10.98 & 33.57$\pm$17.20 & 36.99$\pm$10.54 & \underline{24.56$\pm$3.49} & 21.55$\pm$2.77 & 20.78$\pm$2.26 \\
      & Linear Probe & 50.85$\pm$6.55 & 45.48$\pm$8.47 & 56.63$\pm$7.18 & 28.12$\pm$8.18  & 33.20$\pm$19.79 & 31.46$\pm$11.45 & 23.76$\pm$3.34 & 20.28$\pm$0.85 & 19.79$\pm$2.90 \\
      & GPF          & 51.40$\pm$8.89 & 43.49$\pm$9.45 & 54.04$\pm$6.77 & 25.74$\pm$9.60  & 41.22$\pm$20.54 & 32.91$\pm$11.52 & 22.94$\pm$3.16 & 20.51$\pm$0.96 & 20.04$\pm$3.18 \\
      & GPF+         & 51.66$\pm$7.81 & 45.83$\pm$7.97 & 55.83$\pm$7.28 & 30.08$\pm$14.47 & 33.68$\pm$24.17 & 33.10$\pm$11.27 & 23.83$\pm$3.02 & 20.36$\pm$0.80 & 19.45$\pm$2.35 \\
      & GPPT         & 52.43$\pm$6.16 & 46.21$\pm$7.71 & 56.16$\pm$7.67 & 28.82$\pm$7.60  & 31.67$\pm$21.40 & 30.53$\pm$8.92  & 23.86$\pm$2.72 & 20.46$\pm$0.92 & 20.59$\pm$3.23 \\
      & GraphPrompt  & 51.84$\pm$8.36 & 44.28$\pm$8.83 & 56.38$\pm$6.99 & 31.46$\pm$11.56 & 44.71$\pm$20.93 & 36.08$\pm$12.75 & 22.90$\pm$3.33 & 20.48$\pm$0.70 & 19.26$\pm$3.17 \\
      & EdgePrompt   & 51.14$\pm$7.90 & 44.26$\pm$8.43 & 56.15$\pm$6.79 & 26.61$\pm$9.34  & 32.96$\pm$20.52 & 30.37$\pm$12.05 & 23.28$\pm$2.85 & 20.43$\pm$0.68 & 19.76$\pm$2.95 \\
      & EdgePrompt+  & 51.02$\pm$7.44 & 44.85$\pm$8.09 & 56.44$\pm$8.21 & 31.79$\pm$15.41 & 37.20$\pm$24.61 & 31.76$\pm$12.39 & 23.56$\pm$3.04 & 20.30$\pm$0.46 & 20.29$\pm$2.68 \\
      & All-in-One   & 34.33$\pm$6.60 & 22.13$\pm$3.91 & 41.49$\pm$5.74 & 33.08$\pm$17.41 & 36.88$\pm$22.55 & 31.57$\pm$16.25 & 22.50$\pm$2.96 & \textbf{23.42$\pm$3.19} & 20.31$\pm$2.22 \\
      & HS-GPPT      & \underline{52.72$\pm$8.72} & 37.65$\pm$7.50 & 52.79$\pm$8.39 & 30.17$\pm$10.59 & 36.93$\pm$21.21 & 34.03$\pm$9.46 & 23.20$\pm$2.50 & \underline{22.12$\pm$2.08} & 20.57$\pm$3.00 \\
      & ProNoG       & 44.95$\pm$7.65 & 36.37$\pm$7.73 & 53.78$\pm$9.79 & 28.60$\pm$9.98 & 33.17$\pm$19.12 & 31.21$\pm$15.03 & 24.18$\pm$3.24 & 20.26$\pm$1.36 & 20.42$\pm$3.73 \\
      & UniPrompt    & 47.75$\pm$9.27 & \underline{51.18$\pm$7.04} & 53.03$\pm$7.85 & \underline{49.33$\pm$12.69} & \underline{50.98$\pm$17.28} & \underline{61.76$\pm$13.00} & 23.55$\pm$3.21 & 20.72$\pm$1.60 & \underline{20.84$\pm$3.22} \\
      & DAPrompt     & 36.97$\pm$7.48 & 31.53$\pm$5.51 & 47.09$\pm$7.41 & 30.70$\pm$11.17 & 36.19$\pm$17.69 & 37.09$\pm$8.85 & 22.59$\pm$3.65 & 20.71$\pm$1.36 & 20.33$\pm$2.80 \\
      \cmidrule{2-11}
      & \cellcolor{gray!15}\textbf{MINT (Ours)} & \cellcolor{gray!15}\textbf{53.46$\pm$7.36} & \cellcolor{gray!15}\textbf{52.08$\pm$6.62} & \cellcolor{gray!15}\textbf{58.18$\pm$7.20} & \cellcolor{gray!15}\textbf{50.73$\pm$13.82} & \cellcolor{gray!15}\textbf{53.39$\pm$20.14} & \cellcolor{gray!15}\textbf{66.22$\pm$14.85} & \cellcolor{gray!15}\textbf{25.14$\pm$2.93} & \cellcolor{gray!15}21.83$\pm$2.00 & \cellcolor{gray!15}\textbf{22.73$\pm$3.26} \\
      \midrule

      \multirow{10}{*}{\textbf{GraphMAE2}} 
      & Fine-tune    & \underline{44.09$\pm$7.82} & 36.99$\pm$6.07 & \underline{53.38$\pm$8.31} & 32.89$\pm$10.75 & 36.53$\pm$17.43 & 36.51$\pm$9.74  & 24.85$\pm$3.67 & 21.54$\pm$2.10 & 20.67$\pm$1.86 \\
      & Linear Probe & 42.65$\pm$7.06 & 40.60$\pm$6.53 & 48.89$\pm$8.41 & 29.41$\pm$7.06  & 29.39$\pm$12.62 & 28.56$\pm$7.37  & \underline{25.34$\pm$2.84} & 21.05$\pm$1.58 & 19.52$\pm$2.15 \\
      & GPF          & 41.37$\pm$10.95 & 37.48$\pm$8.00 & 48.93$\pm$10.92 & 31.12$\pm$13.52 & 39.68$\pm$23.18 & 34.51$\pm$13.60 & 23.57$\pm$2.77 & 20.61$\pm$0.82 & 20.05$\pm$2.79 \\
      & GPF+         & 42.82$\pm$9.36 & 39.26$\pm$8.44 & 52.30$\pm$10.85 & 30.87$\pm$14.30 & 42.01$\pm$20.66 & 33.73$\pm$14.09 & 23.53$\pm$2.86 & 20.47$\pm$0.58 & 20.15$\pm$2.76 \\
      & GPPT         & 42.54$\pm$8.48 & 41.88$\pm$7.56 & 47.63$\pm$9.26 & 28.04$\pm$8.71  & 26.64$\pm$10.11 & 27.52$\pm$7.78  & 24.65$\pm$3.22 & 20.74$\pm$1.30 & 20.70$\pm$1.61 \\
      & GraphPrompt  & 42.12$\pm$8.53 & 38.48$\pm$6.83 & 50.16$\pm$7.40 & 28.24$\pm$7.42  & 29.42$\pm$12.06 & 28.25$\pm$7.04  & 24.86$\pm$2.92 & 21.01$\pm$1.59 & 19.70$\pm$2.65 \\
      & EdgePrompt   & 42.07$\pm$10.62 & 35.74$\pm$7.76 & 50.30$\pm$11.38 & 32.30$\pm$13.89 & 37.20$\pm$24.50 & 35.76$\pm$11.85 & 24.00$\pm$2.78 & 20.89$\pm$1.17 & 19.46$\pm$2.55 \\
      & EdgePrompt+  & 42.28$\pm$7.63 & 36.19$\pm$8.46 & 51.04$\pm$9.15 & 31.18$\pm$15.09 & 35.45$\pm$22.32 & 35.10$\pm$15.38 & 23.11$\pm$3.35 & 20.90$\pm$1.02 & 19.55$\pm$2.45 \\
      & All-in-One   & 35.40$\pm$7.48 & 35.99$\pm$7.95 & 45.74$\pm$6.38 & 31.46$\pm$13.27 & 36.67$\pm$19.96 & 30.86$\pm$15.08 & 23.30$\pm$2.86 &\textbf{22.17$\pm$2.48} & 20.82$\pm$2.81 \\
      & HS-GPPT      & 38.82$\pm$7.20 & 33.33$\pm$8.73 & 49.62$\pm$6.76 & 31.34$\pm$15.05 & \underline{42.54$\pm$22.05} & 30.55$\pm$16.02 & 22.02$\pm$2.35 & 21.35$\pm$1.47 & \underline{21.83$\pm$2.35} \\
      & ProNoG       & 43.83$\pm$7.76 & 35.39$\pm$7.62 & 51.83$\pm$9.53 & 30.34$\pm$15.61 & 36.14$\pm$20.84 & 38.22$\pm$13.65 & 25.20$\pm$3.16 & 21.06$\pm$1.50 & 20.35$\pm$3.48 \\
      & UniPrompt    & 42.01$\pm$9.24 & \underline{48.55$\pm$7.58} & 46.93$\pm$8.29 & \underline{48.71$\pm$15.62} & 39.23$\pm$18.73 & \underline{45.53$\pm$16.60} & 24.21$\pm$3.21 & 21.33$\pm$1.76 & 20.83$\pm$2.71 \\
      & DAPrompt     & 35.83$\pm$9.24 & 26.25$\pm$4.25 & 47.82$\pm$7.15 & 30.73$\pm$14.38 & 26.27$\pm$12.49 & 31.50$\pm$11.32 & 22.72$\pm$2.39 & 21.45$\pm$1.76 & 19.90$\pm$2.49 \\
      \cmidrule{2-11}
      & \cellcolor{gray!15}\textbf{MINT (Ours)} & \cellcolor{gray!15}\textbf{44.45$\pm$8.76} & \cellcolor{gray!15}\textbf{49.34$\pm$7.30} & \cellcolor{gray!15}\textbf{53.88$\pm$8.14} & \cellcolor{gray!15}\textbf{50.76$\pm$14.79} & \cellcolor{gray!15}\textbf{48.70$\pm$20.65} & \cellcolor{gray!15}\textbf{56.83$\pm$16.76} & \cellcolor{gray!15}\textbf{25.43$\pm$2.41} & \cellcolor{gray!15}\underline{22.04$\pm$1.76} & \cellcolor{gray!15}\textbf{22.34$\pm$2.54} \\
      \midrule
      
      \multirow{10}{*}{\textbf{MaskGAE}} 
      & Fine-tune     & 48.35$\pm$8.86 & 38.92$\pm$7.08 & 53.55$\pm$9.81 & 39.52$\pm$12.91 & 37.99$\pm$19.89 & 32.80$\pm$10.11 & 27.64$\pm$4.95 & 22.58$\pm$2.10 & 21.29$\pm$2.67 \\
      & Linear Probe  & 47.82$\pm$8.85 & 38.63$\pm$7.43 & 53.26$\pm$9.73 & 38.66$\pm$13.72 & 37.22$\pm$19.75 & 31.75$\pm$9.62 & 27.50$\pm$4.72 & 22.47$\pm$2.05 & 21.15$\pm$2.68 \\
      & GPF           & 34.48$\pm$6.80 & 28.83$\pm$5.94 & 48.59$\pm$8.25 & 39.66$\pm$11.88 & \underline{55.29$\pm$15.58} & 33.73$\pm$10.86 & 28.53$\pm$5.01 & \underline{25.99$\pm$4.36} & 21.25$\pm$2.74 \\
      & GPF+          & 40.09$\pm$8.19 & 33.53$\pm$8.09 & 52.18$\pm$8.11 & 41.26$\pm$14.38 & 49.05$\pm$18.34 & 37.50$\pm$12.29 & \underline{28.87$\pm$4.40} & 25.86$\pm$3.68 & 20.49$\pm$2.53 \\
      & GPPT          & 48.79$\pm$8.87 & 38.75$\pm$8.31 & 54.05$\pm$7.57 & 33.17$\pm$14.00 & 40.56$\pm$19.22 & 31.00$\pm$13.93 & 27.78$\pm$4.72 & 22.55$\pm$2.14 & 20.90$\pm$2.12 \\
      & GraphPrompt   & 48.38$\pm$9.24 & 39.28$\pm$7.64 & 54.39$\pm$9.49 & 40.59$\pm$12.89 & 45.32$\pm$20.01 & 32.26$\pm$9.35 & 28.39$\pm$4.39 & 23.00$\pm$2.23 & 20.86$\pm$2.82 \\
      & EdgePrompt    & 48.81$\pm$8.56 & 39.81$\pm$7.56 & 53.87$\pm$9.47 & 37.96$\pm$13.49 & 46.32$\pm$16.65 & 32.91$\pm$9.23 & 28.60$\pm$4.81 & 23.38$\pm$2.40 & 21.06$\pm$2.38 \\
      & EdgePrompt+   & 48.65$\pm$8.72 & 38.01$\pm$7.27 & 53.91$\pm$9.88 & 36.92$\pm$13.72 & 40.37$\pm$20.36 & 34.37$\pm$8.80 & 28.18$\pm$4.99 & 23.17$\pm$2.43 & 21.17$\pm$2.66 \\
      & All-in-One    & 49.91$\pm$7.51 & 37.04$\pm$6.91 & 51.35$\pm$10.74 & 45.60$\pm$12.66 & 48.68$\pm$21.37 & 37.20$\pm$12.39 & 28.08$\pm$4.29 & 22.89$\pm$2.52 & 20.39$\pm$2.52 \\
      & HS-GPPT       & \underline{49.94$\pm$8.14} & 36.56$\pm$7.92 & 51.46$\pm$10.78 & 46.33$\pm$12.04 & 50.95$\pm$21.82 & 34.78$\pm$13.38 & 27.95$\pm$4.14 & 22.83$\pm$2.53 & 20.68$\pm$2.95 \\
      & ProNoG        & 34.18$\pm$6.18 & 26.75$\pm$4.41 & 51.27$\pm$8.88 & 42.04$\pm$14.58 & 53.81$\pm$16.70 & 37.68$\pm$12.04 & 28.01$\pm$4.55 & 25.66$\pm$3.91 & 21.21$\pm$2.36 \\
      & UniPrompt     & 48.51$\pm$9.45 & \underline{48.84$\pm$6.83} & \underline{54.42$\pm$9.23} & \underline{50.08$\pm$15.16} & 44.68$\pm$14.47 & \underline{62.09$\pm$15.87} & 25.67$\pm$2.91 & 20.99$\pm$1.47 & \underline{21.52$\pm$2.43} \\
      & DAPrompt      & 49.46$\pm$8.42 & 37.57$\pm$8.09 & 53.54$\pm$9.72 & 38.94$\pm$17.17 & 43.12$\pm$17.15 & 32.09$\pm$12.34 & 27.36$\pm$4.98 & 23.13$\pm$2.56 & 20.27$\pm$2.24 \\
      \cmidrule{2-11}
      & \cellcolor{gray!15}\textbf{MINT (Ours)} & \cellcolor{gray!15}\textbf{51.78$\pm$9.97} & \cellcolor{gray!15}\textbf{49.81$\pm$7.11} & \cellcolor{gray!15}\textbf{55.11$\pm$8.48} & \cellcolor{gray!15}\textbf{55.10$\pm$6.97} & \cellcolor{gray!15}\textbf{55.85$\pm$16.08} & \cellcolor{gray!15}\textbf{66.81$\pm$15.28} & \cellcolor{gray!15}\textbf{33.65$\pm$3.71} & \cellcolor{gray!15}\textbf{27.26$\pm$3.19}  & \cellcolor{gray!15}\textbf{22.31$\pm$1.82} \\
      \bottomrule
    \end{tabular}%
  }
\end{table*}

\subsection{Main Results and Analysis}
\label{subsec:main_results}

Table~\ref{tab:1shot_main_results} summarizes the classification results. We organize the observations across the evaluated settings into three empirical trends.
First, in the evaluated 1-shot settings, several prompting baselines perform below linear probing, whereas MINT remains competitive across the reported datasets.
Second, on Cora, CiteSeer, and PubMed, MINT matches or modestly improves on strong baselines in many settings, although the performance margins over competing methods are generally modest. These comparisons are distinct from the end-to-end component ablations reported below.
Performance also varies with the pretrained backbone. On Cora, for example, the highest mean accuracy under GraphMAE2 is approximately $44\%$, compared with approximately $53\%$ under GraphMAE. Under GraphMAE2 on Wisconsin, MINT reaches $56.83\%$ while several baselines lie in the $20\%$--$30\%$ range.

Third, MINT obtains some of its largest margins on Texas, Wisconsin, and Cornell. The ablations identify a strong but graph-dependent role for the $k$-NN augmentation branch in several heterophilic settings. The complete 3-shot and 5-shot matrices show similar dataset- and backbone-dependent trends: MINT remains competitive across pretrained backbones and achieves comparatively high mean accuracy in several heterophilic settings, while rankings remain dataset- and backbone-dependent and some citation-network settings favor fine-tuning or other prompting methods. Complete results are reported in Appendix F-F and F-G.

\subsection{Cross-Domain Few-Shot Transfer}
\label{subsec:cross_domain}

To examine source-dependent transfer, we conduct 1-shot node classification from PubMed and Actor to Cora, CiteSeer, Texas, Cornell, and Wisconsin. Each source--method--target setting is evaluated over 30 predetermined target splits. We use source-specific pretrained checkpoints and deterministic, label-free feature alignment; full alignment, model-selection, evaluation, and method-specific adaptation details are provided in Appendix F-B.

\begin{table*}[!t]
  \centering
  \caption{Cross-domain 1-shot node classification accuracy (\%). }
  \label{tab:cross_domain_results}
  \renewcommand{\arraystretch}{0.95}
  \setlength{\tabcolsep}{5pt}
  \resizebox{\textwidth}{!}{%
  \begin{tabular}{llccccc}
      \toprule
      \textbf{Source} & \textbf{Method} 
      & \textbf{Cora} 
      & \textbf{CiteSeer} 
      & \textbf{Texas} 
      & \textbf{Cornell} 
      & \textbf{Wisconsin} \\
      \midrule

      \multirow{7}{*}{\textbf{PubMed}}
      & DGI Fine-tune
      & $30.56\pm6.20$
      & $24.94\pm5.15$
      & $41.27\pm20.72$
      & $41.46\pm14.04$
      & $37.97\pm12.59$ \\

      & DGI Linear
      & $19.45\pm3.89$
      & $21.29\pm3.37$
      & $33.62\pm19.47$
      & $34.79\pm14.02$
      & $31.76\pm14.46$ \\

      & GraphMAE Linear
      & $33.24\pm6.60$
      & $30.54\pm5.56$
      & $32.35\pm20.66$
      & $26.75\pm9.71$
      & $32.99\pm13.22$ \\

      & GPPT
      & $34.38\pm6.72$
      & $29.54\pm5.69$
      & $29.89\pm21.34$
      & $26.89\pm15.59$
      & $33.24\pm13.78$ \\

      & UniPrompt
      & $38.00\pm7.56$
      & $44.13\pm6.69$
      & $\mathbf{46.11\pm15.85}$
      & $50.64\pm15.44$
      & $55.08\pm17.35$ \\

      & DAPrompt
      & $19.77\pm3.21$
      & $19.33\pm2.61$
      & $31.75\pm16.50$
      & $26.95\pm9.09$
      & $29.52\pm9.28$ \\

      \rowcolor{gray!15}
      & \textbf{MINT (Ours)}
      & $\mathbf{40.10\pm7.46}$
      & $\mathbf{44.38\pm6.68}$
      & $45.77\pm17.12$
      & $\mathbf{51.57\pm14.86}$
      & $\mathbf{56.10\pm18.02}$ \\

      \midrule

      \multirow{7}{*}{\textbf{Actor}}
      & DGI Fine-tune
      & $27.73\pm7.63$
      & $24.07\pm3.56$
      & $38.73\pm21.46$
      & $38.24\pm14.02$
      & $39.45\pm12.83$ \\

      & DGI Linear
      & $18.36\pm2.61$
      & $18.37\pm2.58$
      & $37.75\pm22.42$
      & $29.50\pm11.98$
      & $36.19\pm13.66$ \\

      & GraphMAE Linear
      & $29.89\pm6.65$
      & $25.48\pm4.38$
      & $31.93\pm17.91$
      & $29.64\pm11.89$
      & $36.65\pm12.33$ \\

      & GPPT
      & $24.16\pm8.31$
      & $18.80\pm5.05$
      & $31.06\pm20.91$
      & $22.30\pm11.25$
      & $19.64\pm14.37$ \\

      & UniPrompt
      & $37.73\pm6.73$
      & $\mathbf{40.87\pm5.86}$
      & $\mathbf{48.65\pm20.28}$
      & $46.86\pm15.51$
      & $51.18\pm18.28$ \\

      & DAPrompt
      & $17.91\pm3.27$
      & $18.48\pm2.84$
      & $32.67\pm15.20$
      & $29.89\pm9.48$
      & $30.64\pm10.68$ \\

      \rowcolor{gray!15}
      & \textbf{MINT (Ours)}
      & $\mathbf{39.12\pm7.29}$
      & $39.37\pm5.52$
      & $47.04\pm20.57$
      & $\mathbf{47.25\pm15.14}$
      & $\mathbf{53.64\pm17.18}$ \\

      \bottomrule
  \end{tabular}%
  }
\end{table*}

\textbf{Analysis:} Table~\ref{tab:cross_domain_results} shows that performance varies with the pretraining source. The strongest method also varies by target: MINT has the highest mean in seven of the ten source--target columns, while UniPrompt has the highest mean in the other three. MINT therefore remains competitive across both sources but is not universally best. In particular, UniPrompt is slightly higher on Texas with PubMed as source and on CiteSeer and Texas with Actor as source. These source- and target-dependent results motivate reporting the complete transfer matrix.

\subsection{Mechanism Analysis of Prompt Allocation}
\label{subsec:mechanism_analysis}

We examine graph-wide allocation in prompt-induced updates and the frozen-encoder forward path. Unperturbed training runs show that local normalization can coexist with concentrated aggregate use, while $V_{\Delta X_P}$ measures the injected-update geometry more directly than allocation Gini alone.

The controlled prompt-attractiveness intervention raises the dominant prompt share under matched routing to approximately $0.997$ and reduces $V_{\Delta X_P}$ to near zero. Holding the learned state fixed, the corresponding frozen-forward accuracy changes are $-6.26$, $-30.48$, and $-21.95$ percentage points on Cora, Texas, and Wisconsin, respectively. Retraining substantially compensates on Cora and Wisconsin, while Texas retains an approximately $-5.17$-point change. In the same-topology control, matched-softmax and fixed-marginal Sinkhorn routing use the same frozen weighted propagation graph and matched model state. Fixed-marginal Sinkhorn reduces allocation Gini by $0.0406$--$0.2515$ across the four tested settings, whereas the paired accuracy differences are only $+0.05$ to $+0.14$ points and all paired 95\% confidence intervals include zero. These results separate the direct allocation-control effect from downstream prediction: fixed-marginal routing directly controls prompt utilization, whereas its immediate matched-state effect and accumulated training-time contribution are distinct quantities. Appendix F-C reports the complete protocol and measurements.

\subsection{Robustness Analysis under Extreme Perturbations}
\label{subsec:robustness}

We evaluate MINT under structural and feature perturbations in extreme few-shot settings. Figure~\ref{fig:robustness} shows accuracy across perturbation ratios $p$; Table~\ref{tab:robustness_summary} reports both absolute accuracy and relative change at $p=0.8$, providing complementary views of robustness.

\textbf{Robustness against Edge Drop:} 
Removing up to $80\%$ of edges substantially changes native connectivity. MINT maintains a stable trajectory across several evaluated settings; for example, it retains $78.57\%$ accuracy on Wisconsin 5-shot at $p=0.8$, a $1.8\%$ relative decrease. GPF and DAPrompt occasionally improve after edge removal on heterophilic graphs, showing that edge removal is not uniformly detrimental in these settings. The feature-derived $k$-NN augmentation branch supplies additional propagation paths alongside the perturbed native graph, while the transport branch operates in node--prompt space.

\textbf{Robustness against Noise Edge:} 
Noise Edge deliberately injects cross-class edges to create heterophilic structural perturbations. MINT has a $1.3\%$ relative decrease on Wisconsin 1-shot and a $13.3\%$ decrease on Texas 1-shot, whereas DAPrompt accuracy decreases by $43.0\%$ and $44.4\%$ in the corresponding settings.

\textbf{Sensitivity to Feature Mask:} 
Feature masking directly perturbs the feature space used to construct the Sinkhorn cost matrix. MINT shows substantial relative degradation in several settings as the masking ratio increases, while retaining competitive absolute accuracy in others. Methods with lower clean-graph accuracy can exhibit smaller relative decreases, which motivates reporting both measures.

\begin{figure*}[!t]
    \centering
    \includegraphics[width=0.8\textwidth]{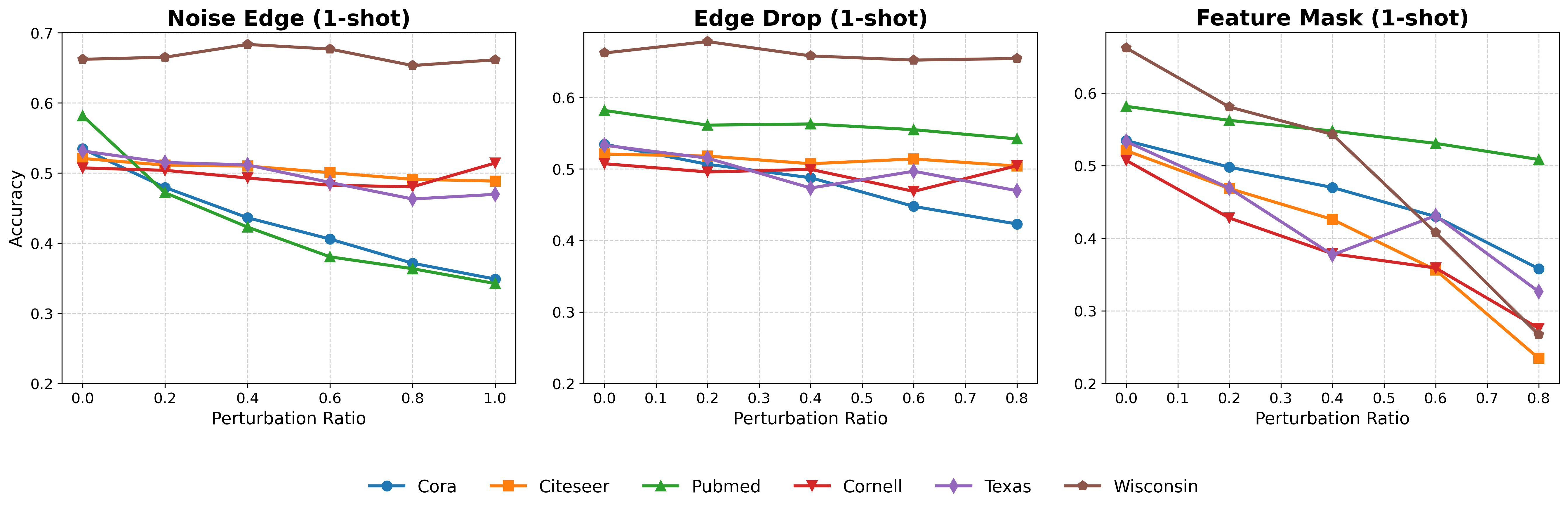} 
    \caption{Robustness analysis of MINT under extreme 1-shot settings on the GraphMAE backbone. The three panels show classification accuracy as the perturbation ratio varies for Noise Edge (left), Edge Drop (middle), and Feature Mask (right).}
    \label{fig:robustness}
\end{figure*}

\begin{table*}[!t]
  \centering
  \caption{Robustness evaluation on the GraphMAE backbone under structural and feature perturbations at ratio $0.8$. Subscripts denote relative change from the clean graph; $\downarrow$ indicates degradation and $\uparrow$ indicates improvement. Methods achieving both the highest absolute accuracy and the smallest relative degradation are highlighted in \textbf{bold}; otherwise, the highest absolute accuracy is \underline{underlined}.}
  \label{tab:robustness_summary}
  \resizebox{\textwidth}{!}{
    \begin{tabular}{c l cc cc cc}
      \toprule
      \multirow{2}{*}{\textbf{Perturbation}} & \multirow{2}{*}{\textbf{Methods}} & \multicolumn{2}{c}{\textbf{Cora} (Homo.)} & \multicolumn{2}{c}{\textbf{Texas} (Heter.)} & \multicolumn{2}{c}{\textbf{Wisconsin} (Heter.)} \\
      \cmidrule(lr){3-4} \cmidrule(lr){5-6} \cmidrule(lr){7-8}
      & & \textbf{1-shot} & \textbf{5-shot} & \textbf{1-shot} & \textbf{5-shot} & \textbf{1-shot} & \textbf{5-shot} \\
      \midrule
      \multicolumn{8}{c}{\textbf{Clean Graph (Ratio = 0.0)}} \\
      \midrule
      \multirow{3}{*}{None} 
      & GPF       & 51.40 & 69.52 & 41.22 & 50.52 & 32.91 & 38.44 \\
      & DAPrompt  & 36.97 & 50.77 & 36.19 & 43.25 & 37.09 & 42.00 \\
      & MINT (Ours) & 53.46 & 73.32 & 53.39 & 64.35 & 66.22 & 80.04 \\
      \midrule
      \multicolumn{8}{c}{\textbf{Extreme Perturbation (Ratio = 0.8)}} \\
      \midrule
      \multirow{3}{*}{\textbf{Noise Edge}}
      & GPF       & 27.37$_{\downarrow 46.8\%}$ & 41.79$_{\downarrow 39.9\%}$ & 30.40$_{\downarrow 26.2\%}$ & 44.55$_{\downarrow 11.8\%}$ & 26.43$_{\downarrow 19.7\%}$ & 36.80$_{\downarrow 4.3\%}$ \\
      & DAPrompt  & 21.08$_{\downarrow 43.0\%}$ & 38.77$_{\downarrow 23.6\%}$ & 20.13$_{\downarrow 44.4\%}$ & 26.35$_{\downarrow 39.1\%}$ & 20.94$_{\downarrow 43.5\%}$ & 30.56$_{\downarrow 27.2\%}$ \\
      & \cellcolor{gray!15}\textbf{MINT (Ours)} & \cellcolor{gray!15}\textbf{37.14}$_{\downarrow \textbf{30.5\%}}$ & \cellcolor{gray!15}\underline{49.08}$_{\downarrow 33.1\%}$ & \cellcolor{gray!15}\textbf{46.30}$_{\downarrow \textbf{13.3\%}}$ & \cellcolor{gray!15}\underline{56.72}$_{\downarrow 11.9\%}$ & \cellcolor{gray!15}\textbf{65.33}$_{\downarrow \textbf{1.3\%}}$ & \cellcolor{gray!15}\textbf{77.05}$_{\downarrow \textbf{3.7\%}}$ \\
      \midrule
      \multirow{3}{*}{\textbf{Feature Mask}}
      & GPF       & 32.15$_{\downarrow 37.5\%}$ & 53.09$_{\downarrow 23.6\%}$ & \textbf{35.71}$_{\downarrow \textbf{13.4\%}}$ & \textbf{45.83}$_{\downarrow \textbf{9.3\%}}$ & \textbf{26.84}$_{\downarrow \textbf{18.4\%}}$ & 32.56$_{\downarrow 15.3\%}$ \\
      & DAPrompt  & 17.93$_{\downarrow 51.5\%}$ & 35.52$_{\downarrow 30.0\%}$ & 26.01$_{\downarrow 28.1\%}$ & 36.12$_{\downarrow 16.5\%}$ & 25.33$_{\downarrow 31.7\%}$ & 28.57$_{\downarrow 32.0\%}$ \\
      & \cellcolor{gray!15}\textbf{MINT (Ours)} & \cellcolor{gray!15}\textbf{35.81}$_{\downarrow \textbf{33.0\%}}$ & \cellcolor{gray!15}\textbf{56.93}$_{\downarrow \textbf{22.4\%}}$ & \cellcolor{gray!15}32.65$_{\downarrow 38.8\%}$ & \cellcolor{gray!15}30.35$_{\downarrow 52.8\%}$ & \cellcolor{gray!15}26.72$_{\downarrow 59.6\%}$ & \cellcolor{gray!15}\underline{48.52}$_{\downarrow 39.4\%}$ \\
      \midrule
      \multirow{3}{*}{\textbf{Edge Drop}}
      & GPF       & 37.56$_{\downarrow 26.9\%}$ & 58.66$_{\downarrow 15.6\%}$ & 33.36$_{\downarrow 19.1\%}$ & 47.54$_{\downarrow 5.9\%}$ & 35.37$_{\uparrow 7.5\%}$ & 47.42$_{\uparrow 23.4\%}$ \\
      & DAPrompt  & 19.32$_{\downarrow 47.7\%}$ & 42.13$_{\downarrow 17.0\%}$ & 30.45$_{\downarrow 15.9\%}$ & 44.90$_{\uparrow 3.8\%}$ & 27.68$_{\downarrow 25.4\%}$ & 35.88$_{\downarrow 14.6\%}$ \\
      & \cellcolor{gray!15}\textbf{MINT (Ours)} & \cellcolor{gray!15}\textbf{42.30}$_{\downarrow \textbf{20.9\%}}$ & \cellcolor{gray!15}\underline{61.43}$_{\downarrow 16.2\%}$ & \cellcolor{gray!15}\textbf{46.96}$_{\downarrow \textbf{12.0\%}}$ & \cellcolor{gray!15}\underline{60.58}$_{\downarrow 5.9\%}$ & \cellcolor{gray!15}\underline{65.45}$_{\downarrow 1.2\%}$ & \cellcolor{gray!15}\underline{78.57}$_{\downarrow 1.8\%}$ \\
      \bottomrule
    \end{tabular}%
  }
\end{table*}

\subsection{Hyperparameter Sensitivity Analysis}
\label{subsec:hyperparameter_analysis}

The main text summarizes three representative hyperparameters: the prompt bank size $M$, the transport-cost coefficient $\beta$, and the $k$-NN neighborhood size $k$. Tables~\ref{tab:sens_M}, \ref{tab:sens_beta}, and \ref{tab:sens_k} cover representative graph regimes; Appendix F-L reports all four hyperparameters, including $\epsilon$.

\begin{table}[ht]
  \centering
  \caption{Sensitivity to Prompt Bank Size $M$ (GraphMAE, Texas)}
  \label{tab:sens_M}
  \resizebox{0.8\columnwidth}{!}{
    \begin{tabular}{l ccc}
      \toprule
      \textbf{Setting} & $\boldsymbol{M=1}$ & $\boldsymbol{M=10}$ & $\boldsymbol{M=50}$ \\
      \midrule
      1-Shot & 34.84\% & \textbf{51.90\%} & 44.29\% \\
      5-Shot & 55.39\% & \textbf{62.00\%} & 58.52\% \\
      \bottomrule
    \end{tabular}%
  }
\end{table}

\textbf{Prompt Bank Size ($\boldsymbol{M}$):} The prompt bank size $M$ controls the number of learnable prompt vectors available for adaptation. Table~\ref{tab:sens_M} shows non-monotonic sensitivity on Texas, which has 183 nodes and 1,703 features. In the 1-shot setting, accuracy is $34.84\%$ for $M=1$, rises by $17.06$ points to $51.90\%$ for $M=10$, and decreases to $44.29\%$ for $M=50$. The tested configuration therefore exhibits a non-monotonic dependence on prompt bank size and achieves its highest value at an intermediate $M$ in the tested settings.

\begin{table}[ht]
  \centering
  \caption{Sensitivity to Transport-Cost Coefficient $\beta$ (GraphMAE, 5-Shot)}
  \label{tab:sens_beta}
  \resizebox{0.8\columnwidth}{!}{
    \begin{tabular}{l ccc}
      \toprule
      \textbf{Dataset} & $\boldsymbol{\beta=0.001}$ & $\boldsymbol{\beta=0.01}$ & $\boldsymbol{\beta=0.1}$ \\
      \midrule
      Cornell (Heter.) & 55.71\% & \textbf{63.71\%} & 52.10\% \\
      Texas (Heter.)   & 50.70\% & \textbf{67.83\%} & 58.17\% \\
      \bottomrule
    \end{tabular}%
  }
\end{table}

\textbf{Transport-Cost Coefficient ($\boldsymbol{\beta}$):} The coefficient $\beta$ balances task supervision ($\mathcal{L}_{cls}$) and the transport cost ($\mathcal{L}_{OT}$). Among the tested values, $\beta=0.01$ gives the highest mean 5-shot accuracy on both datasets. Relative to $\beta=0.001$, it increases accuracy by $17.13$ points on Texas and $8.0$ points on Cornell; increasing the coefficient to $0.1$ reduces accuracy on both. These results indicate that the transport-cost coefficient requires calibration in the evaluated few-shot settings.

\begin{table}[ht]
  \centering
  \caption{Sensitivity to $k$-NN Neighborhood Size $k$ (GraphMAE)}
  \label{tab:sens_k}
  \resizebox{0.9\columnwidth}{!}{
    \begin{tabular}{l cccc}
      \toprule
      \textbf{Setting} & $\boldsymbol{k=10}$ & $\boldsymbol{k=50}$ & $\boldsymbol{k=100}$ & $\boldsymbol{k=200}$ \\
      \midrule
      Cora (1-Shot)      & \textbf{50.32\%} & 49.55\% & 44.66\% & 40.49\% \\
      Wisconsin (5-Shot) & 54.74\% & 74.16\% & \textbf{78.50\%} & 69.77\% \\
      \bottomrule
    \end{tabular}%
  }
\end{table}

\textbf{$k$-NN Neighborhood Size ($\boldsymbol{k}$):} The parameter $k$ determines the size of the discrete $k$-NN neighborhood used by the topology augmentation component. Table~\ref{tab:sens_k} shows different selected values across the two evaluated graph regimes. On Cora ($\text{Homophily}=0.81$), the highest mean occurs at $k=10$, and accuracy decreases by $9.83$ points at $k=200$. On Wisconsin ($\text{Homophily}=0.20$), accuracy rises from $54.74\%$ at $k=10$ to $78.50\%$ at $k=100$, then decreases to $69.77\%$ at $k=200$. The selected $k$ is therefore graph dependent in the reported settings.

\subsection{Ablation Study on Core Components}
\label{subsec:ablation}

We evaluate five ablation variants across shot settings. \textbf{\textit{w/o SA}} removes GCN-style structural context pre-aggregation; \textbf{\textit{w/o OT}} replaces Sinkhorn routing with row-wise softmax without a prompt-side marginal constraint; \textbf{\textit{w/o FA}} disables the scaled prompt-induced feature update; \textbf{\textit{w/o kNN}} uses only native edges; and \textbf{\textit{w/o Reg}} sets $\beta=0$, removing the transport-cost regularizer while retaining the routing formulation.

\begin{figure}[!t]
    \centering
    \includegraphics[width=0.8\columnwidth]{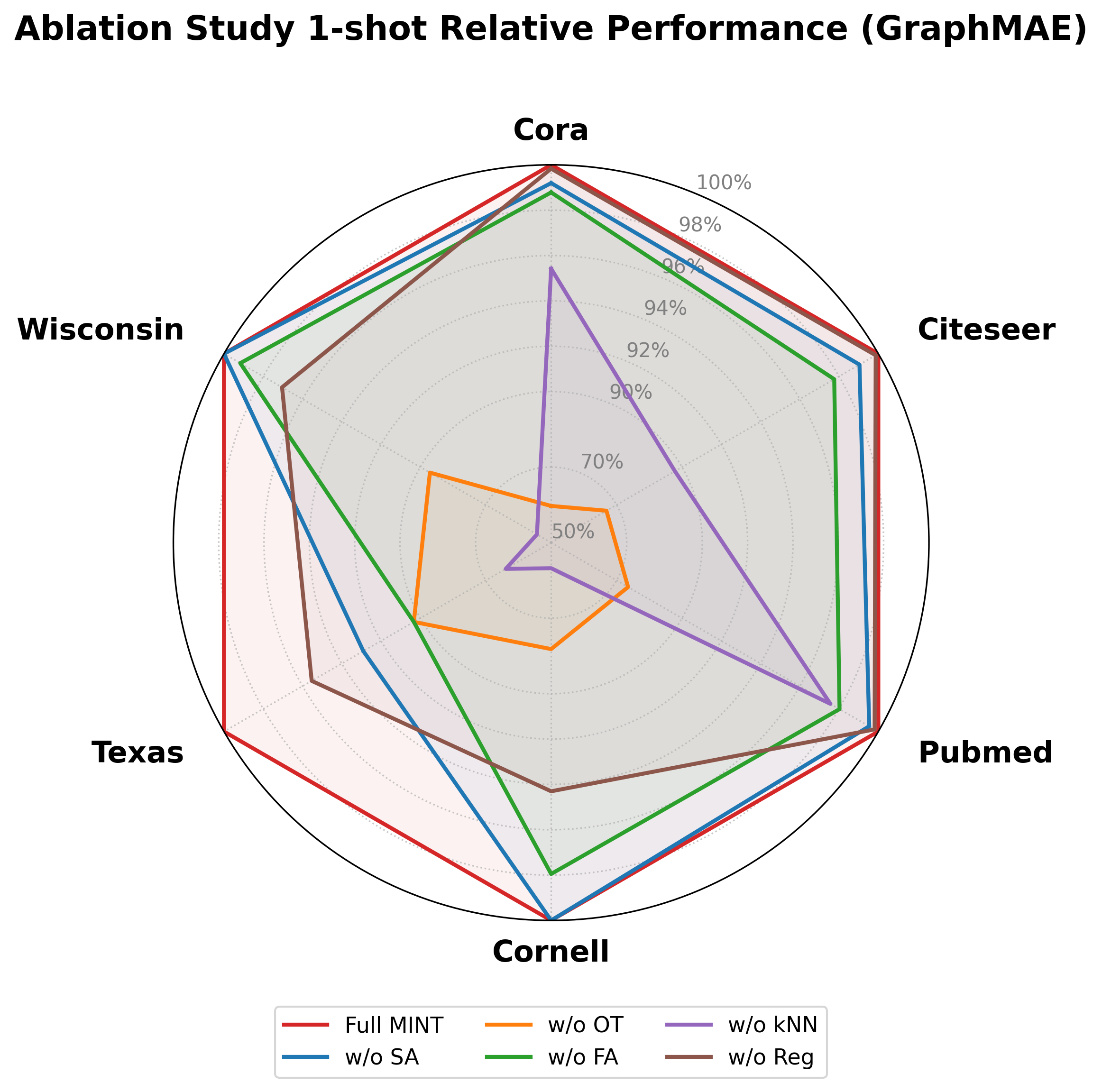} 
    \caption{Relative performance retention of MINT variants under 1-shot classification (GraphMAE).}
    \label{fig:ablation}
\end{figure}

The radar chart in Fig.~\ref{fig:ablation} summarizes representative performance retention across homophilic and heterophilic datasets. The complete ablation matrix across GraphMAE, GraphMAE2, and MaskGAE under 1-, 3-, and 5-shot settings is reported in Appendix F-J.

\subsubsection{Effect of OT Routing and Transport-Cost Regularization}

\begin{table}[ht]
  \centering
  \caption{Ablation of the OT Engine and Transport-Cost Regularizer on GraphMAE. Subscripts denote absolute performance degradation.}
  \label{tab:abl_ot}
  \resizebox{0.95\columnwidth}{!}{
    \begin{tabular}{l | cc | cc}
      \toprule
      \multirow{2}{*}{\textbf{Model}} & \multicolumn{2}{c|}{\textbf{Cora} (Homophily: 0.81)} & \multicolumn{2}{c}{\textbf{Texas} (Homophily: 0.11)} \\
      \cmidrule(lr){2-3} \cmidrule(lr){4-5}
      & 1-shot & 5-shot & 1-shot & 5-shot \\
      \midrule
      \textbf{Full MINT} & 53.46\% & 73.32\% & 53.39\% & 64.35\% \\
      \midrule
      \textit{w/o OT}    & 31.92\%$_{\downarrow \mathbf{21.54\%}}$ & 54.79\%$_{\downarrow 18.53\%}$ & 48.10\%$_{\downarrow 5.29\%}$  & 56.12\%$_{\downarrow 8.23\%}$ \\
      \textit{w/o Reg}   & 53.37\%$_{\downarrow 0.09\%}$ & 73.09\%$_{\downarrow 0.23\%}$ & 50.87\%$_{\downarrow 2.52\%}$ & 57.01\%$_{\downarrow \mathbf{7.34\%}}$ \\
      \bottomrule
    \end{tabular}%
  }
\end{table}

Representative ablations show regime-dependent roles. Under GraphMAE 1-shot, replacing OT with row-wise softmax reduces accuracy from $53.46\%$ to $31.92\%$ on Cora, from $52.08\%$ to $34.84\%$ on CiteSeer, and from $58.18\%$ to $42.77\%$ on PubMed, corresponding to decreases of $21.54$, $17.24$, and $15.41$ percentage points. This end-to-end ablation captures the accumulated effect of fixed-marginal routing within the full adaptation pipeline. Table~\ref{tab:abl_ot} additionally shows that removing the transport-cost regularizer has little effect on Cora but lowers 5-shot Texas accuracy by $7.34$ percentage points.

\subsubsection{The Graph-Dependent Role of Topology Augmentation}

\begin{table}[ht]
  \centering
  \caption{Ablation of the $k$-NN Augmentation Branch (\textit{w/o kNN}) on GraphMAE. }
  \label{tab:abl_knn}
  \resizebox{0.95\columnwidth}{!}{
    \begin{tabular}{l | cc | cc}
      \toprule
      \multirow{2}{*}{\textbf{Model}} & \multicolumn{2}{c|}{\textbf{Cora} (Homophily: 0.81)} & \multicolumn{2}{c}{\textbf{Wisconsin} (Homophily: 0.20)} \\
      \cmidrule(lr){2-3} \cmidrule(lr){4-5}
      & 1-shot & 5-shot & 1-shot & 5-shot \\
      \midrule
      \textbf{Full MINT} & 53.46\% & 73.32\% & 66.22\% & 80.04\% \\
      \midrule
      \textit{w/o kNN}   & 51.01\%$_{\downarrow 2.45\%}$ & \textbf{73.52\%}$_{\uparrow 0.20\%}$ & 35.99\%$_{\downarrow \mathbf{30.23\%}}$ & 40.19\%$_{\downarrow \mathbf{39.85\%}}$ \\
      \bottomrule
    \end{tabular}%
  }
\end{table}

Under GraphMAE 1-shot, removing the $k$-NN augmentation branch lowers accuracy by $21.93$, $19.37$, and $30.23$ points on Cornell, Texas, and Wisconsin, respectively. This effect is not universal: Table~\ref{tab:abl_knn} shows little change on Cora and a $0.20$-point improvement in the 5-shot setting. Thus, the $k$-NN augmentation branch has a distinct, graph-dependent role.

Together, the ablations indicate complementary routing and topology effects, with relative importance varying by backbone, dataset, and shot setting. The complete matrix is reported in Appendix F-J.

\subsection{Large-Scale Validation}
\label{subsec:large_scale_validation}

On OGBN-Arxiv (169,343 nodes and 1,166,243 raw directed edges), a 30-seed paired evaluation shows that measure-constrained routing remains feasible on a substantially larger graph. In this topology-compatible MINT evaluation, fixed-marginal Sinkhorn keeps allocation Gini below $10^{-7}$ and the maximum prompt shares near $0.100$ for $M=10$, while its accuracy differs from matched softmax by at most $0.12$ percentage points on the tested original and augmented graph variants. The largest mean peak reserved memory is $11.066$ GiB on a 24-GiB RTX 4090, with $0.201$--$0.234$s per epoch. Static topology augmentation lowers accuracy by approximately $1.5$ percentage points, providing a graph regime in which topology augmentation is not beneficial. Complete accuracy, resource, and preprocessing measurements are reported in Appendix F-K.

\section{Conclusion}
\label{sec:conclusion}

We studied graph-wide utilization of a finite shared prompt bank as a design dimension of graph prompting. MINT couples fixed-marginal node--prompt allocation with local compatibility and a separate $k$-NN augmentation branch. The variance decomposition characterizes the prompt-induced feature update, and the conditional forward analysis bounds the corresponding frozen-encoder deviation under single-prompt concentration. Experiments show competitive few-shot adaptation, measurable end-to-end effects of fixed-marginal routing in citation-network settings, and complementary, graph-dependent effects of topology augmentation. These results motivate considering graph-wide prompt utilization alongside local relevance as prompt banks and instance-conditioned routing become richer.


\bibliographystyle{unsrt}
\bibliography{references}

\clearpage
\appendices

\section{Local Routing and Graph-Wide Prompt Allocation}
\label{app:theoretical_diagnosis}

The main text analyzes graph-wide utilization of a shared prompt bank. For a row-normalized node-to-prompt routing matrix $R$, local normalization ensures $R\mathbf{1}_M=\mathbf{1}_N$, while the aggregate assignment masses $\frac{1}{N}\mathbf{1}_N^\top R$ remain free unless column constraints are imposed. Continuous attention can therefore be locally valid and globally concentrated. Hard routing raises a separate optimization consideration: when an argmax or threshold lies on the trainable path, its selected index is locally constant away from decision boundaries. Representation and accuracy effects are evaluated through the complete frozen graph neural network (GNN) forward path.

\subsection{Continuous Local Routing}
\label{app:soft_routing_proof}
For a softmax mixture $w_i=\operatorname{softmax}(s_i)$, the Jacobian is
\begin{equation}
    \frac{\partial w_{ij}}{\partial s_{ik}} = w_{ij} (\delta_{jk} - w_{ik}),
\end{equation}
so saturated local mixtures can have small gradients. More importantly for this work, row-wise softmax leaves $\sum_i w_{ij}$ unconstrained. Concentration is therefore possible; the extent of concentration and its downstream effects are determined empirically. The applicable forward-level consequence under observed concentration is the prompt-update bound proved in Appendix~\ref{app:proof_measure_conservation}.

\subsection{Discrete Decisions}
\label{app:hard_routing_proof}
Argmax and threshold operators are non-differentiable at their decision boundaries and locally constant elsewhere. This can obstruct direct gradient-based training when no surrogate or alternate pathway is used. Implementations may instead use detached graph construction, straight-through estimators, or other trainable paths, so the relevant conclusion is specific to the complete optimization pathway.

\subsection{Metric, Global, and Node-Specific Designs}
\label{app:metric_routing_proof}
Nearest-neighbor graph construction, task-token logits, a single global additive vector, and node-specific prompt generation are different computational objects. Their graph-wide use can be measured according to their respective forward paths. Adding a common vector leaves Euclidean pairwise differences unchanged, although it can change directions, nonlinear activations, and subsequent GNN outputs. For Measure-INtegrity Transport (MINT), when multiple shared prompts define the prompt-induced feature update $\Delta X_P=RP$, the prescribed prompt-side marginal directly specifies aggregate assignment mass across the shared prompt bank.

\section{Interpretation of Feature Dirichlet Energy}
\label{app:dirichlet_energy_regimes}

For a representation $Q\in\mathbb{R}^{N\times d}$ and the normalized Laplacian $L$ of the graph on which it is evaluated,
\begin{equation}
    E_D(Q) = \frac{1}{N} \operatorname{tr}(Q^\top LQ).
\end{equation}
Writing $L=U\Lambda U^\top$ gives the exact spectral decomposition
\begin{equation}
    E_D(Q)=\frac{1}{N}\sum_{k=1}^{N}\lambda_k\|u_k^\top Q\|_2^2.
    \label{eq:energy_variance_relation}
\end{equation}
Energy therefore depends jointly on graph frequencies, feature scale, and layer geometry. Numerical values such as $10^{-5}$, $10^{-1}$, or $10^1$ describe runs under a fixed evaluation protocol rather than architecture-independent thresholds. A routing Gini determines $E_D$ only with additional assumptions on $P$, $X$, $G'$, and $F_\theta$. Accordingly, the main text uses the variance of $\Delta X_P=RP$ for the allocation mechanism and treats final-layer energy and accuracy as downstream measurements.

The controlled prompt-attractiveness interventions make this separation explicit. Severe concentration drives prompt-update variance nearly to zero, whereas the frozen-state accuracy effect differs substantially across Cora, Texas, and Wisconsin; retraining can further compensate. In unperturbed training runs, the association between Gini and prompt-update variance varies as the prompt matrix and routing co-adapt. Accordingly, we retain $E_D$ as a within-protocol diagnostic while locating the exact allocation statement at the prompt-update level.

\section{Prompt Allocation and Forward Stability}
\label{app:proof_measure_conservation}

\subsection{Normalized Prompt Allocation}
Let $w_{ij}\ge0$ be generic node-to-prompt assignment weights with $\sum_{i,j}w_{ij}>0$. Define the raw prompt masses and their normalized profile by
\begin{equation}
 a_j:=\sum_{i=1}^Nw_{ij},
 \qquad
 \rho_j=\frac{a_j}{\sum_{\ell=1}^M a_\ell},
 \qquad \sum_j\rho_j=1.
\end{equation}
If $w_{ij}=R_{ij}$ for row-stochastic $R$, then the total mass is $N$ and
\begin{equation}
 \rho_j=\frac{1}{N}\sum_iR_{ij}.
\end{equation}
For MINT, $R=N\Pi^\star$ and $\Pi^\star\in\mathcal{U}(\mu,\nu)$, so
\begin{equation}
 \rho_j=\frac{1}{N}\sum_iR_{ij}
 =\sum_i\pi^\star_{ij}=\nu_j.
 \label{eq:app_normalized_capacity}
\end{equation}
Gini is invariant to common positive scaling, so $R$ and $\Pi^\star$ induce the same normalized profile. For $S\subseteq\{1,\ldots,M\}$, let $\rho(S)=\sum_{j\in S}\rho_j$. Thus MINT gives $\rho(S)=\sum_{j\in S}\nu_j$, and uniform $\nu$ gives $\rho(S)=|S|/M$. This formulation-level identity specifies graph-wide prompt utilization; the implemented 20-step Sinkhorn solver realizes the prescribed marginals to numerical tolerance.

\subsection{Variance Decomposition}
Let $R\ge0$ satisfy $R\mathbf{1}_M=\mathbf{1}_N$, and let
$\Delta x_{P,i}=\sum_jR_{ij}p_j$. The mean prompt-induced feature update is
\begin{equation}
\begin{aligned}
 \overline{\Delta x}_P
 &=\frac{1}{N}\sum_i\Delta x_{P,i}\\
 &=\sum_j\left(\frac{1}{N}\sum_iR_{ij}\right)p_j
 =\sum_j\rho_jp_j.
\end{aligned}
 \label{eq:app_mean_identity}
\end{equation}
Define
\begin{equation}
 V_P(\rho):=\sum_j\rho_j
 \|p_j-\overline{\Delta x}_P\|_2^2.
\end{equation}

\begin{theoremfinal}[Variance Decomposition]
\begin{equation}
 V_P(\rho)=V_{\Delta X_P}
 +\frac{1}{N}\sum_i\sum_jR_{ij}
 \|p_j-\Delta x_{P,i}\|_2^2.
 \label{eq:app_variance_decomposition}
\end{equation}
Consequently, $V_{\Delta X_P}\le V_P(\rho)$.
\end{theoremfinal}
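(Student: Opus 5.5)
The proof is a law-of-total-variance argument applied to a two-stage sampling model. Draw a node $i$ uniformly from $\{1,\ldots,N\}$, then draw a prompt index $j$ with conditional probability $R_{ij}$. This is legitimate because $R\ge0$ and $R\mathbf{1}_M=\mathbf{1}_N$. By the definition of $\rho$, the marginal law of $j$ is exactly $\rho_j=\frac{1}{N}\sum_iR_{ij}$. Conditionally on $i$, the mean of $p_j$ is $\Delta x_{P,i}=\sum_jR_{ij}p_j$. By the mean identity~\eqref{eq:app_mean_identity}, the unconditional mean is $\overline{\Delta x}_P$. The three terms in~\eqref{eq:app_variance_decomposition} then read as follows: total variance $V_P(\rho)$, variance of the conditional mean $V_{\Delta X_P}$, and expected conditional variance, the barycentric dispersion term.

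To keep the argument self-contained and purely algebraic, I will verify the identity directly rather than cite the probabilistic law. First, rewrite
\[
V_P(\rho)=\frac{1}{N}\sum_i\sum_jR_{ij}\|p_j-\overline{\Delta x}_P\|_2^2,
\]
using $\rho_j=\frac{1}{N}\sum_iR_{ij}$. Second, insert $\Delta x_{P,i}$ inside each term, writing $p_j-\overline{\Delta x}_P=(p_j-\Delta x_{P,i})+(\Delta x_{P,i}-\overline{\Delta x}_P)$, and expand the squared norm into three pieces.

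\begin{itemize}
\item The first squared piece gives $\frac{1}{N}\sum_{i,j}R_{ij}\|p_j-\Delta x_{P,i}\|_2^2$, which is the dispersion term.
\item The second squared piece does not depend on $j$. Summing over $j$ with $\sum_jR_{ij}=1$ gives $\frac{1}{N}\sum_i\|\Delta x_{P,i}-\overline{\Delta x}_P\|_2^2=V_{\Delta X_P}$, by~\eqref{eq:prompt_update_variance}.
\item The cross term is $\frac{2}{N}\sum_i\big\langle\sum_jR_{ij}(p_j-\Delta x_{P,i}),\,\Delta x_{P,i}-\overline{\Delta x}_P\big\rangle$. The inner sum equals $\Delta x_{P,i}-\Delta x_{P,i}=0$, again by row stochasticity, so the cross term vanishes.
\end{itemize}

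The inequality $V_{\Delta X_P}\le V_P(\rho)$ follows at once, since the dispersion term is a sum of nonnegative weights $R_{ij}\ge0$ times squared norms.

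No step is a genuine obstacle. The only point needing care is the first rewrite, where $V_P(\rho)$ is expressed as a double sum over $(i,j)$ with weights $R_{ij}/N$. That step requires the centering point in $V_P(\rho)$ to be exactly $\overline{\Delta x}_P=\sum_j\rho_jp_j$, which is supplied by~\eqref{eq:app_mean_identity}. Row stochasticity, $\sum_jR_{ij}=1$, must be invoked in two separate places: once to collapse the second squared piece and once to kill the cross term.
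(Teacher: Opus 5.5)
Your proposal is correct and is essentially the paper's proof: insert $\Delta x_{P,i}$ node by node, expand the square, kill the cross term using $\sum_j R_{ij}(p_j-\Delta x_{P,i})=0$, and average over $i$. The law-of-total-variance reading is a useful gloss that the paper leaves implicit. One small correction to your closing remark: rewriting $V_P(\rho)$ as $\frac{1}{N}\sum_{i,j}R_{ij}\|p_j-c\|_2^2$ uses only $\rho_j=\frac{1}{N}\sum_iR_{ij}$ and holds for any centering point $c$, so the decomposition does not need the mean identity; that identity only matters for reading $V_P(\rho)$ as a variance about the $\rho$-weighted mean, as in the subsequent corollary.
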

\begin{proof}
For a fixed node $i$, write
$p_j-\overline{\Delta x}_P=(p_j-\Delta x_{P,i})+(\Delta x_{P,i}-\overline{\Delta x}_P)$. Expanding the square gives
\begin{equation}
\begin{aligned}
 &\sum_jR_{ij}\|p_j-\overline{\Delta x}_P\|_2^2\\
 &=\sum_jR_{ij}\|p_j-\Delta x_{P,i}\|_2^2
 +\|\Delta x_{P,i}-\overline{\Delta x}_P\|_2^2,
\end{aligned}
\end{equation}
because the cross term vanishes exactly:
\begin{equation}
 \sum_jR_{ij}(p_j-\Delta x_{P,i})
 =\sum_jR_{ij}p_j-\Delta x_{P,i}\sum_jR_{ij}=0.
\end{equation}
Averaging over $i$, the left-hand side becomes $V_P(\rho)$ by~\eqref{eq:app_mean_identity}, the second term becomes $V_{\Delta X_P}$, and the first term is the stated nonnegative within-node mixing dispersion. This proves the identity and inequality.
\end{proof}

\subsection{Subset and Single-Prompt Bounds}
Let $S\subseteq\{1,\ldots,M\}$ be nonempty, let $P_S=\{p_j:j\in S\}$, and define
$\operatorname{diam}(P_S)=\max_{j,k\in S}\|p_j-p_k\|_2$ and
$\operatorname{diam}(P)=\max_{j,k}\|p_j-p_k\|_2$.

\begin{corollaryfinal}[Subset Concentration]
\begin{equation}
 V_{\Delta X_P}
 \le\rho(S)\operatorname{diam}(P_S)^2
 +[1-\rho(S)]\operatorname{diam}(P)^2.
 \label{eq:app_subset_variance_bound}
\end{equation}
\end{corollaryfinal}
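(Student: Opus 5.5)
The plan is to avoid the barycentric mean entirely and compare against a fixed anchor. Fix any reference point $c\in\mathbb{R}^d$. Since $V_{\Delta X_P}$ is a node-centered second moment, and the mean minimizes squared deviation, we have
\begin{equation*}
V_{\Delta X_P}\le\frac{1}{N}\sum_i\|\Delta x_{P,i}-c\|_2^2 .
\end{equation*}
Row stochasticity gives $\Delta x_{P,i}-c=\sum_jR_{ij}(p_j-c)$. Jensen's inequality (convexity of the squared norm) then bounds each term by $\sum_jR_{ij}\|p_j-c\|_2^2$. Averaging over $i$ and using $\rho_j=\frac1N\sum_iR_{ij}$ yields
\begin{equation*}
V_{\Delta X_P}\le\sum_j\rho_j\|p_j-c\|_2^2 .
\end{equation*}
This is the same mechanism as the common-prompt proximity bound~\eqref{eq:common_prompt_proximity_bound}. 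Alternatively, one can invoke the Variance Decomposition theorem: it gives $V_{\Delta X_P}\le V_P(\rho)$, and $V_P(\rho)\le\sum_j\rho_j\|p_j-c\|_2^2$ because $\overline{\Delta x}_P=\sum_j\rho_jp_j$ is the $\rho$-weighted mean of the prompts.

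Next, choose the anchor $c=p_{k}$ for an arbitrary fixed $k\in S$, which is possible because $S$ is nonempty. Split the sum over $j$ into $j\in S$ and $j\notin S$:
\begin{itemize}
\item For $j\in S$, both $p_j$ and $p_k$ lie in $P_S$, so $\|p_j-p_k\|_2^2\le\operatorname{diam}(P_S)^2$.
\item For $j\notin S$, the trivial bound $\|p_j-p_k\|_2^2\le\operatorname{diam}(P)^2$ applies.
\end{itemize}
Summing the weights gives $\sum_{j\in S}\rho_j=\rho(S)$ and $\sum_{j\notin S}\rho_j=1-\rho(S)$, since $\sum_j\rho_j=1$. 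This produces exactly~\eqref{eq:app_subset_variance_bound}.

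There is no real obstacle. The only point needing care is that the anchor must be a single fixed point shared by all nodes; it cannot be the node-dependent or barycentric mean. Only a fixed anchor lets the in-$S$ contributions be controlled by $\operatorname{diam}(P_S)$ rather than by $\operatorname{diam}(P)$. Choosing an actual prompt $p_k\in P_S$ as the anchor, rather than, say, the centroid of $P_S$, keeps the in-$S$ constant at $\operatorname{diam}(P_S)^2$ with no extra factor. The single-prompt case~\eqref{eq:single_prompt_variance_bound} follows by taking $S=\{j^\star\}$.
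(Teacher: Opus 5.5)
Your proof is correct and is essentially the paper's argument. The paper goes through $V_{\Delta X_P}\le V_P(\rho)$ and then uses that $\overline{\Delta x}_P=\sum_j\rho_jp_j$ minimizes the $\rho$-weighted squared deviation, which is your stated alternative; your primary route reaches the same intermediate bound $\sum_j\rho_j\|p_j-p_k\|_2^2$ by node-side mean minimization plus Jensen, as in the common-prompt proximity bound, and the split over $S$ and its complement is then identical. Your closing remark about the centroid is slightly off: the centroid of $P_S$ would work equally well with no extra factor, since by convexity its distance to any $p_j$ is at most $\max_{k\in S}\|p_j-p_k\|_2$.
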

\begin{proof}
The theorem gives $V_{\Delta X_P}\le V_P(\rho)$. Since $\overline{\Delta x}_P=\sum_j\rho_jp_j$ minimizes weighted squared deviation, for any $p_s\in P_S$,
\begin{equation}
\begin{aligned}
 V_P(\rho)
 &\le\sum_j\rho_j\|p_j-p_s\|_2^2\\
 &\le\rho(S)\operatorname{diam}(P_S)^2
 +[1-\rho(S)]\operatorname{diam}(P)^2.
\end{aligned}
\end{equation}
\end{proof}

For $S=\{j^\star\}$, $\operatorname{diam}(P_S)=0$, and therefore
\begin{equation}
 V_{\Delta X_P}\le(1-\rho_{j^\star})\operatorname{diam}(P)^2.
\end{equation}
Under MINT, $\rho(S)=\sum_{j\in S}\nu_j$, and uniform $\nu$ gives $\rho(S)=|S|/M$.

\subsection{Additive Cost Potentials}
\begin{remarkfinal}[Additive Cost Potentials]
For $C'_{ij}=C_{ij}+\xi_i+\zeta_j$ and any $\Pi\in\mathcal{U}(\mu,\nu)$,
\begin{equation}
 \langle\Pi,C'\rangle
 =\langle\Pi,C\rangle+\sum_i\mu_i\xi_i+\sum_j\nu_j\zeta_j.
\end{equation}
The added term is coupling-independent. Thus additive row and column potentials leave the fixed-marginal optimal transport (OT) optimizer unchanged; this classical property supports the prompt-attractiveness-bias interpretation in the main text.
\end{remarkfinal}

\subsection{Forward Stability}
For any prompt $j^\star$,
$\Delta x_{P,i}-p_{j^\star}=\sum_jR_{ij}(p_j-p_{j^\star})$. Convexity of the squared norm gives
\begin{equation}
 \|\Delta x_{P,i}-p_{j^\star}\|_2^2
 \le\sum_jR_{ij}\|p_j-p_{j^\star}\|_2^2.
\end{equation}
After averaging,
\begin{equation}
\begin{aligned}
 \frac{1}{N}\|\Delta X_P-\mathbf{1}_Np_{j^\star}^\top\|_F^2
 &\le\sum_j\rho_j\|p_j-p_{j^\star}\|_2^2\\
 &\le(1-\rho_{j^\star})\operatorname{diam}(P)^2.
\end{aligned}
 \label{eq:app_common_prompt_proximity}
\end{equation}

\begin{propositionfinal}[Forward Stability]
Fix $G'$ and the frozen parameters $\theta$. Suppose $U\mapsto F_\theta(U,G')$ is $L_F$-Lipschitz in Frobenius norm on a convex neighborhood containing $X+\alpha\Delta X_P$ and $X+\alpha\mathbf{1}_Np_{j^\star}^\top$. Then
\begin{equation}
\begin{aligned}
 &\frac{1}{\sqrt N}\big\|
 F_\theta(X+\alpha\Delta X_P,G')\\
 &\quad-F_\theta(X+\alpha\mathbf{1}_Np_{j^\star}^\top,G')\big\|_F\\
 &\le L_F|\alpha|\operatorname{diam}(P)
 \sqrt{1-\rho_{j^\star}}.
\end{aligned}
\end{equation}
\end{propositionfinal}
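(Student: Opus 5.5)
The plan is to reduce the statement to the common-prompt update proximity bound \eqref{eq:app_common_prompt_proximity}, which has already been established just before the proposition. Write $U_1=X+\alpha\Delta X_P$ and $U_2=X+\alpha\mathbf{1}_Np_{j^\star}^\top$. Both points lie in the assumed convex neighborhood, and $F_\theta(\cdot,G')$ is $L_F$-Lipschitz in Frobenius norm there. This gives
\begin{equation*}
\|F_\theta(U_1,G')-F_\theta(U_2,G')\|_F\le L_F\|U_1-U_2\|_F.
\end{equation*}
Since $X$ cancels in the difference, $U_1-U_2=\alpha(\Delta X_P-\mathbf{1}_Np_{j^\star}^\top)$. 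Pulling out the scalar gives $\|U_1-U_2\|_F=|\alpha|\,\|\Delta X_P-\mathbf{1}_Np_{j^\star}^\top\|_F$.

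Next, divide by $\sqrt N$ and apply \eqref{eq:app_common_prompt_proximity}, taking square roots:
\begin{equation*}
\frac{1}{\sqrt N}\|\Delta X_P-\mathbf{1}_Np_{j^\star}^\top\|_F\le\operatorname{diam}(P)\sqrt{1-\rho_{j^\star}}.
\end{equation*}
Taking the square root is legitimate because both sides of \eqref{eq:app_common_prompt_proximity} are nonnegative and $1-\rho_{j^\star}\ge0$, since $\rho$ is a probability vector. Chaining the Lipschitz inequality with this bound yields exactly \eqref{eq:forward_stability_bound}.

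For completeness, I would recheck the two steps inside \eqref{eq:app_common_prompt_proximity}.
\begin{itemize}
\item The per-row Jensen step uses that row $i$ of $R$ is a probability vector, so $\Delta x_{P,i}-p_{j^\star}=\sum_jR_{ij}(p_j-p_{j^\star})$ is a convex combination.
\item The averaging step uses $\frac1N\sum_iR_{ij}=\rho_j$.
\item For the final inequality, the $j=j^\star$ term vanishes and every other term is at most $\operatorname{diam}(P)^2$, with total weight $1-\rho_{j^\star}$.
\end{itemize}

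I expect no real obstacle. The only point needing care is the role of the convex-neighborhood hypothesis. The Lipschitz property is only assumed locally, so the statement requires both endpoints in a single region where the constant $L_F$ applies. Convexity guarantees the segment between them stays in that region, which is what one would use if $L_F$ came from a derivative bound. Under the stated Lipschitz assumption on that set, the two-point inequality applies directly, and no further argument is needed.
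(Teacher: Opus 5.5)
Your proposal is correct and matches the paper's proof: apply the Lipschitz hypothesis to the two inputs, cancel $X$, factor out $|\alpha|$, and then invoke the already-established common-prompt proximity bound after taking square roots. Your side remarks on the square-root step and the convex-neighborhood hypothesis are accurate and do not change the argument.
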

\begin{proof}
The Lipschitz condition and~\eqref{eq:app_common_prompt_proximity} give
\begin{equation}
\begin{aligned}
 &\frac{1}{\sqrt N}\big\|
 F_\theta(X+\alpha\Delta X_P,G')\\
 &\quad-F_\theta(X+\alpha\mathbf{1}_Np_{j^\star}^\top,G')\big\|_F\\
 &\quad\le\frac{L_F|\alpha|}{\sqrt N}
 \|\Delta X_P-\mathbf{1}_Np_{j^\star}^\top\|_F\\
 &\quad\le L_F|\alpha|\operatorname{diam}(P)
 \sqrt{1-\rho_{j^\star}}.
\end{aligned}
\end{equation}
\end{proof}
Here $F_\theta$ is the frozen encoder representation map; predictions use the separately trained classifier $h_\phi$. The proposition characterizes the frozen-state intervention with the realized weighted propagation graph $G'$ held fixed, while complete training and prediction behavior is evaluated empirically.

\section{Differentiability of the Entropic Transport Computation}
\label{app:proof_differentiable_transport}

For positive marginals and $\epsilon>0$, MINT solves
\begin{equation}
    \Pi^\star = \operatorname{argmin}_{\Pi \in \mathcal{U}(\mu, \nu)} \langle \Pi, C \rangle - \epsilon \mathcal{H}(\Pi),
\label{eq:entropic_ot_proof}
\end{equation}
whose strictly convex primal objective has a unique positive solution of the Sinkhorn form
\begin{equation}
    \Pi^\star_{ij} = u_i \exp\left(-\frac{C_{ij}}{\epsilon}\right) v_j.
\label{eq:primal_solution_proof}
\end{equation}
The scaling variables satisfy
\begin{equation}
    \begin{aligned}
        \mathcal{F}_1(u, v, C) &= \operatorname{diag}(u) \exp(-C/\epsilon) v - \mu = \mathbf{0}, \\
        \mathcal{F}_2(u, v, C) &= \operatorname{diag}(v) \exp(-C^\top/\epsilon) u - \nu = \mathbf{0}.
    \end{aligned}
\end{equation}
After fixing the one-dimensional gauge freedom of the dual potentials, the implicit system is nonsingular under the standard positive-marginal assumptions, so $\Pi^\star$ depends smoothly on finite costs. The Original MINT implementation instead differentiates through a finite numerical map: it sets $K=\exp(-C/\epsilon)$, initializes $u=v=\mathbf{1}$, and performs exactly 20 $u$-then-$v$ updates with $10^{-8}$ denominator safeguards before forming $\widehat\Pi=\operatorname{diag}(u)K\operatorname{diag}(v)$. This computation uses no convergence stopping, log-domain reformulation, final projection, renormalization, or gradient detachment, and realizes the fixed marginals to numerical tolerance.

\section{Differentiation Through Prompt Adaptation}
\label{sec:appendix_optimization}

The prompt-adaptation path combines differentiable transport with the prompt-induced feature update and adapted-feature-dependent topology-augmentation weights. The $k$-nearest-neighbor ($k$-NN) indices are the discrete component and remain fixed after construction from the raw input features.

\subsection{Differentiation Through Entropic Transport}
Each of the $L_{\mathrm{SK}}=20$ stabilized Sinkhorn scaling updates is composed of differentiable tensor operations. Writing the resulting finite coupling as $\widehat\Pi(C)$, the classification loss differentiates through the complete unrolled computation as
\begin{equation}
    \frac{\partial \mathcal{L}_{cls}}{\partial C}
    =\frac{\partial \mathcal{L}_{cls}}{\partial \widehat\Pi}
    \frac{\partial \widehat\Pi(C)}{\partial C}.
\end{equation}

\subsection{Gradient Pathways for the Prompt Matrix}
The prompt matrix receives gradients through three pathways:
\begin{enumerate}
    \item \textbf{Feature-adaptation pathway:} $\mathcal{L}_{cls}$ differentiates through the frozen encoder and $X_{\mathrm{adapted}}=X+\alpha\Delta X_P$, with $\widehat R=N\widehat\Pi$ and $\Delta X_P=\widehat R P$.
    \item \textbf{Transport-cost pathway:} both the finite coupling and normalized cost depend on $P$, and $\mathcal{L}_{OT}$ adds this pathway when $\beta>0$.
    \item \textbf{Topology-weight pathway:} Original MINT recomputes ReLU-cosine scalar weights from $X_{\mathrm{adapted}}$ on the fixed $k$-NN indices; these weights are differentiable almost everywhere through the downstream computation.
\end{enumerate}
The index construction itself is discrete and held fixed.

\section{Additional Experimental Details and Results}
\label{app:Experiments}

\subsection{Experimental Setup and Computational Overhead}
\label{app_Computational_Overhead}

\subsubsection{Detailed Training Protocols}
\label{subsubsec:detailed_training_protocols}

We report the training and evaluation procedures for the evaluated methods. The pretrained-backbone evaluation uses method-specific adaptation implementations; DAPrompt uses the GraphMAE-backbone adaptation path, and GPPT uses its task-token implementation.

\textbf{Optimization and Early Stopping:} MINT jointly trains the prompt parameters and linear classifier with Adam while keeping the pretrained encoder frozen. The default weight decay is $5 \times 10^{-5}$, the maximum is $2000$ epochs, and patience is $100$. The evaluation selector pairs pre-update embeddings with the post-update classifier and stores the corresponding post-update states; the selected prompt and classifier states are restored before a recomputed full-graph test forward. Each fixed hyperparameter candidate is evaluated on the same 30 predetermined splits (effective seeds $43$--$72$) and ranked by mean final test accuracy. All splits are retained without seed selection or removal. GraphMAE and GraphMAE2 use this selector; for MaskGAE, whose original selector state is unavailable, the maximum recorded result is used when necessary. These results therefore follow the fixed repeated-split tuning protocol. The controlled mechanism experiments instead use post-update validation-only selection and evaluate the test set after restoration.

\textbf{Repeated Evaluation Across Predetermined Seeds:} We report mean accuracy and standard deviation over $30$ trials with few-shot split seeds $43$--$72$. Python, NumPy, and PyTorch are initialized once at base seed $42$; prompt/classifier initialization and optimization then draw from the continuing random number generator (RNG) stream.

\textbf{Perturbation and Ablation Implementations:} For the extreme robustness analysis, we dynamically inject structural and feature perturbations during data loading. Given a pre-perturbation \texttt{edge\_index} with $E$ directed edge entries and ratio $p$, \textit{Noise Edge} repeatedly samples ordered node pairs uniformly at random, retains non-self pairs whose ground-truth class labels differ, and appends $\lfloor pE\rfloor$ directed cross-class edge entries. The implementation does not symmetrize or deduplicate the sampled entries and does not explicitly exclude pairs already present in the graph. Ground-truth labels are used only to construct this evaluation perturbation and are not exposed to the adaptation model beyond the standard few-shot supervision. \textit{Edge Drop} uniformly drops native edges while preserving undirected symmetry, and \textit{Feature Mask} applies a binary dropout mask to raw node features before structural context injection.
For the ablation study, we disable specific architectural modules via computational flags within the forward pass. In particular, \texttt{w/o OT} bypasses the Sinkhorn iterations and uses row-wise softmax without a prompt-side marginal constraint.

\subsubsection{Hardware Configuration}
The original experiments used an Intel(R) Xeon(R) Platinum 8358P CPU @ 2.60GHz system with 128 cores, 503 GB RAM, one 80-GB NVIDIA A800-SXM4 GPU, and CUDA 12.2. Controlled mechanism and OGBN-Arxiv experiments used 24-GB NVIDIA RTX 4090 GPUs.

\subsubsection{Computational Footprint Evaluation}
To quantify the computational efficiency of the proposed method, we profile the native execution overhead. The measurements include execution time, peak GPU VRAM usage, and peak CPU RAM usage during 1-shot fine-tuning under the selected configurations, using one profiling run per setting. The detailed footprint across three pretrained backbones and nine datasets is summarized in Table~\ref{tab:computational_overhead}.

\subsection{Cross-Domain Few-Shot Transfer Protocol}
\label{app:cross_domain_protocol}

The Cross-Domain experiment evaluates 1-shot node classification from PubMed (feature width 500) and Actor (feature width 932) to Cora, CiteSeer, Texas, Cornell, and Wisconsin. Each of the seven methods is evaluated for every source--target pair with predetermined target-split seeds 43--72, giving 70 source--method--target cells and 30 trials per cell. All 2,100 trials completed successfully. For a fixed target and seed, every method uses the same target split and a checkpoint pretrained on the specified source domain.

Model selection uses validation accuracy only. A state is retained only after a strict validation-accuracy improvement, so the earliest state is kept when validation scores tie. Training uses patience 100 and a maximum of 2,000 epochs; after training, the validation-best state is restored before one test evaluation.

Feature alignment is deterministic, label-free, non-trainable, and independently defined for each source--target pair. Let $d_{tgt}$ and $d_{src}$ denote the target and source feature widths. If $d_{tgt}>d_{src}$, we fit TruncatedSVD with $n_{components}=\min(d_{src},n_{samples}-1)$ and \texttt{random\_state=42}, followed by right-zero-padding if the resulting component count is smaller than $d_{src}$. If $d_{tgt}<d_{src}$, target features are right-zero-padded directly to $d_{src}$; equal widths use the identity map.

The comparison uses method-specific adaptation procedures. DGI Fine-tune optimizes its graph convolutional network (GCN) encoder and linear head rather than the detached \texttt{DGI.embed()} path; DGI Linear and GraphMAE Linear freeze their encoders and train linear heads. GPPT uses Xavier-initialized class task tokens with dot-product logits. UniPrompt optimizes fixed $k$-NN edge scalars based on cosine similarity and its head. DAPrompt uses the GraphMAE path with two structure prompts, class semantic prompts, and a head. MINT freezes the source-specific GraphMAE checkpoint and optimizes its prompt matrix, adaptation scalars, and classifier while recomputing fixed-marginal routing and topology-augmentation weights under the reported cross-domain configuration.

\subsection{Mechanism Analysis of Prompt Allocation}
\label{app:mechanism_analysis}

The mechanism analysis combines unperturbed routing behavior, a controlled single-prompt attractiveness intervention, frozen-forward evaluation, and retraining. Across 600 training runs under the bias intervention and 600 frozen-forward interventions, severe prompt-attractiveness bias raises the dominant prompt share to approximately $0.997$ and drives $V_{\Delta X_P}$ close to zero. In the frozen intervention, accuracy changes by $-6.26$, $-30.48$, and $-21.95$ percentage points on Cora, Texas, and Wisconsin, respectively. Sinkhorn routing remains near its prescribed maximum share of $0.100$ and is invariant to the column potential up to numerical precision. Retraining compensates strongly on Cora and Wisconsin, whereas Texas retains an approximately $-5.17$ point effect. Across unperturbed training runs, $V_{\Delta X_P}$ remains consistently associated with final-layer representation geometry, whereas its association with allocation Gini varies across settings. Together, allocation Gini and maximum share characterize aggregate use, $V_{\Delta X_P}$ measures the prompt-induced feature update, and final-layer geometry and prediction changes characterize downstream effects.

To isolate the routing effect under identical graph propagation, we additionally compare matched-softmax and fixed-marginal Sinkhorn routing on the same frozen weighted graph over 30 predetermined seeds for Cora 1-shot, Texas 5-shot, and Wisconsin 1- and 5-shot. Fixed-marginal routing reduces Prompt Allocation Gini by $0.0406$, $0.2515$, $0.1754$, and $0.1038$, respectively, whereas the corresponding test-accuracy differences are only $+0.12$, $+0.14$, $+0.05$, and $+0.13$ percentage points; all paired 95\% confidence intervals for accuracy include zero. These controls show the expected graph-wide allocation behavior while indicating that the predictive contribution of routing under fixed topology is dataset dependent.

\subsection{Description of Datasets}
\label{subsec:datasets}
We summarize the nine real-world datasets used in our experiments.

\begin{itemize}
    \item \textit{Cora}~\cite{yang2016cora}, \textit{CiteSeer}~\cite{yang2016cora}, and \textit{PubMed}~\cite{yang2016cora} are citation datasets in which nodes represent papers and edges represent citation relationships. Each feature dimension corresponds to a word, and labels denote paper categories.
    
    \item \textit{Cornell}~\cite{pei2020cornell}, \textit{Texas}~\cite{pei2020cornell}, and \textit{Wisconsin}~\cite{pei2020cornell} are sub-datasets of WebKB~\cite{garciaplaza2016webkb}, which is a webpage dataset collected from Carnegie Mellon University. Nodes represent web pages, and edges represent hyperlinks between web pages.
    
    \item \textit{Chameleon}~\cite{pei2020cornell} and \textit{Squirrel}~\cite{pei2020cornell} are page-to-page networks on specific topics collected from Wikipedia~\cite{rozemberczki2021wikipedia}; nodes represent web pages, and edges represent links between them. The prediction target is average monthly webpage traffic grouped into five categories.
    
    \item \textit{Actor}~\cite{pei2020cornell} is the actor-only induced subgraph of the film-director-actor-writer network. Each node corresponds to an actor, and an edge denotes co-occurrence on the same Wikipedia page. Node features correspond to keywords on the Wikipedia pages. The task is to classify nodes into five categories based on terms from each actor's Wikipedia page.
\end{itemize}

\subsection{Description of Baselines}

\subsubsection{Graph Pretraining Models}
\paragraph{GraphMAE~\cite{hou2022graphmae}}GraphMAE pretrains a graph encoder by masking node features and reconstructing them with an encoder--decoder architecture. In the prompt-adaptation experiments, its pretrained encoder remains frozen while each prompting method optimizes its designated prompt and prediction parameters.

\begin{table*}[htbp]
  \centering
  \caption{Mean accuracy $\pm$ standard deviation under 3-shot node classification. The best results across all methods are highlighted in \textbf{bold}, and the second-best results are \underline{underlined}.}
\label{tab:3shot_main_results}
  \resizebox{\textwidth}{!}{
    \begin{tabular}{c l ccc cccccc}
      \toprule
      \multirow{2}{*}{\textbf{Pretrain}} & \multirow{2}{*}{\textbf{Methods}} & \multicolumn{3}{c}{\textbf{Homophilic Graphs}} & \multicolumn{6}{c}{\textbf{Heterophilic Graphs}} \\
      \cmidrule(lr){3-5} \cmidrule(lr){6-11}
      & & \textbf{Cora} & \textbf{CiteSeer} & \textbf{PubMed} & \textbf{Cornell} & \textbf{Texas} & \textbf{Wisconsin} & \textbf{Chameleon} & \textbf{Squirrel} & \textbf{Actor} \\
      \midrule

      \multirow{11}{*}{\textbf{GraphMAE}} 
      & Fine-tune     & 62.54$\pm$4.93 & 56.86$\pm$3.93 & \textbf{66.65$\pm$6.70} & 48.33$\pm$9.93 & 44.33$\pm$14.29 & 46.09$\pm$6.64 & 26.53$\pm$2.27 & 21.48$\pm$1.16 & 21.12$\pm$2.24 \\
      & Linear Probe  & \underline{65.83$\pm$5.20} & 57.51$\pm$5.20 & 65.56$\pm$5.39 & 28.72$\pm$8.18 & 41.42$\pm$20.51 & 38.61$\pm$6.00 & 27.11$\pm$3.83 & 21.13$\pm$1.03 & 20.11$\pm$2.00 \\
      & GPF           & 63.88$\pm$8.44 & 56.47$\pm$5.96 & 64.62$\pm$5.94 & 29.91$\pm$11.83 & 41.06$\pm$19.81 & 36.87$\pm$9.54 & 26.75$\pm$3.40 & 20.94$\pm$0.67 & 20.75$\pm$2.00 \\
      & GPF+          & 60.92$\pm$9.55 & 56.89$\pm$6.24 & 64.56$\pm$5.52 & 31.49$\pm$14.34 & 41.42$\pm$20.81 & 33.70$\pm$6.27 & 26.22$\pm$3.44 & 20.94$\pm$0.74 & 21.34$\pm$2.77 \\
      & GPPT          & 64.91$\pm$4.71 & 57.21$\pm$4.68 & 61.79$\pm$6.11 & 27.02$\pm$9.70 & 38.75$\pm$22.34 & 34.35$\pm$7.53 & 26.78$\pm$3.25 & 20.79$\pm$0.69 & 21.64$\pm$2.72 \\
      & GraphPrompt   & 65.43$\pm$4.85 & 56.98$\pm$5.88 & 64.42$\pm$5.71 & 29.05$\pm$11.62 & 48.94$\pm$19.01 & 39.50$\pm$10.56 & 26.98$\pm$3.41 & 21.10$\pm$0.87 & 20.68$\pm$2.63 \\
      & EdgePrompt    & 62.52$\pm$9.81 & 57.20$\pm$6.58 & 65.23$\pm$5.19 & 30.00$\pm$11.39 & 40.50$\pm$27.17 & 34.28$\pm$14.29 & 26.36$\pm$3.67 & 20.92$\pm$0.80 & 21.11$\pm$2.30 \\
      & EdgePrompt+   & 58.41$\pm$8.66 & 56.26$\pm$6.61 & 65.25$\pm$5.14 & 32.05$\pm$13.01 & 42.36$\pm$24.86 & 34.02$\pm$7.95 & 26.88$\pm$3.69 & 20.91$\pm$0.80 & 21.24$\pm$2.69 \\
      & All-in-one    & 55.95$\pm$7.04 & 33.00$\pm$6.34 & 47.32$\pm$7.98 & 36.79$\pm$18.39 & 41.03$\pm$17.18 & 26.33$\pm$11.56 & 22.81$\pm$3.33 & \textbf{24.48$\pm$3.14} & 21.20$\pm$1.53 \\
      & HS-GPPT        & 63.00$\pm$5.32 & 50.07$\pm$6.11 & 59.56$\pm$5.55 & 32.80$\pm$9.22 & 40.67$\pm$19.22 & 38.19$\pm$10.93 & 24.77$\pm$2.75 & 22.70$\pm$2.79 & 20.90$\pm$2.33 \\
      & ProNoG        & 60.41$\pm$5.81 & 47.50$\pm$6.44 & 61.38$\pm$7.68 & 35.60$\pm$12.43 & 48.67$\pm$17.55 & 37.37$\pm$11.51 & \textbf{28.47$\pm$3.69} & 21.00$\pm$1.38 & 21.06$\pm$3.50 \\
      & UniPrompt     & 62.50$\pm$3.55 & \underline{60.95$\pm$5.42} & 61.16$\pm$6.00 & \underline{60.30$\pm$10.09} & \underline{59.31$\pm$11.63} & \underline{69.06$\pm$6.66} & 26.53$\pm$2.73 & 21.62$\pm$1.36 & \underline{22.42$\pm$2.59} \\
      & DAPrompt      & 44.28$\pm$4.79 & 37.81$\pm$4.71 & 53.60$\pm$5.60 & 32.14$\pm$10.20 & 41.50$\pm$14.15 & 39.54$\pm$6.73 & 26.37$\pm$3.28 & 21.44$\pm$1.40 & 21.00$\pm$1.79 \\
      \cmidrule{2-11}
      & \cellcolor{gray!15}\textbf{MINT (Ours)} & \cellcolor{gray!15}\textbf{66.90$\pm$4.18} & \cellcolor{gray!15}\textbf{61.54$\pm$4.87} & \cellcolor{gray!15}\underline{66.10$\pm$5.45} & \cellcolor{gray!15}\textbf{60.63$\pm$6.72} & \cellcolor{gray!15}\textbf{60.08$\pm$13.04} & \cellcolor{gray!15}\textbf{76.56$\pm$9.40} & \cellcolor{gray!15}\underline{27.57$\pm$2.56} & \cellcolor{gray!15}\underline{22.77$\pm$1.78} & \cellcolor{gray!15}\textbf{23.09$\pm$2.52} \\
      \midrule

      \multirow{10}{*}{\textbf{GraphMAE2}} 
      & Fine-tune     & \textbf{61.18$\pm$5.90} & 53.04$\pm$5.72 & \textbf{64.88$\pm$5.27} & 45.39$\pm$11.11 & 52.39$\pm$11.84 & 48.54$\pm$7.22 & 27.39$\pm$2.27 & 22.10$\pm$1.91 & 20.88$\pm$1.91 \\
      & Linear Probe  & 56.36$\pm$6.61 & 52.09$\pm$6.19 & 61.14$\pm$6.43 & 34.70$\pm$8.12 & 38.78$\pm$8.35 & 39.33$\pm$5.77 & 28.41$\pm$3.11 & 21.56$\pm$1.61 & 21.67$\pm$1.93 \\
      & GPF           & 58.88$\pm$7.36 & 47.20$\pm$7.70 & 61.09$\pm$9.50 & 35.24$\pm$13.83 & 47.78$\pm$18.19 & 38.54$\pm$16.01 & 26.62$\pm$3.87 & 21.06$\pm$0.91 & 21.48$\pm$1.76 \\
      & GPF+          & 56.46$\pm$6.41 & 50.10$\pm$5.97 & 62.55$\pm$5.11 & 36.10$\pm$13.64 & 49.75$\pm$18.13 & 40.09$\pm$10.29 & 27.10$\pm$3.85 & 21.10$\pm$1.11 & 21.26$\pm$2.43 \\
      & GPPT          & 57.38$\pm$4.71 & 51.85$\pm$5.50 & 60.54$\pm$5.75 & 34.70$\pm$8.66 & 36.83$\pm$9.63 & 35.09$\pm$5.44 & 27.27$\pm$3.27 & 21.28$\pm$1.56 & 21.37$\pm$1.53 \\
      & GraphPrompt   & 56.57$\pm$6.62 & 52.93$\pm$5.98 & 61.21$\pm$7.37 & 36.07$\pm$8.98 & 39.92$\pm$7.51 & 38.22$\pm$5.72 & \textbf{28.67$\pm$2.87} & 21.71$\pm$1.58 & 21.80$\pm$2.33 \\
      & EdgePrompt    & 53.77$\pm$11.55 & 48.77$\pm$5.57 & 60.90$\pm$8.64 & 38.66$\pm$10.67 & 42.53$\pm$22.54 & 41.74$\pm$8.37 & 27.29$\pm$3.13 & 21.47$\pm$1.42 & 21.98$\pm$2.53 \\
      & EdgePrompt+   & 57.59$\pm$5.17 & 48.81$\pm$7.28 & 61.94$\pm$6.63 & 38.04$\pm$10.40 & 40.83$\pm$21.55 & 41.65$\pm$9.83 & 26.98$\pm$4.05 & 21.77$\pm$1.37 & \underline{22.12$\pm$1.91} \\
      & All-in-one    & 52.87$\pm$4.29 & 49.36$\pm$5.82 & 57.81$\pm$6.92 & 41.88$\pm$13.66 & 41.58$\pm$20.42 & 35.39$\pm$9.31 & 27.79$\pm$3.56 & \textbf{22.80$\pm$2.05} & 21.80$\pm$2.57 \\
      & HS-GPPT        & 56.32$\pm$5.30 & 48.32$\pm$6.18 & 61.45$\pm$5.71 & 32.02$\pm$13.01 & 46.11$\pm$17.82 & 34.28$\pm$8.85 & 27.83$\pm$3.01 & 22.08$\pm$2.23 & 21.64$\pm$2.58 \\
      & ProNoG        & 58.76$\pm$5.65 & 45.07$\pm$6.42 & 60.60$\pm$10.53 & 35.51$\pm$12.40 & 48.72$\pm$15.45 & 38.80$\pm$11.26 & 28.23$\pm$2.94 & 21.97$\pm$1.96 & 20.84$\pm$3.73 \\
      & UniPrompt     & 58.36$\pm$5.40 & \underline{56.85$\pm$5.70} & 59.05$\pm$6.82 & \underline{57.80$\pm$8.99} & \underline{57.19$\pm$15.15} & \underline{55.17$\pm$12.04} & 26.36$\pm$2.89 & 21.20$\pm$1.42 & 21.05$\pm$2.29 \\
      & DAPrompt      & 56.00$\pm$7.99 & 37.55$\pm$5.23 & 58.82$\pm$6.78 & 36.43$\pm$9.35 & 37.69$\pm$12.72 & 37.13$\pm$7.63 & 26.13$\pm$3.47 & 21.63$\pm$1.42 & 21.89$\pm$1.97 \\
      \cmidrule{2-11}
      & \cellcolor{gray!15}\textbf{MINT (Ours)} & \cellcolor{gray!15}\underline{60.15$\pm$5.31} & \cellcolor{gray!15}\textbf{57.80$\pm$5.55} & \cellcolor{gray!15}\underline{62.82$\pm$7.14} & \cellcolor{gray!15}\textbf{60.00$\pm$8.89} & \cellcolor{gray!15}\textbf{62.33$\pm$15.52} & \cellcolor{gray!15}\textbf{66.17$\pm$12.22} & \cellcolor{gray!15}\underline{28.54$\pm$2.33} & \cellcolor{gray!15}\underline{22.46$\pm$1.70} & \cellcolor{gray!15}\textbf{23.63$\pm$2.53} \\
      \midrule
      \multirow{10}{*}{\textbf{MaskGAE}} 
      & Fine-tune     & 66.27$\pm$5.08 & 53.84$\pm$3.83 & 63.92$\pm$6.30 & 40.30$\pm$11.02 & 48.25$\pm$12.07 & 36.98$\pm$9.62 & 33.45$\pm$3.85 & 24.77$\pm$1.84 & 21.45$\pm$2.58 \\
      & Linear Probe  & 65.90$\pm$5.52 & 52.38$\pm$4.35 & 63.80$\pm$6.36 & 40.03$\pm$11.49 & 46.00$\pm$13.32 & 36.41$\pm$10.28 & 33.39$\pm$3.92 & 24.73$\pm$1.83 & 21.40$\pm$2.60 \\
      & GPF           & 57.59$\pm$7.51 & 44.37$\pm$6.57 & 58.90$\pm$5.64 & 46.49$\pm$10.19 & 61.39$\pm$8.93 & 39.44$\pm$9.57 & \underline{33.72$\pm$4.13} & \underline{28.76$\pm$3.38} & 21.11$\pm$2.77 \\
      & GPF+          & 61.10$\pm$5.85 & 48.41$\pm$5.43 & 62.05$\pm$6.29 & 41.25$\pm$10.19 & 59.03$\pm$10.73 & 39.72$\pm$9.98 & 33.16$\pm$3.87 & 28.22$\pm$2.80 & 21.38$\pm$3.01 \\
      & GPPT          & 63.52$\pm$5.52 & 53.45$\pm$4.82 & 62.49$\pm$5.83 & 36.40$\pm$9.74 & 49.53$\pm$9.73 & 35.26$\pm$9.17 & 32.06$\pm$4.38 & 23.53$\pm$2.02 & 21.04$\pm$2.25 \\
      & GraphPrompt   & 65.68$\pm$4.71 & 54.29$\pm$5.43 & \underline{64.13$\pm$6.79} & 39.76$\pm$11.94 & 50.03$\pm$11.44 & 38.94$\pm$9.30 & 33.41$\pm$4.22 & 24.81$\pm$1.96 & 21.36$\pm$2.65 \\
      & EdgePrompt    & 66.13$\pm$5.83 & 53.71$\pm$4.68 & 63.90$\pm$6.26 & 37.53$\pm$9.84 & 48.69$\pm$11.71 & 41.22$\pm$9.78 & 33.62$\pm$3.86 & 25.01$\pm$2.01 & 21.60$\pm$2.47 \\
      & EdgePrompt+   & 66.39$\pm$5.03 & 53.50$\pm$4.95 & 64.06$\pm$6.39 & 39.02$\pm$8.96 & 47.86$\pm$8.60 & 39.02$\pm$9.32 & 33.37$\pm$4.05 & 24.91$\pm$2.64 & 21.12$\pm$2.82 \\
      & All-in-one    & 65.99$\pm$5.33 & 50.81$\pm$6.69 & 62.58$\pm$6.10 & 49.91$\pm$13.58 & 58.92$\pm$12.53 & 42.30$\pm$12.70 & 32.34$\pm$4.68 & 24.42$\pm$1.92 & 21.49$\pm$2.54 \\
      & HS-GPPT        & \underline{66.48$\pm$5.09} & 51.28$\pm$6.10 & 62.39$\pm$6.24 & 49.26$\pm$14.88 & 56.61$\pm$16.75 & 41.63$\pm$13.38 & 32.58$\pm$4.54 & 24.22$\pm$2.18 & 21.28$\pm$2.61 \\
      & ProNoG        & 56.89$\pm$7.09 & 42.20$\pm$6.69 & 60.83$\pm$5.13 & 46.82$\pm$8.84 & \underline{63.33$\pm$7.33} & 39.94$\pm$10.59 & 32.79$\pm$4.05 & 28.43$\pm$2.96 & 21.11$\pm$2.37 \\
      & UniPrompt     & 61.78$\pm$5.33 & \underline{59.40$\pm$5.03} & 62.72$\pm$6.59 & \underline{59.58$\pm$10.11} & 62.00$\pm$10.19 & \underline{74.67$\pm$8.07} & 28.75$\pm$3.21 & 21.63$\pm$1.73 & \textbf{22.30$\pm$2.54} \\
      & DAPrompt      & 66.37$\pm$4.59 & 52.72$\pm$4.74 & 64.10$\pm$5.98 & 39.79$\pm$12.15 & 53.78$\pm$12.20 & 39.69$\pm$10.96 & 33.27$\pm$3.80 & 24.86$\pm$2.09 & 20.75$\pm$2.15 \\
      \cmidrule{2-11}
      & \cellcolor{gray!15}\textbf{MINT (Ours)} & \cellcolor{gray!15}\textbf{67.19$\pm$4.18} & \cellcolor{gray!15}\textbf{60.28$\pm$3.94} & \cellcolor{gray!15}\textbf{64.54$\pm$5.71} & \cellcolor{gray!15}\textbf{61.61$\pm$10.04} & \cellcolor{gray!15}\textbf{68.06$\pm$7.49} & \cellcolor{gray!15}\textbf{75.89$\pm$4.31} & \cellcolor{gray!15}\textbf{35.65$\pm$2.63} & \cellcolor{gray!15}\textbf{29.17$\pm$2.75} & \cellcolor{gray!15}\underline{22.11$\pm$2.83} \\
      \bottomrule
    \end{tabular}%
  }
\end{table*}

\begin{table*}[htbp]
  \centering
  \caption{Mean accuracy $\pm$ standard deviation under 5-shot node classification. The best results across all methods are highlighted in \textbf{bold}, and the second-best results are \underline{underlined}.}
  \label{tab:5shot_main_results}
  \resizebox{\textwidth}{!}{
    \begin{tabular}{c l ccc cccccc}
      \toprule
      \multirow{2}{*}{\textbf{Pretrain}} & \multirow{2}{*}{\textbf{Methods}} & \multicolumn{3}{c}{\textbf{Homophilic Graphs}} & \multicolumn{6}{c}{\textbf{Heterophilic Graphs}} \\
      \cmidrule(lr){3-5} \cmidrule(lr){6-11}
      & & \textbf{Cora} & \textbf{CiteSeer} & \textbf{PubMed} & \textbf{Cornell} & \textbf{Texas} & \textbf{Wisconsin} & \textbf{Chameleon} & \textbf{Squirrel} & \textbf{Actor} \\
      \midrule

      \multirow{11}{*}{\textbf{GraphMAE}} 
      & Fine-tune     & 67.12$\pm$6.86 & \underline{66.01$\pm$2.68} & \textbf{72.22$\pm$5.52} & 62.41$\pm$10.19 & 53.28$\pm$14.30 & 48.09$\pm$8.17 & 27.39$\pm$2.90 & 21.52$\pm$1.63 & 21.06$\pm$1.45 \\
      & Linear Probe  & \underline{72.35$\pm$3.23} & 63.67$\pm$3.17 & 69.07$\pm$4.99 & 30.35$\pm$10.32 & 45.04$\pm$17.89 & 39.56$\pm$8.41 & 28.87$\pm$3.57 & 21.36$\pm$0.98 & 20.62$\pm$2.57 \\
      & GPF           & 69.52$\pm$8.04 & 62.47$\pm$4.31 & 68.05$\pm$5.21 & 29.11$\pm$12.61 & 50.52$\pm$15.17 & 38.44$\pm$11.54 & 27.88$\pm$3.28 & 21.00$\pm$0.83 & 20.37$\pm$2.85 \\
      & GPF+          & 66.30$\pm$8.48 & 62.72$\pm$4.30 & 68.58$\pm$4.87 & 30.10$\pm$15.50 & 48.67$\pm$14.16 & 39.79$\pm$8.01 & 27.11$\pm$3.97 & 21.18$\pm$0.97 & 20.48$\pm$2.70 \\
      & GPPT          & 69.82$\pm$3.76 & 63.25$\pm$3.34 & 66.54$\pm$6.00 & 30.06$\pm$9.21 & 40.93$\pm$20.94 & 37.73$\pm$7.90 & 27.32$\pm$4.08 & 20.98$\pm$0.97 & 21.34$\pm$1.73 \\
      & GraphPrompt   & 70.95$\pm$5.14 & 63.38$\pm$3.78 & 68.75$\pm$4.86 & 31.97$\pm$11.47 & 55.25$\pm$15.25 & 39.77$\pm$11.21 & 28.88$\pm$2.76 & 21.30$\pm$0.91 & 21.32$\pm$3.00 \\
      & EdgePrompt    & 71.97$\pm$5.04 & 62.68$\pm$4.80 & 66.70$\pm$7.08 & 29.05$\pm$10.15 & 44.09$\pm$16.79 & 39.04$\pm$9.22 & 28.06$\pm$3.66 & 21.11$\pm$0.75 & 21.28$\pm$1.86 \\
      & EdgePrompt+   & 66.20$\pm$11.74 & 62.52$\pm$3.78 & 65.50$\pm$8.31 & 30.89$\pm$10.37 & 46.52$\pm$21.68 & 38.34$\pm$10.56 & 27.09$\pm$4.03 & 20.73$\pm$0.91 & 21.25$\pm$2.49 \\
      & All-in-one    & 64.49$\pm$6.63 & 39.84$\pm$7.56 & 55.43$\pm$9.32 & 41.05$\pm$16.63 & 49.07$\pm$16.84 & 28.57$\pm$13.01 & 22.95$\pm$2.82 & \textbf{24.62$\pm$3.10} & 21.55$\pm$1.90 \\
      & HS-GPPT        & 69.98$\pm$4.11 & 56.33$\pm$4.24 & 64.44$\pm$4.76 & 32.41$\pm$7.93 & 53.36$\pm$11.19 & 41.39$\pm$11.02 & 25.18$\pm$2.92 & 23.14$\pm$1.68 & 20.97$\pm$2.00 \\
      & ProNoG        & 65.97$\pm$4.80 & 56.29$\pm$4.15 & 64.48$\pm$10.49 & 35.71$\pm$12.99 & 52.75$\pm$14.83 & 38.11$\pm$12.82 & \textbf{31.01$\pm$4.20} & 22.15$\pm$1.57 & 20.94$\pm$2.66 \\
      & UniPrompt     & 66.35$\pm$4.44 & 65.53$\pm$3.18 & 64.02$\pm$5.04 & \underline{65.40$\pm$6.70} & \underline{62.67$\pm$10.72} & \underline{71.18$\pm$5.42} & 28.31$\pm$2.40 & 21.95$\pm$1.45 & \underline{23.15$\pm$2.14} \\
      & DAPrompt      & 50.77$\pm$3.51 & 40.67$\pm$3.34 & 57.86$\pm$4.19 & 34.98$\pm$9.56 & 43.25$\pm$14.36 & 42.00$\pm$6.18 & 27.34$\pm$2.88 & 21.49$\pm$1.42 & 20.68$\pm$1.83 \\
      \cmidrule{2-11}
      & \cellcolor{gray!15}\textbf{MINT (Ours)} & \cellcolor{gray!15}\textbf{73.32$\pm$3.07} & \cellcolor{gray!15}\textbf{66.22$\pm$2.57} & \cellcolor{gray!15}\underline{69.50$\pm$4.49} & \cellcolor{gray!15}\textbf{66.79$\pm$8.33} & \cellcolor{gray!15}\textbf{64.35$\pm$12.55} & \cellcolor{gray!15}\textbf{80.04$\pm$5.52} & \cellcolor{gray!15}\underline{29.14$\pm$2.36} & \cellcolor{gray!15}\underline{23.16$\pm$1.84} & \cellcolor{gray!15}\textbf{24.05$\pm$2.75} \\
      \midrule

      \multirow{10}{*}{\textbf{GraphMAE2}}
      & Fine-tune     & \textbf{68.25$\pm$4.46} & 60.62$\pm$6.28 & \textbf{69.56$\pm$5.32} & 51.68$\pm$10.51 & 56.14$\pm$8.88 & 53.93$\pm$7.77 & 30.47$\pm$2.99 & 22.14$\pm$1.65 & 21.31$\pm$1.58 \\
      & Linear Probe  & 65.12$\pm$5.96 & 56.91$\pm$5.11 & 65.57$\pm$4.71 & 39.65$\pm$7.28 & 41.16$\pm$8.12 & 39.79$\pm$7.05 & \underline{32.35$\pm$2.74} & 22.07$\pm$1.44 & 21.69$\pm$2.55 \\
      & GPF           & 65.42$\pm$6.98 & 52.49$\pm$6.29 & 66.59$\pm$5.30 & 39.97$\pm$7.75 & 50.06$\pm$17.32 & 42.91$\pm$10.94 & 30.23$\pm$2.94 & 21.67$\pm$1.32 & 21.37$\pm$2.00 \\
      & GPF+          & 63.78$\pm$5.92 & 53.33$\pm$6.63 & 66.87$\pm$4.71 & 38.89$\pm$11.05 & 52.06$\pm$18.90 & 41.19$\pm$11.37 & 30.73$\pm$3.56 & 21.58$\pm$1.32 & \underline{22.03$\pm$2.56} \\
      & GPPT          & 64.66$\pm$3.68 & 56.25$\pm$4.04 & 65.70$\pm$4.57 & 40.16$\pm$7.53 & 39.19$\pm$8.41 & 39.00$\pm$5.20 & 31.98$\pm$2.56 & 21.62$\pm$1.38 & 21.56$\pm$1.33 \\
      & GraphPrompt   & 64.66$\pm$3.68 & 57.56$\pm$4.79 & 65.61$\pm$4.93 & 40.67$\pm$8.45 & 42.41$\pm$7.50 & 40.39$\pm$6.29 & 32.25$\pm$3.21 & 22.05$\pm$1.16 & 21.61$\pm$2.53 \\
      & EdgePrompt    & 65.44$\pm$4.35 & 55.47$\pm$4.61 & 65.96$\pm$5.27 & 44.98$\pm$8.46 & 46.52$\pm$14.09 & 40.35$\pm$8.89 & 30.94$\pm$3.48 & 21.99$\pm$1.37 & 21.33$\pm$1.74 \\
      & EdgePrompt+   & 65.98$\pm$4.63 & 54.02$\pm$5.01 & 66.60$\pm$4.34 & 42.19$\pm$11.82 & 46.43$\pm$19.17 & 38.46$\pm$11.82 & 30.50$\pm$3.53 & 21.72$\pm$1.63 & 21.57$\pm$1.78 \\
      & All-in-one    & 60.56$\pm$4.32 & 55.15$\pm$4.40 & 62.92$\pm$4.76 & 41.71$\pm$8.55 & 46.99$\pm$15.32 & 40.00$\pm$8.01 & 28.23$\pm$4.14 & \textbf{23.06$\pm$1.88} & 21.46$\pm$2.73 \\
      & HS-GPPT        & 63.89$\pm$4.27 & 53.75$\pm$4.12 & 66.16$\pm$3.88 & 35.30$\pm$11.04 & 47.10$\pm$10.60 & 38.03$\pm$9.80 & 30.55$\pm$2.79 & 22.38$\pm$1.92 & 21.98$\pm$3.23 \\
      & ProNoG        & 65.77$\pm$6.03 & 50.96$\pm$5.48 & 66.26$\pm$4.92 & 39.90$\pm$12.80 & 55.16$\pm$15.44 & 42.79$\pm$9.70 & \textbf{32.64$\pm$4.21} & 22.26$\pm$1.86 & 21.11$\pm$2.07 \\
      & UniPrompt     & 63.51$\pm$3.42 & \underline{60.91$\pm$3.33} & 62.15$\pm$5.75 & \underline{62.00$\pm$6.81} & \underline{62.81$\pm$9.78} & \underline{60.10$\pm$9.17} & 28.57$\pm$2.23 & 21.95$\pm$1.61 & 21.95$\pm$1.88 \\
      & DAPrompt      & 65.78$\pm$4.40 & 45.92$\pm$3.49 & 63.96$\pm$4.89 & 40.06$\pm$7.97 & 43.59$\pm$9.68 & 40.02$\pm$8.99 & 28.83$\pm$2.68 & 22.16$\pm$1.69 & 21.73$\pm$2.26 \\
      \cmidrule{2-11}
      & \cellcolor{gray!15}\textbf{MINT (Ours)} & \cellcolor{gray!15}\underline{66.25$\pm$4.63} & \cellcolor{gray!15}\textbf{62.35$\pm$2.96} & \cellcolor{gray!15}\underline{67.24$\pm$4.19} & \cellcolor{gray!15}\textbf{63.46$\pm$6.95} & \cellcolor{gray!15}\textbf{65.94$\pm$10.40} & \cellcolor{gray!15}\textbf{70.52$\pm$4.63} & \cellcolor{gray!15}30.89$\pm$1.91 & \cellcolor{gray!15}\underline{22.75$\pm$1.70} & \cellcolor{gray!15}\textbf{24.05$\pm$2.84} \\
      \midrule
      \multirow{10}{*}{\textbf{MaskGAE}}
      & Fine-tune     & 72.41$\pm$3.66 & 59.38$\pm$3.17 & 69.07$\pm$4.77 & 40.98$\pm$11.82 & 53.45$\pm$7.95 & 42.66$\pm$8.06 & 35.15$\pm$3.91 & 25.45$\pm$2.38 & 21.75$\pm$2.41 \\
      & Linear Probe  & 72.63$\pm$3.26 & 57.67$\pm$3.13 & 69.01$\pm$4.39 & 39.84$\pm$11.10 & 51.51$\pm$7.59 & 42.54$\pm$8.11 & 35.20$\pm$4.35 & 25.36$\pm$2.29 & 21.83$\pm$2.47 \\
      & GPF           & 66.71$\pm$5.07 & 52.12$\pm$5.86 & 65.62$\pm$6.40 & 53.02$\pm$7.57 & 63.80$\pm$7.44 & 47.13$\pm$7.67 & 34.90$\pm$3.25 & \underline{29.92$\pm$3.27} & 21.94$\pm$2.50 \\
      & GPF+          & 69.20$\pm$4.49 & 53.76$\pm$4.67 & 67.83$\pm$5.11 & 46.03$\pm$12.25 & 59.62$\pm$7.63 & 43.78$\pm$8.55 & 34.10$\pm$4.50 & 28.52$\pm$3.40 & \underline{22.37$\pm$1.83} \\
      & GPPT          & 69.23$\pm$4.10 & 58.75$\pm$3.49 & 66.75$\pm$5.34 & 39.37$\pm$12.30 & 50.20$\pm$9.18 & 36.86$\pm$8.48 & 34.77$\pm$3.46 & 24.90$\pm$2.18 & 21.90$\pm$1.88 \\
      & GraphPrompt   & 71.81$\pm$3.08 & 60.17$\pm$2.83 & 69.04$\pm$5.72 & 40.73$\pm$8.07 & 53.88$\pm$10.42 & 43.87$\pm$7.81 & \underline{35.25$\pm$4.13} & 25.53$\pm$2.29 & 22.13$\pm$2.14 \\
      & EdgePrompt    & \underline{72.75$\pm$3.28} & 58.64$\pm$3.95 & 69.13$\pm$4.58 & 40.00$\pm$11.34 & 54.23$\pm$11.13 & 43.43$\pm$7.90 & 34.42$\pm$3.21 & 25.59$\pm$2.58 & 21.64$\pm$2.09 \\
      & EdgePrompt+   & \textbf{72.81$\pm$3.70} & 59.70$\pm$2.50 & \underline{69.15$\pm$4.53} & 41.62$\pm$12.91 & 54.29$\pm$10.37 & 43.80$\pm$8.24 & 34.91$\pm$4.30 & 25.86$\pm$2.50 & 21.77$\pm$2.37 \\
      & All-in-one    & 72.05$\pm$3.64 & 57.13$\pm$4.22 & 66.40$\pm$5.65 & 52.35$\pm$12.25 & 63.86$\pm$7.31 & 43.20$\pm$12.09 & 34.83$\pm$3.80 & 25.61$\pm$2.08 & 20.49$\pm$2.39 \\
      & HS-GPPT        & 71.75$\pm$3.88 & 57.59$\pm$4.14 & 67.75$\pm$5.28 & 52.70$\pm$12.43 & 62.84$\pm$9.83 & 44.68$\pm$12.02 & 35.07$\pm$3.81 & 25.43$\pm$2.17 & 21.20$\pm$2.59 \\
      & ProNoG        & 67.00$\pm$4.44 & 51.97$\pm$6.05 & 66.06$\pm$6.42 & 49.94$\pm$10.50 & 60.43$\pm$8.52 & 44.07$\pm$6.36 & 34.77$\pm$3.79 & 28.95$\pm$2.96 & 21.27$\pm$3.03 \\
      & UniPrompt     & 65.32$\pm$3.27 & \underline{63.78$\pm$2.88} & 64.78$\pm$4.06 & \underline{63.17$\pm$8.07} & \underline{64.70$\pm$8.04} & \textbf{79.25$\pm$7.35} & 29.83$\pm$2.81 & 22.56$\pm$2.04 & 22.24$\pm$2.49 \\
      & DAPrompt      & 72.17$\pm$2.89 & 58.65$\pm$3.48 & 69.05$\pm$4.91 & 44.19$\pm$11.71 & 55.48$\pm$11.50 & 44.66$\pm$9.22 & 35.14$\pm$3.95 & 25.56$\pm$2.23 & 21.08$\pm$1.99 \\
      
      \cmidrule{2-11}
      & \cellcolor{gray!15}\textbf{MINT (Ours)} & \cellcolor{gray!15}71.85$\pm$3.02 & \cellcolor{gray!15}\textbf{64.12$\pm$2.90} & \cellcolor{gray!15}\textbf{69.21$\pm$3.84} & \cellcolor{gray!15}\textbf{64.48$\pm$5.61} & \cellcolor{gray!15}\textbf{69.97$\pm$6.89} & \cellcolor{gray!15}\underline{78.92$\pm$5.45} & \cellcolor{gray!15}\textbf{35.54$\pm$2.67} & \cellcolor{gray!15}\textbf{30.05$\pm$2.70} & \cellcolor{gray!15}\textbf{22.43$\pm$2.53}\\
      \bottomrule
    \end{tabular}%
  }
\end{table*}

\paragraph{GraphMAE2~\cite{hou2023graphmae2}}GraphMAE2 extends masked feature reconstruction with multi-view random re-mask decoding and latent representation prediction. Its pretrained encoder is frozen during the downstream prompt-adaptation experiments.

\begin{table*}[!t]
\centering
\caption{Hyperparameter settings for different models across nine datasets.}
\label{tab:hyperparams}
\small 
\resizebox{\textwidth}{!}{%
\begin{tabular}{l l ccc ccc ccc}
\toprule
\multirow{2}{*}{\textbf{Model}} & \multirow{2}{*}{\textbf{Dataset}} & \multicolumn{3}{c}{\textbf{1-Shot}} & \multicolumn{3}{c}{\textbf{3-Shot}} & \multicolumn{3}{c}{\textbf{5-Shot}} \\
\cmidrule(lr){3-5} \cmidrule(lr){6-8} \cmidrule(lr){9-11}
 & & \textbf{LR} & \textbf{Beta} & \textbf{k} & \textbf{LR} & \textbf{Beta} & \textbf{k} & \textbf{LR} & \textbf{Beta} & \textbf{k} \\
\midrule
\multirow{9}{*}{\textbf{GraphMAE}} 
 & Cora      & 0.001  & 0.015  & 3   & 0.001  & 0.035  & 3   & 0.0012 & 0.03   & 1   \\
 & CiteSeer  & 0.005  & 0.001  & 50  & 0.0008 & 0.05   & 75  & 0.001  & 0.005  & 50  \\
 & PubMed    & 0.005  & 0.0005 & 1   & 0.004  & 0.3    & 1   & 0.007  & 0.4    & 1   \\
 & Cornell   & 0.005  & 0.05   & 60  & 0.001  & 0.01   & 60  & 0.08   & 0.005  & 40  \\
 & Texas     & 0.01   & 0.001  & 80  & 0.01   & 0.001  & 60  & 0.07   & 0.008  & 55  \\
 & Wisconsin & 0.005  & 0.005  & 120 & 0.0005 & 0.001  & 80  & 0.0005 & 0.001  & 80  \\
 & Chameleon & 0.0005 & 0.01   & 160 & 0.0004 & 1.2    & 20  & 0.005  & 0.08   & 20  \\
 & Squirrel  & 0.05   & 0.005  & 40  & 0.007  & 0.08   & 35  & 0.01   & 0.008  & 35  \\
 & Actor     & 0.08   & 0.005  & 140 & 0.001  & 0.01   & 120 & 0.0015 & 0.2    & 120 \\
\midrule
\multirow{9}{*}{\textbf{GraphMAE2}} 
 & Cora      & 0.01   & 0.003  & 10  & 0.015  & 0.05   & 7   & 0.001  & 0.05   & 2   \\
 & CiteSeer  & 0.0015 & 0.008  & 60  & 0.0015 & 0.03   & 70  & 0.0015 & 0.003  & 110 \\
 & PubMed    & 0.037  & 0.001  & 8   & 0.05   & 0.05   & 2   & 0.01   & 0.05   & 2   \\
 & Cornell   & 0.0008 & 0.05   & 100 & 0.001  & 0.12   & 80  & 0.005  & 0.3    & 40  \\
 & Texas     & 0.001  & 0.001  & 120 & 0.01   & 0.01   & 110 & 0.001  & 0.1    & 120 \\
 & Wisconsin & 0.001  & 0.1    & 120 & 0.05   & 0.01   & 120 & 0.0003 & 0.1    & 160 \\
 & Chameleon & 0.01   & 0.001  & 200 & 0.002  & 0.05   & 240 & 0.005  & 0.005  & 160 \\
 & Squirrel  & 0.011  & 0.8    & 22  & 0.008  & 0.3    & 15  & 0.003  & 0.008  & 10  \\
 & Actor     & 0.008  & 0.01   & 240 & 0.05   & 0.0008 & 150 & 0.008  & 0.003  & 240 \\
\midrule
\multirow{9}{*}{\textbf{MaskGAE}} 
 & Cora      & 0.008  & 0.0035 & 11  & 0.001  & 0.03   & 1   & 0.0005 & 0.03   & 1   \\
 & CiteSeer  & 0.0006 & 0.04   & 55  & 0.0008 & 0.03   & 30  & 0.0005 & 0.00001& 50  \\
 & PubMed    & 0.01   & 0.0001 & 5   & 0.08   & 0.08   & 2   & 0.05   & 0.008  & 2   \\
 & Cornell   & 0.0013 & 0.1    & 70  & 0.007  & 0.12   & 35  & 0.008  & 0.08   & 45  \\
 & Texas     & 0.00095& 0.0035 & 82  & 0.005  & 0.05   & 120 & 0.005  & 0.2    & 120 \\
 & Wisconsin & 0.00032& 0.13   & 78  & 0.01   & 0.05   & 82  & 0.001  & 0.04   & 80  \\
 & Chameleon & 0.1    & 0.025  & 550 & 0.012  & 0.35   & 400 & 0.005  & 0.06   & 400 \\
 & Squirrel  & 1.2    & 15.0   & 2   & 1.5    & 50.0   & 1   & 1.8    & 50.0   & 1   \\
 & Actor     & 0.008  & 0.05   & 140 & 0.03   & 0.001  & 85  & 0.05   & 0.04   & 130 \\
\bottomrule
\end{tabular}}
\end{table*}

\paragraph{MaskGAE~\cite{li2023maskgae}}MaskGAE masks graph edges and pretrains a GNN encoder by reconstructing graph structure with edge- and degree-oriented decoders. The encoder remains frozen during prompt adaptation; the corresponding topology-compatible variants use binary graph replacement because this backbone lacks the scalar edge-weight interface used by Original MINT.

\begin{table*}[!t]
\centering
\caption{Robustness Evaluation Results Under Noise Edge Perturbation (Accuracy$_{\pm \text{Std}}$)}
\label{tab:robustness_noise}
\resizebox{\textwidth}{!}{
\begin{tabular}{l c c ccccc ccccc}
\toprule
\multirow{2}{*}{\textbf{Model}} & \multirow{2}{*}{\textbf{Shot}} & \multirow{2}{*}{\textbf{Noise}} & \multicolumn{5}{c}{\textbf{Homophilic / Small Heterophilic Datasets}} & \multicolumn{4}{c}{\textbf{Dense Heterophilic Datasets}} \\
\cmidrule(lr){4-8} \cmidrule(lr){9-12}
 &  &  & \textbf{Cora} & \textbf{CiteSeer} & \textbf{PubMed} & \textbf{Cornell} & \textbf{Texas} & \textbf{Wisconsin} & \textbf{Chameleon} & \textbf{Squirrel} & \textbf{Actor} \\
\midrule
\multirow{18}{*}{\textbf{GraphMAE}} 
& \multirow{6}{*}{1} 
& 0.0 & $0.5346_{\text{\tiny $\pm$0.0736}}$ & $0.5208_{\text{\tiny $\pm$0.0662}}$ & $0.5818_{\text{\tiny $\pm$0.0720}}$ & $0.5073_{\text{\tiny $\pm$0.1382}}$ & $0.5339_{\text{\tiny $\pm$0.2014}}$ & $0.6622_{\text{\tiny $\pm$0.1485}}$ & $0.2514_{\text{\tiny $\pm$0.0293}}$ & $0.2183_{\text{\tiny $\pm$0.0200}}$ & $0.2273_{\text{\tiny $\pm$0.0326}}$ \\
& & 0.2 & $0.4794_{\text{\tiny $\pm$0.0868}}$ & $0.5113_{\text{\tiny $\pm$0.0684}}$ & $0.4720_{\text{\tiny $\pm$0.0505}}$ & $0.5039_{\text{\tiny $\pm$0.1544}}$ & $0.5151_{\text{\tiny $\pm$0.2106}}$ & $0.6652_{\text{\tiny $\pm$0.1370}}$ & $0.2404_{\text{\tiny $\pm$0.0271}}$ & $0.2013_{\text{\tiny $\pm$0.0107}}$ & $0.2269_{\text{\tiny $\pm$0.0290}}$ \\
& & 0.4 & $0.4363_{\text{\tiny $\pm$0.0646}}$ & $0.5100_{\text{\tiny $\pm$0.0665}}$ & $0.4229_{\text{\tiny $\pm$0.0522}}$ & $0.4930_{\text{\tiny $\pm$0.1454}}$ & $0.5116_{\text{\tiny $\pm$0.2108}}$ & $0.6834_{\text{\tiny $\pm$0.1334}}$ & $0.2369_{\text{\tiny $\pm$0.0255}}$ & $0.1991_{\text{\tiny $\pm$0.0062}}$ & $0.2198_{\text{\tiny $\pm$0.0344}}$ \\
& & 0.6 & $0.4058_{\text{\tiny $\pm$0.0763}}$ & $0.5006_{\text{\tiny $\pm$0.0732}}$ & $0.3805_{\text{\tiny $\pm$0.0660}}$ & $0.4826_{\text{\tiny $\pm$0.1530}}$ & $0.4865_{\text{\tiny $\pm$0.2129}}$ & $0.6768_{\text{\tiny $\pm$0.1240}}$ & $0.2389_{\text{\tiny $\pm$0.0199}}$ & $0.2018_{\text{\tiny $\pm$0.0050}}$ & $0.2212_{\text{\tiny $\pm$0.0317}}$ \\
& & 0.8 & $0.3714_{\text{\tiny $\pm$0.0587}}$ & $0.4911_{\text{\tiny $\pm$0.0696}}$ & $0.3636_{\text{\tiny $\pm$0.0654}}$ & $0.4804_{\text{\tiny $\pm$0.1484}}$ & $0.4630_{\text{\tiny $\pm$0.2177}}$ & $0.6533_{\text{\tiny $\pm$0.1376}}$ & $0.2391_{\text{\tiny $\pm$0.0203}}$ & $0.2016_{\text{\tiny $\pm$0.0047}}$ & $0.2229_{\text{\tiny $\pm$0.0289}}$ \\
& & 1.0 & $0.3488_{\text{\tiny $\pm$0.0608}}$ & $0.4885_{\text{\tiny $\pm$0.0602}}$ & $0.3425_{\text{\tiny $\pm$0.0613}}$ & $0.5143_{\text{\tiny $\pm$0.1321}}$ & $0.4696_{\text{\tiny $\pm$0.2258}}$ & $0.6615_{\text{\tiny $\pm$0.1418}}$ & $0.2352_{\text{\tiny $\pm$0.0253}}$ & $0.2004_{\text{\tiny $\pm$0.0030}}$ & $0.2206_{\text{\tiny $\pm$0.0246}}$ \\
\cmidrule(lr){2-12}
& \multirow{6}{*}{3} 
& 0.0 & $0.6690_{\text{\tiny $\pm$0.0418}}$ & $0.6154_{\text{\tiny $\pm$0.0487}}$ & $0.6610_{\text{\tiny $\pm$0.0545}}$ & $0.6063_{\text{\tiny $\pm$0.0672}}$ & $0.6008_{\text{\tiny $\pm$0.1304}}$ & $0.7656_{\text{\tiny $\pm$0.0940}}$ & $0.2757_{\text{\tiny $\pm$0.0256}}$ & $0.2277_{\text{\tiny $\pm$0.0178}}$ & $0.2309_{\text{\tiny $\pm$0.0252}}$ \\
& & 0.2 & $0.6150_{\text{\tiny $\pm$0.0450}}$ & $0.6103_{\text{\tiny $\pm$0.0526}}$ & $0.5465_{\text{\tiny $\pm$0.0528}}$ & $0.5988_{\text{\tiny $\pm$0.0730}}$ & $0.6092_{\text{\tiny $\pm$0.1402}}$ & $0.7643_{\text{\tiny $\pm$0.0864}}$ & $0.2425_{\text{\tiny $\pm$0.0202}}$ & $0.2080_{\text{\tiny $\pm$0.0106}}$ & $0.2246_{\text{\tiny $\pm$0.0233}}$ \\
& & 0.4 & $0.5694_{\text{\tiny $\pm$0.0505}}$ & $0.6039_{\text{\tiny $\pm$0.0669}}$ & $0.4703_{\text{\tiny $\pm$0.0628}}$ & $0.6012_{\text{\tiny $\pm$0.0791}}$ & $0.5903_{\text{\tiny $\pm$0.1230}}$ & $0.7511_{\text{\tiny $\pm$0.0925}}$ & $0.2367_{\text{\tiny $\pm$0.0199}}$ & $0.2069_{\text{\tiny $\pm$0.0089}}$ & $0.2210_{\text{\tiny $\pm$0.0268}}$ \\
& & 0.6 & $0.5181_{\text{\tiny $\pm$0.0518}}$ & $0.6002_{\text{\tiny $\pm$0.0661}}$ & $0.4351_{\text{\tiny $\pm$0.0510}}$ & $0.5920_{\text{\tiny $\pm$0.0942}}$ & $0.5497_{\text{\tiny $\pm$0.1789}}$ & $0.7519_{\text{\tiny $\pm$0.0846}}$ & $0.2305_{\text{\tiny $\pm$0.0245}}$ & $0.2017_{\text{\tiny $\pm$0.0073}}$ & $0.2216_{\text{\tiny $\pm$0.0265}}$ \\
& & 0.8 & $0.4877_{\text{\tiny $\pm$0.0518}}$ & $0.5969_{\text{\tiny $\pm$0.0634}}$ & $0.3999_{\text{\tiny $\pm$0.0473}}$ & $0.5920_{\text{\tiny $\pm$0.0981}}$ & $0.5692_{\text{\tiny $\pm$0.1545}}$ & $0.7356_{\text{\tiny $\pm$0.0837}}$ & $0.2301_{\text{\tiny $\pm$0.0228}}$ & $0.2010_{\text{\tiny $\pm$0.0079}}$ & $0.2180_{\text{\tiny $\pm$0.0257}}$ \\
& & 1.0 & $0.4574_{\text{\tiny $\pm$0.0517}}$ & $0.5959_{\text{\tiny $\pm$0.0652}}$ & $0.3868_{\text{\tiny $\pm$0.0450}}$ & $0.5937_{\text{\tiny $\pm$0.0915}}$ & $0.5539_{\text{\tiny $\pm$0.1543}}$ & $0.7300_{\text{\tiny $\pm$0.0844}}$ & $0.2232_{\text{\tiny $\pm$0.0218}}$ & $0.2029_{\text{\tiny $\pm$0.0045}}$ & $0.2150_{\text{\tiny $\pm$0.0288}}$ \\
\cmidrule(lr){2-12}
& \multirow{6}{*}{5} 
& 0.0 & $0.7332_{\text{\tiny $\pm$0.0307}}$ & $0.6622_{\text{\tiny $\pm$0.0257}}$ & $0.6950_{\text{\tiny $\pm$0.0449}}$ & $0.6679_{\text{\tiny $\pm$0.0833}}$ & $0.6435_{\text{\tiny $\pm$0.1255}}$ & $0.8004_{\text{\tiny $\pm$0.0552}}$ & $0.2914_{\text{\tiny $\pm$0.0236}}$ & $0.2316_{\text{\tiny $\pm$0.0184}}$ & $0.2405_{\text{\tiny $\pm$0.0275}}$ \\
& & 0.2 & $0.6662_{\text{\tiny $\pm$0.0366}}$ & $0.6587_{\text{\tiny $\pm$0.0265}}$ & $0.5808_{\text{\tiny $\pm$0.0393}}$ & $0.6422_{\text{\tiny $\pm$0.0861}}$ & $0.6330_{\text{\tiny $\pm$0.1382}}$ & $0.7929_{\text{\tiny $\pm$0.0608}}$ & $0.2594_{\text{\tiny $\pm$0.0229}}$ & $0.2082_{\text{\tiny $\pm$0.0104}}$ & $0.2426_{\text{\tiny $\pm$0.0183}}$ \\
& & 0.4 & $0.5988_{\text{\tiny $\pm$0.0320}}$ & $0.6503_{\text{\tiny $\pm$0.0318}}$ & $0.5041_{\text{\tiny $\pm$0.0494}}$ & $0.6416_{\text{\tiny $\pm$0.1384}}$ & $0.6049_{\text{\tiny $\pm$0.1273}}$ & $0.7871_{\text{\tiny $\pm$0.0614}}$ & $0.2494_{\text{\tiny $\pm$0.0210}}$ & $0.2046_{\text{\tiny $\pm$0.0088}}$ & $0.2348_{\text{\tiny $\pm$0.0202}}$ \\
& & 0.6 & $0.5451_{\text{\tiny $\pm$0.0362}}$ & $0.6452_{\text{\tiny $\pm$0.0298}}$ & $0.4456_{\text{\tiny $\pm$0.0384}}$ & $0.6130_{\text{\tiny $\pm$0.1288}}$ & $0.5751_{\text{\tiny $\pm$0.1221}}$ & $0.7859_{\text{\tiny $\pm$0.0595}}$ & $0.2388_{\text{\tiny $\pm$0.0225}}$ & $0.2029_{\text{\tiny $\pm$0.0063}}$ & $0.2337_{\text{\tiny $\pm$0.0226}}$ \\
& & 0.8 & $0.4908_{\text{\tiny $\pm$0.0403}}$ & $0.6467_{\text{\tiny $\pm$0.0300}}$ & $0.4250_{\text{\tiny $\pm$0.0506}}$ & $0.6232_{\text{\tiny $\pm$0.0950}}$ & $0.5672_{\text{\tiny $\pm$0.1666}}$ & $0.7705_{\text{\tiny $\pm$0.0641}}$ & $0.2367_{\text{\tiny $\pm$0.0195}}$ & $0.2012_{\text{\tiny $\pm$0.0067}}$ & $0.2333_{\text{\tiny $\pm$0.0183}}$ \\
& & 1.0 & $0.4584_{\text{\tiny $\pm$0.0381}}$ & $0.6416_{\text{\tiny $\pm$0.0306}}$ & $0.4034_{\text{\tiny $\pm$0.0414}}$ & $0.6019_{\text{\tiny $\pm$0.1080}}$ & $0.5919_{\text{\tiny $\pm$0.1559}}$ & $0.7699_{\text{\tiny $\pm$0.0644}}$ & $0.2270_{\text{\tiny $\pm$0.0221}}$ & $0.2026_{\text{\tiny $\pm$0.0046}}$ & $0.2317_{\text{\tiny $\pm$0.0161}}$ \\
\midrule
\multirow{18}{*}{\textbf{GraphMAE2}} 
& \multirow{6}{*}{1} 
& 0.0 & $0.4445_{\text{\tiny $\pm$0.0876}}$ & $0.4934_{\text{\tiny $\pm$0.0730}}$ & $0.5388_{\text{\tiny $\pm$0.0814}}$ & $0.5076_{\text{\tiny $\pm$0.1479}}$ & $0.4870_{\text{\tiny $\pm$0.2065}}$ & $0.5683_{\text{\tiny $\pm$0.1676}}$ & $0.2543_{\text{\tiny $\pm$0.0241}}$ & $0.2204_{\text{\tiny $\pm$0.0176}}$ & $0.2234_{\text{\tiny $\pm$0.0254}}$ \\
& & 0.2 & $0.4087_{\text{\tiny $\pm$0.0946}}$ & $0.4876_{\text{\tiny $\pm$0.0711}}$ & $0.4882_{\text{\tiny $\pm$0.0710}}$ & $0.5202_{\text{\tiny $\pm$0.1263}}$ & $0.3474_{\text{\tiny $\pm$0.1834}}$ & $0.5185_{\text{\tiny $\pm$0.1731}}$ & $0.2380_{\text{\tiny $\pm$0.0236}}$ & $0.2047_{\text{\tiny $\pm$0.0102}}$ & $0.2169_{\text{\tiny $\pm$0.0301}}$ \\
& & 0.4 & $0.3726_{\text{\tiny $\pm$0.0884}}$ & $0.4816_{\text{\tiny $\pm$0.0710}}$ & $0.4597_{\text{\tiny $\pm$0.0592}}$ & $0.5123_{\text{\tiny $\pm$0.1403}}$ & $0.3638_{\text{\tiny $\pm$0.1709}}$ & $0.5291_{\text{\tiny $\pm$0.1658}}$ & $0.2312_{\text{\tiny $\pm$0.0233}}$ & $0.2040_{\text{\tiny $\pm$0.0094}}$ & $0.2176_{\text{\tiny $\pm$0.0292}}$ \\
& & 0.6 & $0.3471_{\text{\tiny $\pm$0.0725}}$ & $0.4760_{\text{\tiny $\pm$0.0768}}$ & $0.4318_{\text{\tiny $\pm$0.0694}}$ & $0.5011_{\text{\tiny $\pm$0.1445}}$ & $0.3878_{\text{\tiny $\pm$0.1857}}$ & $0.5422_{\text{\tiny $\pm$0.1595}}$ & $0.2288_{\text{\tiny $\pm$0.0182}}$ & $0.2009_{\text{\tiny $\pm$0.0063}}$ & $0.2149_{\text{\tiny $\pm$0.0271}}$ \\
& & 0.8 & $0.3354_{\text{\tiny $\pm$0.0855}}$ & $0.4768_{\text{\tiny $\pm$0.0725}}$ & $0.4343_{\text{\tiny $\pm$0.0477}}$ & $0.5025_{\text{\tiny $\pm$0.1386}}$ & $0.3577_{\text{\tiny $\pm$0.1766}}$ & $0.5333_{\text{\tiny $\pm$0.1582}}$ & $0.2292_{\text{\tiny $\pm$0.0211}}$ & $0.1997_{\text{\tiny $\pm$0.0053}}$ & $0.2142_{\text{\tiny $\pm$0.0283}}$ \\
& & 1.0 & $0.3145_{\text{\tiny $\pm$0.0776}}$ & $0.4725_{\text{\tiny $\pm$0.0738}}$ & $0.4001_{\text{\tiny $\pm$0.0500}}$ & $0.5000_{\text{\tiny $\pm$0.1325}}$ & $0.3566_{\text{\tiny $\pm$0.1728}}$ & $0.5292_{\text{\tiny $\pm$0.1567}}$ & $0.2305_{\text{\tiny $\pm$0.0223}}$ & $0.2000_{\text{\tiny $\pm$0.0052}}$ & $0.2151_{\text{\tiny $\pm$0.0294}}$ \\
\cmidrule(lr){2-12}
& \multirow{6}{*}{3} 
& 0.0 & $0.6015_{\text{\tiny $\pm$0.0531}}$ & $0.5780_{\text{\tiny $\pm$0.0555}}$ & $0.6282_{\text{\tiny $\pm$0.0714}}$ & $0.6000_{\text{\tiny $\pm$0.0889}}$ & $0.6233_{\text{\tiny $\pm$0.1552}}$ & $0.6617_{\text{\tiny $\pm$0.1222}}$ & $0.2854_{\text{\tiny $\pm$0.0233}}$ & $0.2246_{\text{\tiny $\pm$0.0170}}$ & $0.2363_{\text{\tiny $\pm$0.0253}}$ \\
& & 0.2 & $0.5605_{\text{\tiny $\pm$0.0538}}$ & $0.5683_{\text{\tiny $\pm$0.0624}}$ & $0.5342_{\text{\tiny $\pm$0.0624}}$ & $0.5952_{\text{\tiny $\pm$0.0888}}$ & $0.5886_{\text{\tiny $\pm$0.1434}}$ & $0.5944_{\text{\tiny $\pm$0.1618}}$ & $0.2739_{\text{\tiny $\pm$0.0261}}$ & $0.2038_{\text{\tiny $\pm$0.0086}}$ & $0.2204_{\text{\tiny $\pm$0.0247}}$ \\
& & 0.4 & $0.5500_{\text{\tiny $\pm$0.0510}}$ & $0.5670_{\text{\tiny $\pm$0.0623}}$ & $0.4580_{\text{\tiny $\pm$0.0773}}$ & $0.5905_{\text{\tiny $\pm$0.0823}}$ & $0.5836_{\text{\tiny $\pm$0.1484}}$ & $0.5804_{\text{\tiny $\pm$0.1484}}$ & $0.2702_{\text{\tiny $\pm$0.0225}}$ & $0.2012_{\text{\tiny $\pm$0.0051}}$ & $0.2172_{\text{\tiny $\pm$0.0252}}$ \\
& & 0.6 & $0.5135_{\text{\tiny $\pm$0.0540}}$ & $0.5608_{\text{\tiny $\pm$0.0613}}$ & $0.4228_{\text{\tiny $\pm$0.0640}}$ & $0.5798_{\text{\tiny $\pm$0.1023}}$ & $0.5711_{\text{\tiny $\pm$0.1459}}$ & $0.5987_{\text{\tiny $\pm$0.1578}}$ & $0.2615_{\text{\tiny $\pm$0.0261}}$ & $0.2008_{\text{\tiny $\pm$0.0067}}$ & $0.2162_{\text{\tiny $\pm$0.0221}}$ \\
& & 0.8 & $0.4933_{\text{\tiny $\pm$0.0397}}$ & $0.5573_{\text{\tiny $\pm$0.0639}}$ & $0.4163_{\text{\tiny $\pm$0.0694}}$ & $0.5833_{\text{\tiny $\pm$0.0864}}$ & $0.5556_{\text{\tiny $\pm$0.1574}}$ & $0.6011_{\text{\tiny $\pm$0.1548}}$ & $0.2559_{\text{\tiny $\pm$0.0251}}$ & $0.2021_{\text{\tiny $\pm$0.0067}}$ & $0.2128_{\text{\tiny $\pm$0.0269}}$ \\
& & 1.0 & $0.4575_{\text{\tiny $\pm$0.0475}}$ & $0.5597_{\text{\tiny $\pm$0.0716}}$ & $0.4052_{\text{\tiny $\pm$0.0686}}$ & $0.5869_{\text{\tiny $\pm$0.0862}}$ & $0.5522_{\text{\tiny $\pm$0.1594}}$ & $0.6165_{\text{\tiny $\pm$0.1478}}$ & $0.2515_{\text{\tiny $\pm$0.0233}}$ & $0.2017_{\text{\tiny $\pm$0.0054}}$ & $0.2200_{\text{\tiny $\pm$0.0319}}$ \\
\cmidrule(lr){2-12}
& \multirow{6}{*}{5} 
& 0.0 & $0.6625_{\text{\tiny $\pm$0.0463}}$ & $0.6235_{\text{\tiny $\pm$0.0296}}$ & $0.6724_{\text{\tiny $\pm$0.0419}}$ & $0.6346_{\text{\tiny $\pm$0.0695}}$ & $0.6594_{\text{\tiny $\pm$0.1040}}$ & $0.7052_{\text{\tiny $\pm$0.0463}}$ & $0.3089_{\text{\tiny $\pm$0.0191}}$ & $0.2275_{\text{\tiny $\pm$0.0170}}$ & $0.2405_{\text{\tiny $\pm$0.0284}}$ \\
& & 0.2 & $0.6034_{\text{\tiny $\pm$0.0534}}$ & $0.6188_{\text{\tiny $\pm$0.0368}}$ & $0.5909_{\text{\tiny $\pm$0.0470}}$ & $0.6292_{\text{\tiny $\pm$0.0717}}$ & $0.5820_{\text{\tiny $\pm$0.1365}}$ & $0.6848_{\text{\tiny $\pm$0.0591}}$ & $0.2970_{\text{\tiny $\pm$0.0200}}$ & $0.2053_{\text{\tiny $\pm$0.0074}}$ & $0.2322_{\text{\tiny $\pm$0.0321}}$ \\
& & 0.4 & $0.5667_{\text{\tiny $\pm$0.0404}}$ & $0.6135_{\text{\tiny $\pm$0.0363}}$ & $0.5291_{\text{\tiny $\pm$0.0547}}$ & $0.6152_{\text{\tiny $\pm$0.0655}}$ & $0.5890_{\text{\tiny $\pm$0.1444}}$ & $0.6850_{\text{\tiny $\pm$0.0661}}$ & $0.2927_{\text{\tiny $\pm$0.0177}}$ & $0.2028_{\text{\tiny $\pm$0.0070}}$ & $0.2285_{\text{\tiny $\pm$0.0294}}$ \\
& & 0.6 & $0.5261_{\text{\tiny $\pm$0.0372}}$ & $0.6095_{\text{\tiny $\pm$0.0399}}$ & $0.4970_{\text{\tiny $\pm$0.0452}}$ & $0.6041_{\text{\tiny $\pm$0.0651}}$ & $0.5838_{\text{\tiny $\pm$0.1335}}$ & $0.6944_{\text{\tiny $\pm$0.0673}}$ & $0.2857_{\text{\tiny $\pm$0.0210}}$ & $0.2032_{\text{\tiny $\pm$0.0061}}$ & $0.2263_{\text{\tiny $\pm$0.0358}}$ \\
& & 0.8 & $0.4803_{\text{\tiny $\pm$0.0455}}$ & $0.6166_{\text{\tiny $\pm$0.0312}}$ & $0.4683_{\text{\tiny $\pm$0.0461}}$ & $0.5803_{\text{\tiny $\pm$0.0645}}$ & $0.6003_{\text{\tiny $\pm$0.1345}}$ & $0.6784_{\text{\tiny $\pm$0.0718}}$ & $0.2774_{\text{\tiny $\pm$0.0171}}$ & $0.2030_{\text{\tiny $\pm$0.0050}}$ & $0.2249_{\text{\tiny $\pm$0.0316}}$ \\
& & 1.0 & $0.4528_{\text{\tiny $\pm$0.0411}}$ & $0.6121_{\text{\tiny $\pm$0.0348}}$ & $0.4340_{\text{\tiny $\pm$0.0414}}$ & $0.5822_{\text{\tiny $\pm$0.0721}}$ & $0.6096_{\text{\tiny $\pm$0.1312}}$ & $0.6811_{\text{\tiny $\pm$0.0670}}$ & $0.2763_{\text{\tiny $\pm$0.0200}}$ & $0.2072_{\text{\tiny $\pm$0.0060}}$ & $0.2257_{\text{\tiny $\pm$0.0337}}$ \\
\midrule
\multirow{18}{*}{\textbf{MaskGAE}} 
& \multirow{6}{*}{1} 
& 0.0 & $0.5178_{\text{\tiny $\pm$0.0997}}$ & $0.4981_{\text{\tiny $\pm$0.0711}}$ & $0.5511_{\text{\tiny $\pm$0.0848}}$ & $0.5510_{\text{\tiny $\pm$0.0697}}$ & $0.5585_{\text{\tiny $\pm$0.1608}}$ & $0.6681_{\text{\tiny $\pm$0.1528}}$ & $0.3365_{\text{\tiny $\pm$0.0371}}$ & $0.2726_{\text{\tiny $\pm$0.0319}}$ & $0.2231_{\text{\tiny $\pm$0.0182}}$ \\
& & 0.2 & $0.5065_{\text{\tiny $\pm$0.0971}}$ & $0.4939_{\text{\tiny $\pm$0.0750}}$ & $0.5284_{\text{\tiny $\pm$0.0821}}$ & $0.5501_{\text{\tiny $\pm$0.0663}}$ & $0.5196_{\text{\tiny $\pm$0.1646}}$ & $0.6422_{\text{\tiny $\pm$0.1675}}$ & $0.3272_{\text{\tiny $\pm$0.0392}}$ & $0.2552_{\text{\tiny $\pm$0.0260}}$ & $0.2224_{\text{\tiny $\pm$0.0182}}$ \\
& & 0.4 & $0.4966_{\text{\tiny $\pm$0.0969}}$ & $0.4871_{\text{\tiny $\pm$0.0726}}$ & $0.5143_{\text{\tiny $\pm$0.0791}}$ & $0.5244_{\text{\tiny $\pm$0.1174}}$ & $0.5251_{\text{\tiny $\pm$0.1591}}$ & $0.6403_{\text{\tiny $\pm$0.1602}}$ & $0.3307_{\text{\tiny $\pm$0.0390}}$ & $0.2433_{\text{\tiny $\pm$0.0338}}$ & $0.2209_{\text{\tiny $\pm$0.0199}}$ \\
& & 0.6 & $0.4869_{\text{\tiny $\pm$0.0957}}$ & $0.4872_{\text{\tiny $\pm$0.0698}}$ & $0.4908_{\text{\tiny $\pm$0.0785}}$ & $0.5303_{\text{\tiny $\pm$0.1093}}$ & $0.5399_{\text{\tiny $\pm$0.1554}}$ & $0.6373_{\text{\tiny $\pm$0.1576}}$ & $0.3261_{\text{\tiny $\pm$0.0450}}$ & $0.2324_{\text{\tiny $\pm$0.0375}}$ & $0.2188_{\text{\tiny $\pm$0.0220}}$ \\
& & 0.8 & $0.4705_{\text{\tiny $\pm$0.0985}}$ & $0.4847_{\text{\tiny $\pm$0.0699}}$ & $0.4686_{\text{\tiny $\pm$0.0757}}$ & $0.5134_{\text{\tiny $\pm$0.1204}}$ & $0.5474_{\text{\tiny $\pm$0.1455}}$ & $0.6351_{\text{\tiny $\pm$0.1559}}$ & $0.3155_{\text{\tiny $\pm$0.0499}}$ & $0.2367_{\text{\tiny $\pm$0.0300}}$ & $0.2183_{\text{\tiny $\pm$0.0223}}$ \\
& & 1.0 & $0.4556_{\text{\tiny $\pm$0.0970}}$ & $0.4831_{\text{\tiny $\pm$0.0723}}$ & $0.4601_{\text{\tiny $\pm$0.0731}}$ & $0.5017_{\text{\tiny $\pm$0.1187}}$ & $0.5233_{\text{\tiny $\pm$0.1564}}$ & $0.6365_{\text{\tiny $\pm$0.1619}}$ & $0.3285_{\text{\tiny $\pm$0.0367}}$ & $0.2290_{\text{\tiny $\pm$0.0375}}$ & $0.2188_{\text{\tiny $\pm$0.0238}}$ \\
\cmidrule(lr){2-12}
& \multirow{6}{*}{3} 
& 0.0 & $0.6719_{\text{\tiny $\pm$0.0418}}$ & $0.6028_{\text{\tiny $\pm$0.0394}}$ & $0.6454_{\text{\tiny $\pm$0.0571}}$ & $0.6161_{\text{\tiny $\pm$0.1004}}$ & $0.6806_{\text{\tiny $\pm$0.0749}}$ & $0.7589_{\text{\tiny $\pm$0.0431}}$ & $0.3565_{\text{\tiny $\pm$0.0263}}$ & $0.2917_{\text{\tiny $\pm$0.0275}}$ & $0.2211_{\text{\tiny $\pm$0.0283}}$ \\
& & 0.2 & $0.6256_{\text{\tiny $\pm$0.0479}}$ & $0.5964_{\text{\tiny $\pm$0.0351}}$ & $0.5764_{\text{\tiny $\pm$0.0595}}$ & $0.6027_{\text{\tiny $\pm$0.1023}}$ & $0.6619_{\text{\tiny $\pm$0.0818}}$ & $0.7502_{\text{\tiny $\pm$0.0551}}$ & $0.3496_{\text{\tiny $\pm$0.0281}}$ & $0.2644_{\text{\tiny $\pm$0.0308}}$ & $0.2199_{\text{\tiny $\pm$0.0251}}$ \\
& & 0.4 & $0.5810_{\text{\tiny $\pm$0.0486}}$ & $0.5877_{\text{\tiny $\pm$0.0421}}$ & $0.5399_{\text{\tiny $\pm$0.0531}}$ & $0.6092_{\text{\tiny $\pm$0.0990}}$ & $0.6617_{\text{\tiny $\pm$0.0883}}$ & $0.7454_{\text{\tiny $\pm$0.0597}}$ & $0.3460_{\text{\tiny $\pm$0.0270}}$ & $0.2517_{\text{\tiny $\pm$0.0332}}$ & $0.2183_{\text{\tiny $\pm$0.0252}}$ \\
& & 0.6 & $0.5439_{\text{\tiny $\pm$0.0498}}$ & $0.5826_{\text{\tiny $\pm$0.0486}}$ & $0.5050_{\text{\tiny $\pm$0.0538}}$ & $0.5976_{\text{\tiny $\pm$0.1036}}$ & $0.6728_{\text{\tiny $\pm$0.0837}}$ & $0.7424_{\text{\tiny $\pm$0.0614}}$ & $0.3431_{\text{\tiny $\pm$0.0283}}$ & $0.2572_{\text{\tiny $\pm$0.0246}}$ & $0.2175_{\text{\tiny $\pm$0.0263}}$ \\
& & 0.8 & $0.5020_{\text{\tiny $\pm$0.0542}}$ & $0.5815_{\text{\tiny $\pm$0.0545}}$ & $0.4659_{\text{\tiny $\pm$0.0582}}$ & $0.5943_{\text{\tiny $\pm$0.1036}}$ & $0.6622_{\text{\tiny $\pm$0.0935}}$ & $0.7391_{\text{\tiny $\pm$0.0642}}$ & $0.3443_{\text{\tiny $\pm$0.0254}}$ & $0.2526_{\text{\tiny $\pm$0.0308}}$ & $0.2170_{\text{\tiny $\pm$0.0248}}$ \\
& & 1.0 & $0.4644_{\text{\tiny $\pm$0.0546}}$ & $0.5810_{\text{\tiny $\pm$0.0548}}$ & $0.4379_{\text{\tiny $\pm$0.0607}}$ & $0.5804_{\text{\tiny $\pm$0.1093}}$ & $0.6622_{\text{\tiny $\pm$0.0924}}$ & $0.7396_{\text{\tiny $\pm$0.0682}}$ & $0.3335_{\text{\tiny $\pm$0.0268}}$ & $0.2511_{\text{\tiny $\pm$0.0319}}$ & $0.2194_{\text{\tiny $\pm$0.0248}}$ \\
\cmidrule(lr){2-12}
& \multirow{6}{*}{5} 
& 0.0 & $0.7185_{\text{\tiny $\pm$0.0302}}$ & $0.6412_{\text{\tiny $\pm$0.0290}}$ & $0.6921_{\text{\tiny $\pm$0.0384}}$ & $0.6448_{\text{\tiny $\pm$0.0561}}$ & $0.6997_{\text{\tiny $\pm$0.0689}}$ & $0.7892_{\text{\tiny $\pm$0.0545}}$ & $0.3554_{\text{\tiny $\pm$0.0267}}$ & $0.3005_{\text{\tiny $\pm$0.0270}}$ & $0.2243_{\text{\tiny $\pm$0.0253}}$ \\
& & 0.2 & $0.6735_{\text{\tiny $\pm$0.0392}}$ & $0.6346_{\text{\tiny $\pm$0.0295}}$ & $0.6319_{\text{\tiny $\pm$0.0443}}$ & $0.6505_{\text{\tiny $\pm$0.0682}}$ & $0.7049_{\text{\tiny $\pm$0.0714}}$ & $0.7807_{\text{\tiny $\pm$0.0615}}$ & $0.3515_{\text{\tiny $\pm$0.0299}}$ & $0.2765_{\text{\tiny $\pm$0.0308}}$ & $0.2247_{\text{\tiny $\pm$0.0242}}$ \\
& & 0.4 & $0.6316_{\text{\tiny $\pm$0.0382}}$ & $0.6338_{\text{\tiny $\pm$0.0263}}$ & $0.5704_{\text{\tiny $\pm$0.0449}}$ & $0.6378_{\text{\tiny $\pm$0.0756}}$ & $0.6971_{\text{\tiny $\pm$0.0783}}$ & $0.7763_{\text{\tiny $\pm$0.0632}}$ & $0.3500_{\text{\tiny $\pm$0.0297}}$ & $0.2691_{\text{\tiny $\pm$0.0308}}$ & $0.2203_{\text{\tiny $\pm$0.0280}}$ \\
& & 0.6 & $0.5916_{\text{\tiny $\pm$0.0361}}$ & $0.6313_{\text{\tiny $\pm$0.0325}}$ & $0.5384_{\text{\tiny $\pm$0.0442}}$ & $0.6390_{\text{\tiny $\pm$0.0736}}$ & $0.6849_{\text{\tiny $\pm$0.0736}}$ & $0.7823_{\text{\tiny $\pm$0.0580}}$ & $0.3458_{\text{\tiny $\pm$0.0276}}$ & $0.2617_{\text{\tiny $\pm$0.0335}}$ & $0.2220_{\text{\tiny $\pm$0.0266}}$ \\
& & 0.8 & $0.5469_{\text{\tiny $\pm$0.0387}}$ & $0.6335_{\text{\tiny $\pm$0.0285}}$ & $0.4902_{\text{\tiny $\pm$0.0423}}$ & $0.6359_{\text{\tiny $\pm$0.0555}}$ & $0.6742_{\text{\tiny $\pm$0.0910}}$ & $0.7846_{\text{\tiny $\pm$0.0527}}$ & $0.3465_{\text{\tiny $\pm$0.0273}}$ & $0.2559_{\text{\tiny $\pm$0.0256}}$ & $0.2200_{\text{\tiny $\pm$0.0274}}$ \\
& & 1.0 & $0.5116_{\text{\tiny $\pm$0.0381}}$ & $0.6313_{\text{\tiny $\pm$0.0271}}$ & $0.4522_{\text{\tiny $\pm$0.0443}}$ & $0.6375_{\text{\tiny $\pm$0.0574}}$ & $0.6745_{\text{\tiny $\pm$0.0766}}$ & $0.7692_{\text{\tiny $\pm$0.0637}}$ & $0.3422_{\text{\tiny $\pm$0.0278}}$ & $0.2517_{\text{\tiny $\pm$0.0220}}$ & $0.2229_{\text{\tiny $\pm$0.0250}}$ \\
\bottomrule
\end{tabular}
}
\end{table*}

\begin{table*}[!t]
\centering
\caption{Robustness Evaluation Results Under Edge Drop Perturbation (Accuracy$_{\pm \text{Std}}$)}
\label{tab:robustness_edge_drop}
\resizebox{\textwidth}{!}{
\begin{tabular}{l c c ccccc ccccc}
\toprule
\multirow{2}{*}{\textbf{Model}} & \multirow{2}{*}{\textbf{Shot}} & \multirow{2}{*}{\textbf{Drop}} & \multicolumn{5}{c}{\textbf{Homophilic / Small Heterophilic Datasets}} & \multicolumn{4}{c}{\textbf{Dense Heterophilic Datasets}} \\
\cmidrule(lr){4-8} \cmidrule(lr){9-12}
 &  &  & \textbf{Cora} & \textbf{CiteSeer} & \textbf{PubMed} & \textbf{Cornell} & \textbf{Texas} & \textbf{Wisconsin} & \textbf{Chameleon} & \textbf{Squirrel} & \textbf{Actor} \\
\midrule
\multirow{15}{*}{\textbf{GraphMAE}} 
& \multirow{5}{*}{1} 
& 0.0 & $0.5346_{\text{\tiny $\pm$0.0736}}$ & $0.5208_{\text{\tiny $\pm$0.0662}}$ & $0.5818_{\text{\tiny $\pm$0.0720}}$ & $0.5073_{\text{\tiny $\pm$0.1382}}$ & $0.5339_{\text{\tiny $\pm$0.2014}}$ & $0.6622_{\text{\tiny $\pm$0.1485}}$ & $0.2514_{\text{\tiny $\pm$0.0293}}$ & $0.2183_{\text{\tiny $\pm$0.0200}}$ & $0.2273_{\text{\tiny $\pm$0.0326}}$ \\
& & 0.2 & $0.5065_{\pm 0.0835}$ & $0.5180_{\pm 0.0673}$ & $0.5613_{\pm 0.0729}$ & $0.4958_{\pm 0.1436}$ & $0.5151_{\pm 0.1900}$ & $0.6781_{\pm 0.1337}$ & $0.2490_{\pm 0.0312}$ & $0.2194_{\pm 0.0218}$ & $0.2183_{\pm 0.0365}$ \\
& & 0.4 & $0.4877_{\pm 0.0775}$ & $0.5073_{\pm 0.0669}$ & $0.5629_{\pm 0.0634}$ & $0.4994_{\pm 0.1395}$ & $0.4733_{\pm 0.2009}$ & $0.6581_{\pm 0.1369}$ & $0.2480_{\pm 0.0290}$ & $0.2161_{\pm 0.0211}$ & $0.2236_{\pm 0.0325}$ \\
& & 0.6 & $0.4476_{\pm 0.0741}$ & $0.5140_{\pm 0.0672}$ & $0.5549_{\pm 0.0573}$ & $0.4686_{\pm 0.1470}$ & $0.4968_{\pm 0.2030}$ & $0.6522_{\pm 0.1356}$ & $0.2444_{\pm 0.0320}$ & $0.2127_{\pm 0.0145}$ & $0.2207_{\pm 0.0381}$ \\
& & 0.8 & $0.4230_{\pm 0.0787}$ & $0.5041_{\pm 0.0629}$ & $0.5420_{\pm 0.0659}$ & $0.5045_{\pm 0.1329}$ & $0.4696_{\pm 0.2012}$ & $0.6545_{\pm 0.1377}$ & $0.2397_{\pm 0.0338}$ & $0.2139_{\pm 0.0153}$ & $0.2213_{\pm 0.0365}$ \\
\cmidrule(lr){2-12}
& \multirow{5}{*}{3} 
& 0.0 & $0.6690_{\text{\tiny $\pm$0.0418}}$ & $0.6154_{\text{\tiny $\pm$0.0487}}$ & $0.6610_{\text{\tiny $\pm$0.0545}}$ & $0.6063_{\text{\tiny $\pm$0.0672}}$ & $0.6008_{\text{\tiny $\pm$0.1304}}$ & $0.7656_{\text{\tiny $\pm$0.0940}}$ & $0.2757_{\text{\tiny $\pm$0.0256}}$ & $0.2277_{\text{\tiny $\pm$0.0178}}$ & $0.2309_{\text{\tiny $\pm$0.0252}}$ \\
& & 0.2 & $0.6441_{\pm 0.0412}$ & $0.6141_{\pm 0.0477}$ & $0.6440_{\pm 0.0578}$ & $0.6045_{\pm 0.0813}$ & $0.5778_{\pm 0.1699}$ & $0.7391_{\pm 0.1071}$ & $0.2716_{\pm 0.0315}$ & $0.2250_{\pm 0.0150}$ & $0.2257_{\pm 0.0296}$ \\
& & 0.4 & $0.6240_{\pm 0.0435}$ & $0.6110_{\pm 0.0496}$ & $0.6329_{\pm 0.0564}$ & $0.6039_{\pm 0.0778}$ & $0.5789_{\pm 0.1616}$ & $0.7402_{\pm 0.1050}$ & $0.2683_{\pm 0.0293}$ & $0.2234_{\pm 0.0180}$ & $0.2382_{\pm 0.0231}$ \\
& & 0.6 & $0.5920_{\pm 0.0416}$ & $0.6069_{\pm 0.0507}$ & $0.6242_{\pm 0.0526}$ & $0.6036_{\pm 0.0718}$ & $0.5631_{\pm 0.1697}$ & $0.7515_{\pm 0.1023}$ & $0.2608_{\pm 0.0309}$ & $0.2204_{\pm 0.0189}$ & $0.2403_{\pm 0.0245}$ \\
& & 0.8 & $0.5619_{\pm 0.0464}$ & $0.6072_{\pm 0.0516}$ & $0.6071_{\pm 0.0655}$ & $0.5783_{\pm 0.0957}$ & $0.5658_{\pm 0.1556}$ & $0.7517_{\pm 0.1011}$ & $0.2608_{\pm 0.0309}$ & $0.2155_{\pm 0.0162}$ & $0.2416_{\pm 0.0242}$ \\
\cmidrule(lr){2-12}
& \multirow{5}{*}{5} 
& 0.0 & $0.7332_{\text{\tiny $\pm$0.0307}}$ & $0.6622_{\text{\tiny $\pm$0.0257}}$ & $0.6950_{\text{\tiny $\pm$0.0449}}$ & $0.6679_{\text{\tiny $\pm$0.0833}}$ & $0.6435_{\text{\tiny $\pm$0.1255}}$ & $0.8004_{\text{\tiny $\pm$0.0552}}$ & $0.2914_{\text{\tiny $\pm$0.0236}}$ & $0.2316_{\text{\tiny $\pm$0.0184}}$ & $0.2405_{\text{\tiny $\pm$0.0275}}$ \\
& & 0.2 & $0.7174_{\pm 0.0292}$ & $0.6589_{\pm 0.0261}$ & $0.6824_{\pm 0.0445}$ & $0.6184_{\pm 0.1254}$ & $0.5925_{\pm 0.1443}$ & $0.7884_{\pm 0.0561}$ & $0.2956_{\pm 0.0235}$ & $0.2246_{\pm 0.0171}$ & $0.2428_{\pm 0.0227}$ \\
& & 0.4 & $0.6927_{\pm 0.0372}$ & $0.6550_{\pm 0.0283}$ & $0.6726_{\pm 0.0520}$ & $0.6092_{\pm 0.1018}$ & $0.5759_{\pm 0.1276}$ & $0.7894_{\pm 0.0542}$ & $0.2862_{\pm 0.0189}$ & $0.2253_{\pm 0.0190}$ & $0.2467_{\pm 0.0203}$ \\
& & 0.6 & $0.6596_{\pm 0.0426}$ & $0.6515_{\pm 0.0303}$ & $0.6561_{\pm 0.0568}$ & $0.6454_{\pm 0.0995}$ & $0.6258_{\pm 0.1043}$ & $0.7902_{\pm 0.0572}$ & $0.2897_{\pm 0.0162}$ & $0.2180_{\pm 0.0138}$ & $0.2499_{\pm 0.0213}$ \\
& & 0.8 & $0.6143_{\pm 0.0409}$ & $0.6498_{\pm 0.0286}$ & $0.6359_{\pm 0.0440}$ & $0.6594_{\pm 0.0772}$ & $0.6058_{\pm 0.1561}$ & $0.7857_{\pm 0.0559}$ & $0.2878_{\pm 0.0243}$ & $0.2205_{\pm 0.0133}$ & $0.2488_{\pm 0.0217}$ \\
\midrule
\multirow{15}{*}{\textbf{GraphMAE2}} 
& \multirow{5}{*}{1} 
& 0.0 & $0.4445_{\text{\tiny $\pm$0.0876}}$ & $0.4934_{\text{\tiny $\pm$0.0730}}$ & $0.5388_{\text{\tiny $\pm$0.0814}}$ & $0.5076_{\text{\tiny $\pm$0.1479}}$ & $0.4870_{\text{\tiny $\pm$0.2065}}$ & $0.5683_{\text{\tiny $\pm$0.1676}}$ & $0.2543_{\text{\tiny $\pm$0.0241}}$ & $0.2204_{\text{\tiny $\pm$0.0176}}$ & $0.2234_{\text{\tiny $\pm$0.0254}}$ \\
& & 0.2 & $0.4246_{\pm 0.0814}$ & $0.4901_{\pm 0.0748}$ & $0.5248_{\pm 0.0689}$ & $0.4950_{\pm 0.1434}$ & $0.3431_{\pm 0.1768}$ & $0.5134_{\pm 0.1691}$ & $0.2402_{\pm 0.0292}$ & $0.2128_{\pm 0.0137}$ & $0.2216_{\pm 0.0275}$ \\
& & 0.4 & $0.4202_{\pm 0.0899}$ & $0.4898_{\pm 0.0736}$ & $0.5212_{\pm 0.0792}$ & $0.4986_{\pm 0.1473}$ & $0.3471_{\pm 0.1790}$ & $0.5045_{\pm 0.1661}$ & $0.2409_{\pm 0.0249}$ & $0.2086_{\pm 0.0151}$ & $0.2232_{\pm 0.0268}$ \\
& & 0.6 & $0.3840_{\pm 0.0918}$ & $0.4868_{\pm 0.0800}$ & $0.5011_{\pm 0.0667}$ & $0.5000_{\pm 0.1483}$ & $0.3709_{\pm 0.1812}$ & $0.5189_{\pm 0.1716}$ & $0.2375_{\pm 0.0275}$ & $0.2126_{\pm 0.0128}$ & $0.2206_{\pm 0.0275}$ \\
& & 0.8 & $0.3513_{\pm 0.0787}$ & $0.4879_{\pm 0.0742}$ & $0.5112_{\pm 0.0642}$ & $0.4930_{\pm 0.1482}$ & $0.3897_{\pm 0.1909}$ & $0.5307_{\pm 0.1693}$ & $0.2429_{\pm 0.0262}$ & $0.2139_{\pm 0.0164}$ & $0.2217_{\pm 0.0281}$ \\
\cmidrule(lr){2-12}
& \multirow{5}{*}{3} 
& 0.0 & $0.6015_{\text{\tiny $\pm$0.0531}}$ & $0.5780_{\text{\tiny $\pm$0.0555}}$ & $0.6282_{\text{\tiny $\pm$0.0714}}$ & $0.6000_{\text{\tiny $\pm$0.0889}}$ & $0.6233_{\text{\tiny $\pm$0.1552}}$ & $0.6617_{\text{\tiny $\pm$0.1222}}$ & $0.2854_{\text{\tiny $\pm$0.0233}}$ & $0.2246_{\text{\tiny $\pm$0.0170}}$ & $0.2363_{\text{\tiny $\pm$0.0253}}$ \\
& & 0.2 & $0.5863_{\pm 0.0454}$ & $0.5779_{\pm 0.0545}$ & $0.6174_{\pm 0.0640}$ & $0.5676_{\pm 0.1044}$ & $0.6189_{\pm 0.0907}$ & $0.5957_{\pm 0.1470}$ & $0.2754_{\pm 0.0256}$ & $0.2192_{\pm 0.0139}$ & $0.2313_{\pm 0.0297}$ \\
& & 0.4 & $0.5606_{\pm 0.0476}$ & $0.5730_{\pm 0.0547}$ & $0.5881_{\pm 0.0871}$ & $0.5685_{\pm 0.1092}$ & $0.6253_{\pm 0.1135}$ & $0.6454_{\pm 0.1045}$ & $0.2766_{\pm 0.0229}$ & $0.2218_{\pm 0.0152}$ & $0.2331_{\pm 0.0242}$ \\
& & 0.6 & $0.5421_{\pm 0.0459}$ & $0.5750_{\pm 0.0581}$ & $0.5842_{\pm 0.0818}$ & $0.5759_{\pm 0.0970}$ & $0.5400_{\pm 0.1690}$ & $0.6070_{\pm 0.1511}$ & $0.2709_{\pm 0.0269}$ & $0.2193_{\pm 0.0128}$ & $0.2321_{\pm 0.0246}$ \\
& & 0.8 & $0.5186_{\pm 0.0468}$ & $0.5780_{\pm 0.0570}$ & $0.5987_{\pm 0.0708}$ & $0.5875_{\pm 0.0847}$ & $0.5722_{\pm 0.1478}$ & $0.6007_{\pm 0.1639}$ & $0.2724_{\pm 0.0277}$ & $0.2163_{\pm 0.0115}$ & $0.2349_{\pm 0.0235}$ \\
\cmidrule(lr){2-12}
& \multirow{5}{*}{5} 
& 0.0 & $0.6625_{\text{\tiny $\pm$0.0463}}$ & $0.6235_{\text{\tiny $\pm$0.0296}}$ & $0.6724_{\text{\tiny $\pm$0.0419}}$ & $0.6346_{\text{\tiny $\pm$0.0695}}$ & $0.6594_{\text{\tiny $\pm$0.1040}}$ & $0.7052_{\text{\tiny $\pm$0.0463}}$ & $0.3089_{\text{\tiny $\pm$0.0191}}$ & $0.2275_{\text{\tiny $\pm$0.0170}}$ & $0.2405_{\text{\tiny $\pm$0.0284}}$ \\
& & 0.2 & $0.6454_{\pm 0.0529}$ & $0.6247_{\pm 0.0317}$ & $0.6654_{\pm 0.0535}$ & $0.5879_{\pm 0.0690}$ & $0.5858_{\pm 0.1192}$ & $0.7017_{\pm 0.0595}$ & $0.3049_{\pm 0.0206}$ & $0.2304_{\pm 0.0172}$ & $0.2405_{\pm 0.0220}$ \\
& & 0.4 & $0.6335_{\pm 0.0485}$ & $0.6203_{\pm 0.0307}$ & $0.6562_{\pm 0.0519}$ & $0.5978_{\pm 0.0789}$ & $0.6110_{\pm 0.1039}$ & $0.6965_{\pm 0.0543}$ & $0.2971_{\pm 0.0238}$ & $0.2240_{\pm 0.0134}$ & $0.2440_{\pm 0.0226}$ \\
& & 0.6 & $0.6021_{\pm 0.0468}$ & $0.6207_{\pm 0.0304}$ & $0.6490_{\pm 0.0422}$ & $0.5975_{\pm 0.0863}$ & $0.6345_{\pm 0.1092}$ & $0.7021_{\pm 0.0482}$ & $0.2996_{\pm 0.0189}$ & $0.2238_{\pm 0.0156}$ & $0.2388_{\pm 0.0255}$ \\
& & 0.8 & $0.5784_{\pm 0.0466}$ & $0.6179_{\pm 0.0352}$ & $0.6416_{\pm 0.0510}$ & $0.6063_{\pm 0.0876}$ & $0.6264_{\pm 0.1275}$ & $0.7023_{\pm 0.0496}$ & $0.3005_{\pm 0.0217}$ & $0.2202_{\pm 0.0115}$ & $0.2448_{\pm 0.0237}$ \\
\midrule
\multirow{15}{*}{\textbf{MaskGAE}} 
& \multirow{5}{*}{1} 
& 0.0 & $0.5178_{\text{\tiny $\pm$0.0997}}$ & $0.4981_{\text{\tiny $\pm$0.0711}}$ & $0.5511_{\text{\tiny $\pm$0.0848}}$ & $0.5510_{\text{\tiny $\pm$0.0697}}$ & $0.5585_{\text{\tiny $\pm$0.1608}}$ & $0.6681_{\text{\tiny $\pm$0.1528}}$ & $0.3365_{\text{\tiny $\pm$0.0371}}$ & $0.2726_{\text{\tiny $\pm$0.0319}}$ & $0.2231_{\text{\tiny $\pm$0.0182}}$ \\
& & 0.2 & $0.5022_{\pm 0.1015}$ & $0.4927_{\pm 0.0733}$ & $0.5485_{\pm 0.0831}$ & $0.5409_{\pm 0.0970}$ & $0.5130_{\pm 0.1596}$ & $0.6501_{\pm 0.1677}$ & $0.3215_{\pm 0.0392}$ & $0.2020_{\pm 0.0091}$ & $0.2226_{\pm 0.0185}$ \\
& & 0.4 & $0.5004_{\pm 0.0925}$ & $0.4939_{\pm 0.0750}$ & $0.5462_{\pm 0.0855}$ & $0.5361_{\pm 0.1096}$ & $0.5296_{\pm 0.1600}$ & $0.6431_{\pm 0.1667}$ & $0.3187_{\pm 0.0455}$ & $0.2063_{\pm 0.0126}$ & $0.2214_{\pm 0.0219}$ \\
& & 0.6 & $0.4973_{\pm 0.0884}$ & $0.4892_{\pm 0.0706}$ & $0.5469_{\pm 0.0798}$ & $0.5210_{\pm 0.1283}$ & $0.5143_{\pm 0.1675}$ & $0.6446_{\pm 0.1631}$ & $0.3207_{\pm 0.0462}$ & $0.2054_{\pm 0.0132}$ & $0.2227_{\pm 0.0185}$ \\
& & 0.8 & $0.4891_{\pm 0.0829}$ & $0.4934_{\pm 0.0685}$ & $0.5454_{\pm 0.0881}$ & $0.5339_{\pm 0.0989}$ & $0.5243_{\pm 0.1430}$ & $0.6572_{\pm 0.1607}$ & $0.3253_{\pm 0.0342}$ & $0.2041_{\pm 0.0082}$ & $0.2222_{\pm 0.0186}$ \\
\cmidrule(lr){2-12}
& \multirow{5}{*}{3} 
& 0.0 & $0.6719_{\text{\tiny $\pm$0.0418}}$ & $0.6028_{\text{\tiny $\pm$0.0394}}$ & $0.6454_{\text{\tiny $\pm$0.0571}}$ & $0.6161_{\text{\tiny $\pm$0.1004}}$ & $0.6806_{\text{\tiny $\pm$0.0749}}$ & $0.7589_{\text{\tiny $\pm$0.0431}}$ & $0.3565_{\text{\tiny $\pm$0.0263}}$ & $0.2917_{\text{\tiny $\pm$0.0275}}$ & $0.2211_{\text{\tiny $\pm$0.0283}}$ \\
& & 0.2 & $0.6591_{\pm 0.0450}$ & $0.5983_{\pm 0.0381}$ & $0.6416_{\pm 0.0662}$ & $0.6170_{\pm 0.1084}$ & $0.6394_{\pm 0.1009}$ & $0.7511_{\pm 0.0602}$ & $0.3339_{\pm 0.0282}$ & $0.2109_{\pm 0.0142}$ & $0.2205_{\pm 0.0238}$ \\
& & 0.4 & $0.6460_{\pm 0.0437}$ & $0.5950_{\pm 0.0394}$ & $0.6382_{\pm 0.0639}$ & $0.6036_{\pm 0.1071}$ & $0.6514_{\pm 0.0952}$ & $0.7485_{\pm 0.0690}$ & $0.3399_{\pm 0.0254}$ & $0.2073_{\pm 0.0138}$ & $0.2206_{\pm 0.0256}$ \\
& & 0.6 & $0.6229_{\pm 0.0503}$ & $0.5903_{\pm 0.0361}$ & $0.6398_{\pm 0.0641}$ & $0.5943_{\pm 0.1154}$ & $0.6564_{\pm 0.1097}$ & $0.7504_{\pm 0.0716}$ & $0.3405_{\pm 0.0218}$ & $0.2040_{\pm 0.0090}$ & $0.2165_{\pm 0.0267}$ \\
& & 0.8 & $0.6063_{\pm 0.0422}$ & $0.5924_{\pm 0.0394}$ & $0.6319_{\pm 0.0715}$ & $0.6116_{\pm 0.1072}$ & $0.6517_{\pm 0.0945}$ & $0.7444_{\pm 0.0721}$ & $0.3416_{\pm 0.0232}$ & $0.2061_{\pm 0.0101}$ & $0.2206_{\pm 0.0273}$ \\
\cmidrule(lr){2-12}
& \multirow{5}{*}{5} 
& 0.0 & $0.7185_{\text{\tiny $\pm$0.0302}}$ & $0.6412_{\text{\tiny $\pm$0.0290}}$ & $0.6921_{\text{\tiny $\pm$0.0384}}$ & $0.6448_{\text{\tiny $\pm$0.0561}}$ & $0.6997_{\text{\tiny $\pm$0.0689}}$ & $0.7892_{\text{\tiny $\pm$0.0545}}$ & $0.3554_{\text{\tiny $\pm$0.0267}}$ & $0.3005_{\text{\tiny $\pm$0.0270}}$ & $0.2243_{\text{\tiny $\pm$0.0253}}$ \\
& & 0.2 & $0.6941_{\pm 0.0272}$ & $0.6094_{\pm 0.0293}$ & $0.6681_{\pm 0.0442}$ & $0.5692_{\pm 0.1008}$ & $0.6049_{\pm 0.0973}$ & $0.7276_{\pm 0.0532}$ & $0.3499_{\pm 0.0312}$ & $0.2893_{\pm 0.0355}$ & $0.2191_{\pm 0.0230}$ \\
& & 0.4 & $0.6557_{\pm 0.0385}$ & $0.5612_{\pm 0.0311}$ & $0.6491_{\pm 0.0452}$ & $0.5292_{\pm 0.1172}$ & $0.5919_{\pm 0.1245}$ & $0.6971_{\pm 0.0955}$ & $0.3410_{\pm 0.0335}$ & $0.2926_{\pm 0.0276}$ & $0.2087_{\pm 0.0224}$ \\
& & 0.6 & $0.5807_{\pm 0.0378}$ & $0.4733_{\pm 0.0545}$ & $0.6144_{\pm 0.0640}$ & $0.4540_{\pm 0.1221}$ & $0.5426_{\pm 0.1412}$ & $0.5969_{\pm 0.0704}$ & $0.3104_{\pm 0.0459}$ & $0.3009_{\pm 0.0320}$ & $0.2037_{\pm 0.0181}$ \\
& & 0.8 & $0.4922_{\pm 0.0336}$ & $0.3064_{\pm 0.0436}$ & $0.5761_{\pm 0.0446}$ & $0.3762_{\pm 0.1602}$ & $0.5264_{\pm 0.1993}$ & $0.4593_{\pm 0.1279}$ & $0.3094_{\pm 0.0276}$ & $0.2884_{\pm 0.0356}$ & $0.1968_{\pm 0.0242}$ \\
\bottomrule
\end{tabular}
}
\end{table*}

\begin{table*}[!t]
\centering
\caption{Robustness Evaluation Results Under Feature Mask Ratio (Accuracy$_{\pm \text{Std}}$)}
\label{tab:robustness_feat_mask}
\resizebox{\textwidth}{!}{
\begin{tabular}{l c c ccccc ccccc}
\toprule
\multirow{2}{*}{\textbf{Model}} & \multirow{2}{*}{\textbf{Shot}} & \multirow{2}{*}{\textbf{Mask}} & \multicolumn{5}{c}{\textbf{Homophilic / Small Heterophilic Datasets}} & \multicolumn{4}{c}{\textbf{Dense Heterophilic Datasets}} \\
\cmidrule(lr){4-8} \cmidrule(lr){9-12}
 &  &  & \textbf{Cora} & \textbf{CiteSeer} & \textbf{PubMed} & \textbf{Cornell} & \textbf{Texas} & \textbf{Wisconsin} & \textbf{Chameleon} & \textbf{Squirrel} & \textbf{Actor} \\
\midrule
\multirow{15}{*}{\textbf{GraphMAE}} 
& \multirow{5}{*}{1} 
& 0.0 & $0.5346_{\text{\tiny $\pm$0.0736}}$ & $0.5208_{\text{\tiny $\pm$0.0662}}$ & $0.5818_{\text{\tiny $\pm$0.0720}}$ & $0.5073_{\text{\tiny $\pm$0.1382}}$ & $0.5339_{\text{\tiny $\pm$0.2014}}$ & $0.6622_{\text{\tiny $\pm$0.1485}}$ & $0.2514_{\text{\tiny $\pm$0.0293}}$ & $0.2183_{\text{\tiny $\pm$0.0200}}$ & $0.2273_{\text{\tiny $\pm$0.0326}}$ \\
& & 0.2 & $0.4980_{\pm 0.0860}$ & $0.4686_{\pm 0.0615}$ & $0.5625_{\pm 0.0799}$ & $0.4280_{\pm 0.1586}$ & $0.4690_{\pm 0.2376}$ & $0.5809_{\pm 0.1578}$ & $0.2581_{\pm 0.0282}$ & $0.2057_{\pm 0.0114}$ & $0.2026_{\pm 0.0473}$ \\
& & 0.4 & $0.4700_{\pm 0.0711}$ & $0.4260_{\pm 0.0692}$ & $0.5478_{\pm 0.0803}$ & $0.3787_{\pm 0.1403}$ & $0.3772_{\pm 0.1937}$ & $0.5431_{\pm 0.1435}$ & $0.2598_{\pm 0.0313}$ & $0.2026_{\pm 0.0065}$ & $0.1996_{\pm 0.0308}$ \\
& & 0.6 & $0.4299_{\pm 0.0739}$ & $0.3563_{\pm 0.0770}$ & $0.5308_{\pm 0.0846}$ & $0.3591_{\pm 0.1523}$ & $0.4312_{\pm 0.1848}$ & $0.4078_{\pm 0.1130}$ & $0.2573_{\pm 0.0357}$ & $0.2009_{\pm 0.0061}$ & $0.1962_{\pm 0.0406}$ \\
& & 0.8 & $0.3581_{\pm 0.0703}$ & $0.2346_{\pm 0.0516}$ & $0.5086_{\pm 0.0822}$ & $0.2759_{\pm 0.1461}$ & $0.3265_{\pm 0.2133}$ & $0.2672_{\pm 0.1515}$ & $0.2536_{\pm 0.0344}$ & $0.2037_{\pm 0.0088}$ & $0.2106_{\pm 0.0286}$ \\
\cmidrule(lr){2-12}
& \multirow{5}{*}{3} 
& 0.0 & $0.6690_{\text{\tiny $\pm$0.0418}}$ & $0.6154_{\text{\tiny $\pm$0.0487}}$ & $0.6610_{\text{\tiny $\pm$0.0545}}$ & $0.6063_{\text{\tiny $\pm$0.0672}}$ & $0.6008_{\text{\tiny $\pm$0.1304}}$ & $0.7656_{\text{\tiny $\pm$0.0940}}$ & $0.2757_{\text{\tiny $\pm$0.0256}}$ & $0.2277_{\text{\tiny $\pm$0.0178}}$ & $0.2309_{\text{\tiny $\pm$0.0252}}$ \\
& & 0.2 & $0.6429_{\pm 0.0438}$ & $0.5853_{\pm 0.0506}$ & $0.6351_{\pm 0.0593}$ & $0.5545_{\pm 0.0987}$ & $0.5308_{\pm 0.1348}$ & $0.7165_{\pm 0.0787}$ & $0.2578_{\pm 0.0275}$ & $0.2133_{\pm 0.0125}$ & $0.2207_{\pm 0.0218}$ \\
& & 0.4 & $0.6233_{\pm 0.0416}$ & $0.5343_{\pm 0.0500}$ & $0.6376_{\pm 0.0625}$ & $0.5080_{\pm 0.1071}$ & $0.4922_{\pm 0.1859}$ & $0.7104_{\pm 0.0633}$ & $0.2622_{\pm 0.0246}$ & $0.2060_{\pm 0.0084}$ & $0.2220_{\pm 0.0172}$ \\
& & 0.6 & $0.5577_{\pm 0.0469}$ & $0.4463_{\pm 0.0672}$ & $0.6218_{\pm 0.0581}$ & $0.4140_{\pm 0.1125}$ & $0.4864_{\pm 0.1732}$ & $0.6004_{\pm 0.0947}$ & $0.2751_{\pm 0.0280}$ & $0.2040_{\pm 0.0082}$ & $0.2169_{\pm 0.0180}$ \\
& & 0.8 & $0.4751_{\pm 0.0417}$ & $0.2964_{\pm 0.0463}$ & $0.5863_{\pm 0.0574}$ & $0.3625_{\pm 0.1598}$ & $0.3494_{\pm 0.1824}$ & $0.4313_{\pm 0.1124}$ & $0.2724_{\pm 0.0331}$ & $0.2080_{\pm 0.0092}$ & $0.2087_{\pm 0.0318}$ \\
\cmidrule(lr){2-12}
& \multirow{5}{*}{5} 
& 0.0 & $0.7332_{\text{\tiny $\pm$0.0307}}$ & $0.6622_{\text{\tiny $\pm$0.0257}}$ & $0.6950_{\text{\tiny $\pm$0.0449}}$ & $0.6679_{\text{\tiny $\pm$0.0833}}$ & $0.6435_{\text{\tiny $\pm$0.1255}}$ & $0.8004_{\text{\tiny $\pm$0.0552}}$ & $0.2914_{\text{\tiny $\pm$0.0236}}$ & $0.2316_{\text{\tiny $\pm$0.0184}}$ & $0.2405_{\text{\tiny $\pm$0.0275}}$ \\
& & 0.2 & $0.7169_{\pm 0.0339}$ & $0.6410_{\pm 0.0238}$ & $0.6696_{\pm 0.0508}$ & $0.5937_{\pm 0.1100}$ & $0.5510_{\pm 0.1534}$ & $0.7547_{\pm 0.0462}$ & $0.2813_{\pm 0.0220}$ & $0.2186_{\pm 0.0152}$ & $0.2234_{\pm 0.0168}$ \\
& & 0.4 & $0.6869_{\pm 0.0371}$ & $0.5917_{\pm 0.0294}$ & $0.6666_{\pm 0.0501}$ & $0.5152_{\pm 0.1741}$ & $0.4832_{\pm 0.1957}$ & $0.7231_{\pm 0.0687}$ & $0.2789_{\pm 0.0227}$ & $0.2067_{\pm 0.0084}$ & $0.2231_{\pm 0.0197}$ \\
& & 0.6 & $0.6412_{\pm 0.0352}$ & $0.5058_{\pm 0.0475}$ & $0.6520_{\pm 0.0450}$ & $0.3844_{\pm 0.1703}$ & $0.4986_{\pm 0.2003}$ & $0.6239_{\pm 0.0667}$ & $0.2943_{\pm 0.0256}$ & $0.2055_{\pm 0.0075}$ & $0.2122_{\pm 0.0259}$ \\
& & 0.8 & $0.5693_{\pm 0.0407}$ & $0.3571_{\pm 0.0418}$ & $0.6298_{\pm 0.0505}$ & $0.2987_{\pm 0.1830}$ & $0.3035_{\pm 0.1716}$ & $0.4852_{\pm 0.0972}$ & $0.3008_{\pm 0.0292}$ & $0.2070_{\pm 0.0096}$ & $0.2150_{\pm 0.0215}$ \\
\midrule
\multirow{15}{*}{\textbf{GraphMAE2}} 
& \multirow{5}{*}{1} 
& 0.0 & $0.4445_{\text{\tiny $\pm$0.0876}}$ & $0.4934_{\text{\tiny $\pm$0.0730}}$ & $0.5388_{\text{\tiny $\pm$0.0814}}$ & $0.5076_{\text{\tiny $\pm$0.1479}}$ & $0.4870_{\text{\tiny $\pm$0.2065}}$ & $0.5683_{\text{\tiny $\pm$0.1676}}$ & $0.2543_{\text{\tiny $\pm$0.0241}}$ & $0.2204_{\text{\tiny $\pm$0.0176}}$ & $0.2234_{\text{\tiny $\pm$0.0254}}$ \\
& & 0.2 & $0.3781_{\pm 0.0821}$ & $0.4550_{\pm 0.0834}$ & $0.5321_{\pm 0.0713}$ & $0.4465_{\pm 0.1526}$ & $0.3944_{\pm 0.2055}$ & $0.4941_{\pm 0.1572}$ & $0.2448_{\pm 0.0331}$ & $0.2089_{\pm 0.0161}$ & $0.2259_{\pm 0.0319}$ \\
& & 0.4 & $0.3383_{\pm 0.0982}$ & $0.4001_{\pm 0.0776}$ & $0.4914_{\pm 0.0855}$ & $0.3936_{\pm 0.1462}$ & $0.3788_{\pm 0.1927}$ & $0.4444_{\pm 0.1345}$ & $0.2510_{\pm 0.0337}$ & $0.2069_{\pm 0.0121}$ & $0.2204_{\pm 0.0330}$ \\
& & 0.6 & $0.2965_{\pm 0.0771}$ & $0.3203_{\pm 0.0678}$ & $0.4741_{\pm 0.0615}$ & $0.3711_{\pm 0.1503}$ & $0.4320_{\pm 0.2098}$ & $0.4239_{\pm 0.1464}$ & $0.2502_{\pm 0.0289}$ & $0.2030_{\pm 0.0081}$ & $0.2179_{\pm 0.0292}$ \\
& & 0.8 & $0.2458_{\pm 0.0537}$ & $0.2463_{\pm 0.0439}$ & $0.4303_{\pm 0.0657}$ & $0.3179_{\pm 0.1528}$ & $0.2995_{\pm 0.1695}$ & $0.3116_{\pm 0.1274}$ & $0.2390_{\pm 0.0295}$ & $0.2040_{\pm 0.0107}$ & $0.2029_{\pm 0.0277}$ \\
\cmidrule(lr){2-12}
& \multirow{5}{*}{3} 
& 0.0 & $0.6015_{\text{\tiny $\pm$0.0531}}$ & $0.5780_{\text{\tiny $\pm$0.0555}}$ & $0.6282_{\text{\tiny $\pm$0.0714}}$ & $0.6000_{\text{\tiny $\pm$0.0889}}$ & $0.6233_{\text{\tiny $\pm$0.1552}}$ & $0.6617_{\text{\tiny $\pm$0.1222}}$ & $0.2854_{\text{\tiny $\pm$0.0233}}$ & $0.2246_{\text{\tiny $\pm$0.0170}}$ & $0.2363_{\text{\tiny $\pm$0.0253}}$ \\
& & 0.2 & $0.5499_{\pm 0.0610}$ & $0.5464_{\pm 0.0616}$ & $0.5649_{\pm 0.1093}$ & $0.5607_{\pm 0.1150}$ & $0.4344_{\pm 0.1411}$ & $0.5769_{\pm 0.1491}$ & $0.2789_{\pm 0.0255}$ & $0.2176_{\pm 0.0175}$ & $0.2205_{\pm 0.0293}$ \\
& & 0.4 & $0.5075_{\pm 0.0545}$ & $0.5004_{\pm 0.0529}$ & $0.6167_{\pm 0.0688}$ & $0.4702_{\pm 0.1082}$ & $0.3878_{\pm 0.1786}$ & $0.4928_{\pm 0.1422}$ & $0.2759_{\pm 0.0254}$ & $0.2132_{\pm 0.0131}$ & $0.2196_{\pm 0.0294}$ \\
& & 0.6 & $0.4452_{\pm 0.0557}$ & $0.4369_{\pm 0.0533}$ & $0.5593_{\pm 0.0908}$ & $0.4101_{\pm 0.1364}$ & $0.3764_{\pm 0.1501}$ & $0.4609_{\pm 0.1205}$ & $0.2750_{\pm 0.0349}$ & $0.2075_{\pm 0.0108}$ & $0.2142_{\pm 0.0267}$ \\
& & 0.8 & $0.3420_{\pm 0.0473}$ & $0.2990_{\pm 0.0409}$ & $0.4767_{\pm 0.1242}$ & $0.3515_{\pm 0.1432}$ & $0.2969_{\pm 0.1788}$ & $0.3506_{\pm 0.1418}$ & $0.2644_{\pm 0.0248}$ & $0.2062_{\pm 0.0078}$ & $0.2136_{\pm 0.0322}$ \\
\cmidrule(lr){2-12}
& \multirow{5}{*}{5} 
& 0.0 & $0.6625_{\text{\tiny $\pm$0.0463}}$ & $0.6235_{\text{\tiny $\pm$0.0296}}$ & $0.6724_{\text{\tiny $\pm$0.0419}}$ & $0.6346_{\text{\tiny $\pm$0.0695}}$ & $0.6594_{\text{\tiny $\pm$0.1040}}$ & $0.7052_{\text{\tiny $\pm$0.0463}}$ & $0.3089_{\text{\tiny $\pm$0.0191}}$ & $0.2275_{\text{\tiny $\pm$0.0170}}$ & $0.2405_{\text{\tiny $\pm$0.0284}}$ \\
& & 0.2 & $0.6434_{\pm 0.0524}$ & $0.5819_{\pm 0.0433}$ & $0.6490_{\pm 0.0516}$ & $0.5902_{\pm 0.0857}$ & $0.5159_{\pm 0.1330}$ & $0.6102_{\pm 0.0692}$ & $0.3049_{\pm 0.0219}$ & $0.2197_{\pm 0.0163}$ & $0.2312_{\pm 0.0226}$ \\
& & 0.4 & $0.6054_{\pm 0.0421}$ & $0.5331_{\pm 0.0383}$ & $0.6433_{\pm 0.0473}$ & $0.5298_{\pm 0.0912}$ & $0.4501_{\pm 0.1444}$ & $0.5728_{\pm 0.0838}$ & $0.3010_{\pm 0.0211}$ & $0.2197_{\pm 0.0152}$ & $0.2271_{\pm 0.0244}$ \\
& & 0.6 & $0.5909_{\pm 0.0431}$ & $0.4730_{\pm 0.0467}$ & $0.6313_{\pm 0.0401}$ & $0.4384_{\pm 0.1278}$ & $0.4374_{\pm 0.1630}$ & $0.5229_{\pm 0.0912}$ & $0.2946_{\pm 0.0217}$ & $0.2138_{\pm 0.0125}$ & $0.2216_{\pm 0.0252}$ \\
& & 0.8 & $0.5222_{\pm 0.0381}$ & $0.3254_{\pm 0.0415}$ & $0.6024_{\pm 0.0481}$ & $0.3505_{\pm 0.1363}$ & $0.3774_{\pm 0.1791}$ & $0.3709_{\pm 0.1014}$ & $0.2794_{\pm 0.0299}$ & $0.2104_{\pm 0.0114}$ & $0.2122_{\pm 0.0229}$ \\
\midrule
\multirow{15}{*}{\textbf{MaskGAE}} 
& \multirow{5}{*}{1} 
& 0.0 & $0.5178_{\text{\tiny $\pm$0.0997}}$ & $0.4981_{\text{\tiny $\pm$0.0711}}$ & $0.5511_{\text{\tiny $\pm$0.0848}}$ & $0.5510_{\text{\tiny $\pm$0.0697}}$ & $0.5585_{\text{\tiny $\pm$0.1608}}$ & $0.6681_{\text{\tiny $\pm$0.1528}}$ & $0.3365_{\text{\tiny $\pm$0.0371}}$ & $0.2726_{\text{\tiny $\pm$0.0319}}$ & $0.2231_{\text{\tiny $\pm$0.0182}}$ \\
& & 0.2 & $0.4393_{\pm 0.0967}$ & $0.4583_{\pm 0.0651}$ & $0.5303_{\pm 0.0807}$ & $0.4840_{\pm 0.1217}$ & $0.5153_{\pm 0.1766}$ & $0.6018_{\pm 0.1342}$ & $0.3206_{\pm 0.0427}$ & $0.2483_{\pm 0.0484}$ & $0.2057_{\pm 0.0233}$ \\
& & 0.4 & $0.3929_{\pm 0.0745}$ & $0.4081_{\pm 0.0774}$ & $0.5238_{\pm 0.0766}$ & $0.4933_{\pm 0.1154}$ & $0.4582_{\pm 0.2110}$ & $0.5651_{\pm 0.1153}$ & $0.3074_{\pm 0.0358}$ & $0.2660_{\pm 0.0369}$ & $0.1984_{\pm 0.0249}$ \\
& & 0.6 & $0.3251_{\pm 0.0694}$ & $0.3450_{\pm 0.0615}$ & $0.4940_{\pm 0.0724}$ & $0.4025_{\pm 0.1338}$ & $0.4193_{\pm 0.1966}$ & $0.5048_{\pm 0.1131}$ & $0.2946_{\pm 0.0330}$ & $0.2615_{\pm 0.0372}$ & $0.1867_{\pm 0.0283}$ \\
& & 0.8 & $0.2410_{\pm 0.0583}$ & $0.2403_{\pm 0.0368}$ & $0.4632_{\pm 0.0544}$ & $0.3395_{\pm 0.1396}$ & $0.3534_{\pm 0.2024}$ & $0.3588_{\pm 0.1001}$ & $0.2722_{\pm 0.0389}$ & $0.2537_{\pm 0.0397}$ & $0.1866_{\pm 0.0317}$ \\
\cmidrule(lr){2-12}
& \multirow{5}{*}{3} 
& 0.0 & $0.6719_{\text{\tiny $\pm$0.0418}}$ & $0.6028_{\text{\tiny $\pm$0.0394}}$ & $0.6454_{\text{\tiny $\pm$0.0571}}$ & $0.6161_{\text{\tiny $\pm$0.1004}}$ & $0.6806_{\text{\tiny $\pm$0.0749}}$ & $0.7589_{\text{\tiny $\pm$0.0431}}$ & $0.3565_{\text{\tiny $\pm$0.0263}}$ & $0.2917_{\text{\tiny $\pm$0.0275}}$ & $0.2211_{\text{\tiny $\pm$0.0283}}$ \\
& & 0.2 & $0.6357_{\pm 0.0503}$ & $0.5668_{\pm 0.0486}$ & $0.6209_{\pm 0.0666}$ & $0.5470_{\pm 0.0959}$ & $0.5733_{\pm 0.1405}$ & $0.7024_{\pm 0.0864}$ & $0.3492_{\pm 0.0324}$ & $0.2729_{\pm 0.0425}$ & $0.2077_{\pm 0.0312}$ \\
& & 0.4 & $0.6015_{\pm 0.0482}$ & $0.5153_{\pm 0.0476}$ & $0.6035_{\pm 0.0598}$ & $0.5196_{\pm 0.1111}$ & $0.5569_{\pm 0.1269}$ & $0.6491_{\pm 0.0970}$ & $0.3335_{\pm 0.0377}$ & $0.2808_{\pm 0.0420}$ & $0.1995_{\pm 0.0250}$ \\
& & 0.6 & $0.5244_{\pm 0.0502}$ & $0.4329_{\pm 0.0635}$ & $0.5679_{\pm 0.0673}$ & $0.4548_{\pm 0.1126}$ & $0.5219_{\pm 0.1490}$ & $0.5807_{\pm 0.0821}$ & $0.3000_{\pm 0.0470}$ & $0.2779_{\pm 0.0424}$ & $0.1971_{\pm 0.0220}$ \\
& & 0.8 & $0.4307_{\pm 0.0485}$ & $0.2821_{\pm 0.0472}$ & $0.5402_{\pm 0.0489}$ & $0.3366_{\pm 0.1275}$ & $0.3203_{\pm 0.2074}$ & $0.4091_{\pm 0.1027}$ & $0.3038_{\pm 0.0445}$ & $0.2858_{\pm 0.0385}$ & $0.1813_{\pm 0.0301}$ \\
\cmidrule(lr){2-12}
& \multirow{5}{*}{5} 
& 0.0 & $0.7185_{\text{\tiny $\pm$0.0302}}$ & $0.6412_{\text{\tiny $\pm$0.0290}}$ & $0.6921_{\text{\tiny $\pm$0.0384}}$ & $0.6448_{\text{\tiny $\pm$0.0561}}$ & $0.6997_{\text{\tiny $\pm$0.0689}}$ & $0.7892_{\text{\tiny $\pm$0.0545}}$ & $0.3554_{\text{\tiny $\pm$0.0267}}$ & $0.3005_{\text{\tiny $\pm$0.0270}}$ & $0.2243_{\text{\tiny $\pm$0.0253}}$ \\
& & 0.2 & $0.6941_{\pm 0.0272}$ & $0.6094_{\pm 0.0293}$ & $0.6681_{\pm 0.0442}$ & $0.5692_{\pm 0.1008}$ & $0.6049_{\pm 0.0973}$ & $0.7276_{\pm 0.0532}$ & $0.3499_{\pm 0.0312}$ & $0.2893_{\pm 0.0355}$ & $0.2191_{\pm 0.0230}$ \\
& & 0.4 & $0.6557_{\pm 0.0385}$ & $0.5612_{\pm 0.0311}$ & $0.6491_{\pm 0.0452}$ & $0.5292_{\pm 0.1172}$ & $0.5919_{\pm 0.1245}$ & $0.6971_{\pm 0.0955}$ & $0.3410_{\pm 0.0335}$ & $0.2926_{\pm 0.0276}$ & $0.2087_{\pm 0.0224}$ \\
& & 0.6 & $0.5807_{\pm 0.0378}$ & $0.4733_{\pm 0.0545}$ & $0.6144_{\pm 0.0640}$ & $0.4540_{\pm 0.1221}$ & $0.5426_{\pm 0.1412}$ & $0.5969_{\pm 0.0704}$ & $0.3104_{\pm 0.0459}$ & $0.3009_{\pm 0.0320}$ & $0.2037_{\pm 0.0181}$ \\
& & 0.8 & $0.4922_{\pm 0.0336}$ & $0.3064_{\pm 0.0436}$ & $0.5761_{\pm 0.0446}$ & $0.3762_{\pm 0.1602}$ & $0.5264_{\pm 0.1993}$ & $0.4593_{\pm 0.1279}$ & $0.3094_{\pm 0.0276}$ & $0.2884_{\pm 0.0356}$ & $0.1968_{\pm 0.0242}$ \\
\bottomrule
\end{tabular}
}
\end{table*}

\subsubsection{Graph Prompt Learning Baselines}

\paragraph{GPF/GPF+~\cite{fang2023gpf}}GPF adds one shared learnable prompt vector to every node feature, whereas GPF+ forms a node-dependent prompt from learnable basis prompts using soft attention. Both optimize the prompt parameters while keeping the pretrained encoder frozen.

\begin{table*}[!t]
\centering
\caption{Ablation Study Results Across Different Base Models and Shots. Accuracy is presented as the main value, with standard deviation in the subscript ($Acc_{\pm Std}$).}
\label{tab:ablation_study_full}
\resizebox{\textwidth}{!}{
\begin{tabular}{l c l ccc ccc ccc}
\toprule
\textbf{Model} & \textbf{Shot} & \textbf{Setting} & \textbf{Cora} & \textbf{CiteSeer} & \textbf{PubMed} & \textbf{Cornell} & \textbf{Texas} & \textbf{Wisconsin} & \textbf{Chameleon} & \textbf{Squirrel} & \textbf{Actor} \\
\midrule
\multirow{18}{*}{\textbf{GraphMAE}} 
& \multirow{6}{*}{1} 
& Full MINT  & $0.5346_{\pm 0.0736}$ & $0.5208_{\pm 0.0662}$ & $0.5818_{\pm 0.0720}$ & $0.5073_{\pm 0.1382}$ & $0.5339_{\pm 0.2014}$ & $0.6622_{\pm 0.1485}$ & $0.2514_{\pm 0.0293}$ & $0.2183_{\pm 0.0200}$ & $0.2273_{\pm 0.0326}$ \\
& & w/o SA     & $0.5303_{\pm 0.0747}$ & $0.5158_{\pm 0.0640}$ & $0.5791_{\pm 0.0705}$ & $0.5076_{\pm 0.1383}$ & $0.4947_{\pm 0.1952}$ & $0.6665_{\pm 0.1440}$ & $0.2496_{\pm 0.0316}$ & $0.2116_{\pm 0.0197}$ & $0.2161_{\pm 0.0279}$ \\
& & w/o OT     & $0.3192_{\pm 0.0667}$ & $0.3484_{\pm 0.0753}$ & $0.4277_{\pm 0.0715}$ & $0.3966_{\pm 0.2201}$ & $0.4810_{\pm 0.2390}$ & $0.5765_{\pm 0.1709}$ & $0.2362_{\pm 0.0326}$ & $0.2039_{\pm 0.0114}$ & $0.2087_{\pm 0.0382}$ \\
& & w/o FA     & $0.5281_{\pm 0.0686}$ & $0.5091_{\pm 0.0710}$ & $0.5703_{\pm 0.0722}$ & $0.4969_{\pm 0.1414}$ & $0.4810_{\pm 0.1931}$ & $0.6567_{\pm 0.1530}$ & $0.2466_{\pm 0.0274}$ & $0.2060_{\pm 0.0142}$ & $0.2151_{\pm 0.0378}$ \\
& & w/o kNN    & $0.5101_{\pm 0.0673}$ & $0.4574_{\pm 0.0787}$ & $0.5676_{\pm 0.0682}$ & $0.2880_{\pm 0.0717}$ & $0.3402_{\pm 0.2129}$ & $0.3599_{\pm 0.1304}$ & $0.2377_{\pm 0.0338}$ & $0.2060_{\pm 0.0148}$ & $0.2076_{\pm 0.0301}$ \\
& & w/o Reg    & $0.5337_{\pm 0.0754}$ & $0.5201_{\pm 0.0654}$ & $0.5807_{\pm 0.0718}$ & $0.4784_{\pm 0.1356}$ & $0.5087_{\pm 0.2053}$ & $0.6426_{\pm 0.1397}$ & $0.2505_{\pm 0.0286}$ & $0.2120_{\pm 0.0204}$ & $0.2159_{\pm 0.0278}$ \\
\cmidrule(lr){2-12}
& \multirow{6}{*}{3} 
& Full MINT  & $0.6690_{\pm 0.0418}$ & $0.6154_{\pm 0.0487}$ & $0.6610_{\pm 0.0545}$ & $0.6063_{\pm 0.0672}$ & $0.6008_{\pm 0.1304}$ & $0.7656_{\pm 0.0940}$ & $0.2757_{\pm 0.0256}$ & $0.2277_{\pm 0.0178}$ & $0.2309_{\pm 0.0252}$ \\
& & w/o SA     & $0.6655_{\pm 0.0415}$ & $0.6154_{\pm 0.0487}$ & $0.6553_{\pm 0.0553}$ & $0.6045_{\pm 0.0698}$ & $0.5944_{\pm 0.1414}$ & $0.7626_{\pm 0.0961}$ & $0.2713_{\pm 0.0263}$ & $0.2223_{\pm 0.0182}$ & $0.2308_{\pm 0.0253}$ \\
& & w/o OT     & $0.4600_{\pm 0.0794}$ & $0.4872_{\pm 0.0911}$ & $0.5127_{\pm 0.0788}$ & $0.5464_{\pm 0.1143}$ & $0.5844_{\pm 0.1793}$ & $0.6933_{\pm 0.0751}$ & $0.2571_{\pm 0.0268}$ & $0.2088_{\pm 0.0150}$ & $0.2242_{\pm 0.0391}$ \\
& & w/o FA     & $0.6508_{\pm 0.0510}$ & $0.6131_{\pm 0.0474}$ & $0.6467_{\pm 0.0615}$ & $0.5658_{\pm 0.1288}$ & $0.5467_{\pm 0.1450}$ & $0.7656_{\pm 0.0976}$ & $0.2722_{\pm 0.0261}$ & $0.2136_{\pm 0.0113}$ & $0.2309_{\pm 0.0267}$ \\
& & w/o kNN    & $0.6659_{\pm 0.0462}$ & $0.5803_{\pm 0.0545}$ & $0.6537_{\pm 0.0488}$ & $0.3414_{\pm 0.1011}$ & $0.4197_{\pm 0.1783}$ & $0.3883_{\pm 0.0599}$ & $0.2709_{\pm 0.0377}$ & $0.2103_{\pm 0.0099}$ & $0.2025_{\pm 0.0173}$ \\
& & w/o Reg    & $0.6649_{\pm 0.0410}$ & $0.6153_{\pm 0.0484}$ & $0.6531_{\pm 0.0608}$ & $0.5988_{\pm 0.0675}$ & $0.5733_{\pm 0.1455}$ & $0.7631_{\pm 0.0963}$ & $0.2720_{\pm 0.0262}$ & $0.2182_{\pm 0.0178}$ & $0.2308_{\pm 0.0248}$ \\
\cmidrule(lr){2-12}
& \multirow{6}{*}{5} 
& Full MINT  & $0.7332_{\pm 0.0307}$ & $0.6622_{\pm 0.0257}$ & $0.6950_{\pm 0.0449}$ & $0.6679_{\pm 0.0833}$ & $0.6435_{\pm 0.1255}$ & $0.8004_{\pm 0.0552}$ & $0.2914_{\pm 0.0236}$ & $0.2316_{\pm 0.0184}$ & $0.2405_{\pm 0.0275}$ \\
& & w/o SA     & $0.7343_{\pm 0.0276}$ & $0.6595_{\pm 0.0293}$ & $0.6921_{\pm 0.0449}$ & $0.6232_{\pm 0.1046}$ & $0.6090_{\pm 0.1355}$ & $0.7973_{\pm 0.0572}$ & $0.2966_{\pm 0.0248}$ & $0.2314_{\pm 0.0153}$ & $0.2390_{\pm 0.0277}$ \\
& & w/o OT     & $0.5479_{\pm 0.0866}$ & $0.4974_{\pm 0.0821}$ & $0.5882_{\pm 0.0659}$ & $0.6200_{\pm 0.1316}$ & $0.5612_{\pm 0.2082}$ & $0.7094_{\pm 0.0844}$ & $0.2675_{\pm 0.0376}$ & $0.2098_{\pm 0.0164}$ & $0.2123_{\pm 0.0295}$ \\
& & w/o FA     & $0.7203_{\pm 0.0338}$ & $0.6550_{\pm 0.0291}$ & $0.6848_{\pm 0.0465}$ & $0.6454_{\pm 0.0805}$ & $0.6067_{\pm 0.1092}$ & $0.8029_{\pm 0.0536}$ & $0.2844_{\pm 0.0278}$ & $0.2191_{\pm 0.0139}$ & $0.2389_{\pm 0.0299}$ \\
& & w/o kNN    & $0.7352_{\pm 0.0319}$ & $0.6383_{\pm 0.0352}$ & $0.6910_{\pm 0.0505}$ & $0.3010_{\pm 0.1264}$ & $0.5041_{\pm 0.1727}$ & $0.4019_{\pm 0.0697}$ & $0.3075_{\pm 0.0334}$ & $0.2129_{\pm 0.0136}$ & $0.2011_{\pm 0.0215}$ \\
& & w/o Reg    & $0.7309_{\pm 0.0321}$ & $0.6588_{\pm 0.0278}$ & $0.6856_{\pm 0.0449}$ & $0.6225_{\pm 0.1125}$ & $0.5701_{\pm 0.1526}$ & $0.8000_{\pm 0.0552}$ & $0.2878_{\pm 0.0247}$ & $0.2297_{\pm 0.0152}$ & $0.2382_{\pm 0.0278}$ \\
\midrule
\multirow{18}{*}{\textbf{GraphMAE2}} 
& \multirow{6}{*}{1} 
& Full MINT  & $0.4445_{\pm 0.0876}$ & $0.4934_{\pm 0.0730}$ & $0.5388_{\pm 0.0814}$ & $0.5076_{\pm 0.1479}$ & $0.4870_{\pm 0.2065}$ & $0.5683_{\pm 0.1676}$ & $0.2543_{\pm 0.0241}$ & $0.2204_{\pm 0.0176}$ & $0.2234_{\pm 0.0254}$ \\
& & w/o SA     & $0.4362_{\pm 0.0858}$ & $0.4870_{\pm 0.0781}$ & $0.5359_{\pm 0.0735}$ & $0.4871_{\pm 0.1805}$ & $0.3841_{\pm 0.1937}$ & $0.5362_{\pm 0.1700}$ & $0.2462_{\pm 0.0300}$ & $0.2124_{\pm 0.0173}$ & $0.2161_{\pm 0.0277}$ \\
& & w/o OT     & $0.2776_{\pm 0.0882}$ & $0.3413_{\pm 0.0639}$ & $0.4789_{\pm 0.0781}$ & $0.4389_{\pm 0.1702}$ & $0.3728_{\pm 0.2317}$ & $0.5551_{\pm 0.1891}$ & $0.2261_{\pm 0.0300}$ & $0.2033_{\pm 0.0154}$ & $0.2307_{\pm 0.0415}$ \\
& & w/o FA     & $0.4310_{\pm 0.0902}$ & $0.4791_{\pm 0.0809}$ & $0.5013_{\pm 0.0863}$ & $0.4894_{\pm 0.1543}$ & $0.4021_{\pm 0.1950}$ & $0.5544_{\pm 0.1605}$ & $0.2428_{\pm 0.0286}$ & $0.2133_{\pm 0.0191}$ & $0.2175_{\pm 0.0272}$ \\
& & w/o kNN    & $0.4477_{\pm 0.0669}$ & $0.3981_{\pm 0.0629}$ & $0.5135_{\pm 0.0710}$ & $0.2852_{\pm 0.0715}$ & $0.3069_{\pm 0.1372}$ & $0.2840_{\pm 0.0718}$ & $0.2603_{\pm 0.0351}$ & $0.2118_{\pm 0.0125}$ & $0.1954_{\pm 0.0294}$ \\
& & w/o Reg    & $0.4339_{\pm 0.0850}$ & $0.4860_{\pm 0.0799}$ & $0.5314_{\pm 0.0766}$ & $0.5006_{\pm 0.1702}$ & $0.3854_{\pm 0.1925}$ & $0.5168_{\pm 0.1645}$ & $0.2470_{\pm 0.0284}$ & $0.2112_{\pm 0.0172}$ & $0.2165_{\pm 0.0277}$ \\
\cmidrule(lr){2-12}
& \multirow{6}{*}{3} 
& Full MINT  & $0.6015_{\pm 0.0531}$ & $0.5780_{\pm 0.0555}$ & $0.6282_{\pm 0.0714}$ & $0.6000_{\pm 0.0889}$ & $0.6233_{\pm 0.1552}$ & $0.6617_{\pm 0.1222}$ & $0.2854_{\pm 0.0233}$ & $0.2246_{\pm 0.0170}$ & $0.2363_{\pm 0.0253}$ \\
& & w/o SA     & $0.5897_{\pm 0.0447}$ & $0.5684_{\pm 0.0582}$ & $0.5981_{\pm 0.0813}$ & $0.5851_{\pm 0.0873}$ & $0.5872_{\pm 0.1530}$ & $0.6083_{\pm 0.1473}$ & $0.2745_{\pm 0.0297}$ & $0.2230_{\pm 0.0178}$ & $0.2203_{\pm 0.0273}$ \\
& & w/o OT     & $0.4277_{\pm 0.0799}$ & $0.4149_{\pm 0.0623}$ & $0.5689_{\pm 0.0690}$ & $0.4917_{\pm 0.1652}$ & $0.5911_{\pm 0.1665}$ & $0.6106_{\pm 0.1215}$ & $0.2396_{\pm 0.0299}$ & $0.2182_{\pm 0.0152}$ & $0.2214_{\pm 0.0331}$ \\
& & w/o FA     & $0.5812_{\pm 0.0612}$ & $0.5731_{\pm 0.0634}$ & $0.6166_{\pm 0.0668}$ & $0.5845_{\pm 0.0887}$ & $0.5964_{\pm 0.1504}$ & $0.6398_{\pm 0.0761}$ & $0.2742_{\pm 0.0284}$ & $0.2219_{\pm 0.0174}$ & $0.2233_{\pm 0.0246}$ \\
& & w/o kNN    & $0.5792_{\pm 0.0640}$ & $0.5130_{\pm 0.0545}$ & $0.6185_{\pm 0.0682}$ & $0.3548_{\pm 0.0733}$ & $0.3736_{\pm 0.1037}$ & $0.3448_{\pm 0.1063}$ & $0.2960_{\pm 0.0355}$ & $0.2161_{\pm 0.0134}$ & $0.2254_{\pm 0.0181}$ \\
& & w/o Reg    & $0.5892_{\pm 0.0450}$ & $0.5687_{\pm 0.0581}$ & $0.5920_{\pm 0.0770}$ & $0.5848_{\pm 0.0877}$ & $0.5783_{\pm 0.1373}$ & $0.5978_{\pm 0.1640}$ & $0.2745_{\pm 0.0289}$ & $0.2216_{\pm 0.0174}$ & $0.2209_{\pm 0.0280}$ \\
\cmidrule(lr){2-12}
& \multirow{6}{*}{5} 
& Full MINT  & $0.6625_{\pm 0.0463}$ & $0.6235_{\pm 0.0296}$ & $0.6724_{\pm 0.0419}$ & $0.6346_{\pm 0.0695}$ & $0.6594_{\pm 0.1040}$ & $0.7052_{\pm 0.0463}$ & $0.3089_{\pm 0.0191}$ & $0.2275_{\pm 0.0170}$ & $0.2405_{\pm 0.0284}$ \\
& & w/o SA     & $0.6568_{\pm 0.0508}$ & $0.6113_{\pm 0.0454}$ & $0.6609_{\pm 0.0485}$ & $0.6470_{\pm 0.0537}$ & $0.5684_{\pm 0.1485}$ & $0.7100_{\pm 0.0705}$ & $0.3017_{\pm 0.0223}$ & $0.2242_{\pm 0.0189}$ & $0.2319_{\pm 0.0323}$ \\
& & w/o OT     & $0.5129_{\pm 0.0595}$ & $0.4558_{\pm 0.0880}$ & $0.5979_{\pm 0.0630}$ & $0.6070_{\pm 0.0802}$ & $0.4817_{\pm 0.1821}$ & $0.7291_{\pm 0.0867}$ & $0.2580_{\pm 0.0240}$ & $0.2144_{\pm 0.0165}$ & $0.2226_{\pm 0.0373}$ \\
& & w/o FA     & $0.6559_{\pm 0.0526}$ & $0.6162_{\pm 0.0327}$ & $0.6522_{\pm 0.0544}$ & $0.6537_{\pm 0.0589}$ & $0.5899_{\pm 0.1582}$ & $0.7181_{\pm 0.0555}$ & $0.2987_{\pm 0.0243}$ & $0.2239_{\pm 0.0195}$ & $0.2296_{\pm 0.0314}$ \\
& & w/o kNN    & $0.6496_{\pm 0.0634}$ & $0.5578_{\pm 0.0428}$ & $0.6600_{\pm 0.0430}$ & $0.4206_{\pm 0.0732}$ & $0.4012_{\pm 0.0744}$ & $0.3855_{\pm 0.0640}$ & $0.3273_{\pm 0.0291}$ & $0.2223_{\pm 0.0150}$ & $0.2130_{\pm 0.0204}$ \\
& & w/o Reg    & $0.6601_{\pm 0.0519}$ & $0.6108_{\pm 0.0451}$ & $0.6606_{\pm 0.0477}$ & $0.6463_{\pm 0.0559}$ & $0.5759_{\pm 0.1449}$ & $0.7087_{\pm 0.0719}$ & $0.3034_{\pm 0.0227}$ & $0.2238_{\pm 0.0189}$ & $0.2319_{\pm 0.0322}$ \\
\midrule
\multirow{18}{*}{\textbf{MaskGAE}} 
& \multirow{6}{*}{1} 
& Full MINT  & $0.5178_{\pm 0.0997}$ & $0.4981_{\pm 0.0711}$ & $0.5511_{\pm 0.0848}$ & $0.5510_{\pm 0.0697}$ & $0.5585_{\pm 0.1608}$ & $0.6681_{\pm 0.1528}$ & $0.3365_{\pm 0.0371}$ & $0.2726_{\pm 0.0319}$ & $0.2231_{\pm 0.0182}$ \\
& & w/o SA     & $0.5143_{\pm 0.1007}$ & $0.4980_{\pm 0.0710}$ & $0.5511_{\pm 0.0848}$ & $0.5521_{\pm 0.0672}$ & $0.5579_{\pm 0.1606}$ & $0.6683_{\pm 0.1533}$ & $0.3282_{\pm 0.0387}$ & $0.2649_{\pm 0.0478}$ & $0.2231_{\pm 0.0182}$ \\
& & w/o OT     & $0.2395_{\pm 0.0507}$ & $0.3051_{\pm 0.0626}$ & $0.3998_{\pm 0.0666}$ & $0.5132_{\pm 0.1148}$ & $0.5336_{\pm 0.1895}$ & $0.6433_{\pm 0.1706}$ & $0.3196_{\pm 0.0372}$ & $0.2488_{\pm 0.0339}$ & $0.2092_{\pm 0.0280}$ \\
& & w/o FA     & $0.5124_{\pm 0.1001}$ & $0.4936_{\pm 0.0719}$ & $0.5497_{\pm 0.0864}$ & $0.5398_{\pm 0.1113}$ & $0.5513_{\pm 0.1615}$ & $0.6415_{\pm 0.1707}$ & $0.3244_{\pm 0.0362}$ & $0.2186_{\pm 0.0190}$ & $0.2230_{\pm 0.0182}$ \\
& & w/o kNN    & $0.4834_{\pm 0.0943}$ & $0.3791_{\pm 0.0764}$ & $0.5352_{\pm 0.0977}$ & $0.3969_{\pm 0.1376}$ & $0.3601_{\pm 0.1843}$ & $0.3269_{\pm 0.0979}$ & $0.2787_{\pm 0.0489}$ & $0.2733_{\pm 0.0421}$ & $0.2119_{\pm 0.0269}$ \\
& & w/o Reg    & $0.5151_{\pm 0.1009}$ & $0.4971_{\pm 0.0712}$ & $0.5508_{\pm 0.0849}$ & $0.5465_{\pm 0.0947}$ & $0.5389_{\pm 0.1566}$ & $0.6524_{\pm 0.1610}$ & $0.3296_{\pm 0.0405}$ & $0.2174_{\pm 0.0192}$ & $0.2230_{\pm 0.0182}$ \\
\cmidrule(lr){2-12}
& \multirow{6}{*}{3} 
& Full MINT  & $0.6719_{\pm 0.0418}$ & $0.6028_{\pm 0.0394}$ & $0.6454_{\pm 0.0571}$ & $0.6161_{\pm 0.1004}$ & $0.6806_{\pm 0.0749}$ & $0.7589_{\pm 0.0431}$ & $0.3565_{\pm 0.0263}$ & $0.2917_{\pm 0.0275}$ & $0.2211_{\pm 0.0283}$ \\
& & w/o SA     & $0.6717_{\pm 0.0418}$ & $0.6024_{\pm 0.0396}$ & $0.6403_{\pm 0.0583}$ & $0.6119_{\pm 0.0983}$ & $0.6664_{\pm 0.0902}$ & $0.7596_{\pm 0.0426}$ & $0.3531_{\pm 0.0241}$ & $0.2818_{\pm 0.0415}$ & $0.2203_{\pm 0.0277}$ \\
& & w/o OT     & $0.2483_{\pm 0.0481}$ & $0.3205_{\pm 0.0487}$ & $0.4166_{\pm 0.0555}$ & $0.6182_{\pm 0.0807}$ & $0.6369_{\pm 0.0953}$ & $0.7285_{\pm 0.1091}$ & $0.3317_{\pm 0.0234}$ & $0.2650_{\pm 0.0308}$ & $0.2112_{\pm 0.0345}$ \\
& & w/o FA     & $0.6711_{\pm 0.0432}$ & $0.5994_{\pm 0.0373}$ & $0.6442_{\pm 0.0644}$ & $0.6060_{\pm 0.0979}$ & $0.6472_{\pm 0.1204}$ & $0.7424_{\pm 0.0562}$ & $0.3462_{\pm 0.0283}$ & $0.2328_{\pm 0.0213}$ & $0.2206_{\pm 0.0280}$ \\
& & w/o kNN    & $0.6638_{\pm 0.0519}$ & $0.5214_{\pm 0.0501}$ & $0.6309_{\pm 0.0587}$ & $0.4128_{\pm 0.1021}$ & $0.4764_{\pm 0.1011}$ & $0.3774_{\pm 0.1041}$ & $0.3387_{\pm 0.0388}$ & $0.2944_{\pm 0.0332}$ & $0.2127_{\pm 0.0254}$ \\
& & w/o Reg    & $0.6711_{\pm 0.0402}$ & $0.6009_{\pm 0.0394}$ & $0.6445_{\pm 0.0645}$ & $0.6027_{\pm 0.0957}$ & $0.6383_{\pm 0.1149}$ & $0.7413_{\pm 0.0620}$ & $0.3480_{\pm 0.0261}$ & $0.2361_{\pm 0.0190}$ & $0.2206_{\pm 0.0280}$ \\
\cmidrule(lr){2-12}
& \multirow{6}{*}{5} 
& Full MINT  & $0.7185_{\pm 0.0302}$ & $0.6412_{\pm 0.0290}$ & $0.6921_{\pm 0.0384}$ & $0.6448_{\pm 0.0561}$ & $0.6997_{\pm 0.0689}$ & $0.7892_{\pm 0.0545}$ & $0.3554_{\pm 0.0267}$ & $0.3005_{\pm 0.0270}$ & $0.2243_{\pm 0.0253}$ \\
& & w/o SA     & $0.7190_{\pm 0.0303}$ & $0.6412_{\pm 0.0291}$ & $0.6905_{\pm 0.0382}$ & $0.6463_{\pm 0.0532}$ & $0.6904_{\pm 0.0757}$ & $0.7861_{\pm 0.0592}$ & $0.3537_{\pm 0.0245}$ & $0.2986_{\pm 0.0332}$ & $0.2242_{\pm 0.0252}$ \\
& & w/o OT     & $0.2703_{\pm 0.0483}$ & $0.3790_{\pm 0.0462}$ & $0.4439_{\pm 0.0568}$ & $0.6590_{\pm 0.0483}$ & $0.6693_{\pm 0.0980}$ & $0.7744_{\pm 0.0553}$ & $0.3390_{\pm 0.0302}$ & $0.2764_{\pm 0.0310}$ & $0.2169_{\pm 0.0322}$ \\
& & w/o FA     & $0.7189_{\pm 0.0286}$ & $0.6378_{\pm 0.0288}$ & $0.6871_{\pm 0.0376}$ & $0.6308_{\pm 0.0681}$ & $0.6472_{\pm 0.1034}$ & $0.7827_{\pm 0.0577}$ & $0.3495_{\pm 0.0280}$ & $0.2390_{\pm 0.0251}$ & $0.2239_{\pm 0.0237}$ \\
& & w/o kNN    & $0.7228_{\pm 0.0373}$ & $0.5801_{\pm 0.0337}$ & $0.6918_{\pm 0.0469}$ & $0.3994_{\pm 0.0974}$ & $0.5041_{\pm 0.1015}$ & $0.4285_{\pm 0.0828}$ & $0.3633_{\pm 0.0378}$ & $0.3035_{\pm 0.0338}$ & $0.2164_{\pm 0.0242}$ \\
& & w/o Reg    & $0.7177_{\pm 0.0294}$ & $0.6412_{\pm 0.0291}$ & $0.6871_{\pm 0.0411}$ & $0.6295_{\pm 0.0728}$ & $0.6452_{\pm 0.1104}$ & $0.7838_{\pm 0.0617}$ & $0.3544_{\pm 0.0250}$ & $0.2415_{\pm 0.0238}$ & $0.2241_{\pm 0.0235}$ \\
\bottomrule
\end{tabular}
}
\end{table*}

\paragraph{GPPT~\cite{sun2022gppt}}GPPT reformulates node classification as link prediction between node representations and learnable class task tokens. During adaptation, the pretrained GNN remains frozen while the task tokens are optimized and used to compute class scores.

\begin{table*}[!t]
\centering
\caption{Sensitivity Analysis of the Number of Prompts $M$ (Accuracy$_{\pm \text{Std}}$)}
\label{tab:numprompt}
\resizebox{\textwidth}{!}{
\begin{tabular}{l c c ccc ccc ccc}
\toprule
\textbf{Model} & \textbf{Shot} & \textbf{$M$} & \textbf{Cora} & \textbf{CiteSeer} & \textbf{PubMed} & \textbf{Cornell} & \textbf{Texas} & \textbf{Wisconsin} & \textbf{Chameleon} & \textbf{Squirrel} & \textbf{Actor} \\
\midrule
\multirow{15}{*}{\textbf{GraphMAE}} 
& \multirow{5}{*}{1} 
  & 1  & $0.5331_{\text{\tiny $\pm$0.0531}}$ & $0.5246_{\text{\tiny $\pm$0.0468}}$ & $0.5633_{\text{\tiny $\pm$0.0604}}$ & $0.4874_{\text{\tiny $\pm$0.1686}}$ & $0.3484_{\text{\tiny $\pm$0.1572}}$ & $0.5551_{\text{\tiny $\pm$0.1676}}$ & $0.2495_{\text{\tiny $\pm$0.0213}}$ & $0.2011_{\text{\tiny $\pm$0.0112}}$ & $0.1992_{\text{\tiny $\pm$0.0413}}$ \\
& & 5  & $0.5320_{\text{\tiny $\pm$0.0553}}$ & $0.5250_{\text{\tiny $\pm$0.0549}}$ & $0.5723_{\text{\tiny $\pm$0.0698}}$ & $0.4916_{\text{\tiny $\pm$0.1296}}$ & $0.4484_{\text{\tiny $\pm$0.2144}}$ & $0.6262_{\text{\tiny $\pm$0.1566}}$ & $0.2436_{\text{\tiny $\pm$0.0355}}$ & $0.2021_{\text{\tiny $\pm$0.0045}}$ & $0.1856_{\text{\tiny $\pm$0.0466}}$ \\
& & 10 & $0.5346_{\text{\tiny $\pm$0.0510}}$ & $0.5260_{\text{\tiny $\pm$0.0488}}$ & $0.5723_{\text{\tiny $\pm$0.0690}}$ & $0.5227_{\text{\tiny $\pm$0.1384}}$ & $0.5190_{\text{\tiny $\pm$0.1741}}$ & $0.6374_{\text{\tiny $\pm$0.1610}}$ & $0.2505_{\text{\tiny $\pm$0.0288}}$ & $0.2221_{\text{\tiny $\pm$0.0193}}$ & $0.2334_{\pm 0.0161}$ \\
& & 20 & $0.5282_{\text{\tiny $\pm$0.0524}}$ & $0.5199_{\text{\tiny $\pm$0.0585}}$ & $0.5688_{\text{\tiny $\pm$0.0654}}$ & $0.5218_{\text{\tiny $\pm$0.1307}}$ & $0.4492_{\text{\tiny $\pm$0.1573}}$ & $0.6465_{\text{\tiny $\pm$0.1652}}$ & $0.2498_{\text{\tiny $\pm$0.0267}}$ & $0.2114_{\text{\tiny $\pm$0.0249}}$ & $0.2098_{\text{\tiny $\pm$0.0279}}$ \\
& & 50 & $0.5218_{\text{\tiny $\pm$0.0432}}$ & $0.5238_{\text{\tiny $\pm$0.0544}}$ & $0.5687_{\text{\tiny $\pm$0.0636}}$ & $0.5101_{\text{\tiny $\pm$0.1373}}$ & $0.4429_{\text{\tiny $\pm$0.2066}}$ & $0.6444_{\text{\tiny $\pm$0.1458}}$ & $0.2521_{\text{\tiny $\pm$0.0241}}$ & $0.1978_{\text{\tiny $\pm$0.0143}}$ & $0.2018_{\text{\tiny $\pm$0.0335}}$ \\
\cmidrule(lr){2-12}
& \multirow{5}{*}{3} 
  & 1  & $0.6624_{\text{\tiny $\pm$0.0486}}$ & $0.6055_{\text{\tiny $\pm$0.0478}}$ & $0.6396_{\text{\tiny $\pm$0.0469}}$ & $0.5768_{\text{\tiny $\pm$0.1070}}$ & $0.5625_{\text{\tiny $\pm$0.2061}}$ & $0.7789_{\text{\tiny $\pm$0.0422}}$ & $0.2572_{\text{\tiny $\pm$0.0300}}$ & $0.2099_{\text{\tiny $\pm$0.0104}}$ & $0.2323_{\text{\tiny $\pm$0.0241}}$ \\
& & 5  & $0.6656_{\text{\tiny $\pm$0.0516}}$ & $0.6086_{\text{\tiny $\pm$0.0474}}$ & $0.6478_{\text{\tiny $\pm$0.0398}}$ & $0.5848_{\text{\tiny $\pm$0.0899}}$ & $0.5883_{\text{\tiny $\pm$0.1760}}$ & $0.7828_{\text{\tiny $\pm$0.0402}}$ & $0.2643_{\text{\tiny $\pm$0.0285}}$ & $0.2207_{\text{\tiny $\pm$0.0186}}$ & $0.2278_{\text{\tiny $\pm$0.0220}}$ \\
& & 10 & $0.6775_{\text{\tiny $\pm$0.0408}}$ & $0.6114_{\text{\tiny $\pm$0.0492}}$ & $0.6414_{\text{\tiny $\pm$0.0447}}$ & $0.5946_{\text{\tiny $\pm$0.0816}}$ & $0.5767_{\text{\tiny $\pm$0.1570}}$ & $0.7717_{\text{\tiny $\pm$0.0484}}$ & $0.2666_{\text{\tiny $\pm$0.0303}}$ & $0.2258_{\text{\tiny $\pm$0.0161}}$ & $0.2354_{\text{\tiny $\pm$0.0195}}$ \\
& & 20 & $0.6781_{\text{\tiny $\pm$0.0398}}$ & $0.6096_{\text{\tiny $\pm$0.0474}}$ & $0.6324_{\text{\tiny $\pm$0.0443}}$ & $0.5938_{\text{\tiny $\pm$0.0828}}$ & $0.5908_{\text{\tiny $\pm$0.1889}}$ & $0.7717_{\text{\tiny $\pm$0.0485}}$ & $0.2608_{\text{\tiny $\pm$0.0348}}$ & $0.2154_{\text{\tiny $\pm$0.0132}}$ & $0.2312_{\text{\tiny $\pm$0.0233}}$ \\
& & 50 & $0.6768_{\text{\tiny $\pm$0.0406}}$ & $0.6089_{\text{\tiny $\pm$0.0465}}$ & $0.6365_{\text{\tiny $\pm$0.0465}}$ & $0.5929_{\text{\tiny $\pm$0.0872}}$ & $0.6083_{\text{\tiny $\pm$0.1487}}$ & $0.7678_{\text{\tiny $\pm$0.0560}}$ & $0.2588_{\text{\tiny $\pm$0.0311}}$ & $0.2205_{\text{\tiny $\pm$0.0155}}$ & $0.2332_{\text{\tiny $\pm$0.0199}}$ \\
\cmidrule(lr){2-12}
& \multirow{5}{*}{5} 
  & 1  & $0.7329_{\text{\tiny $\pm$0.0354}}$ & $0.6578_{\text{\tiny $\pm$0.0330}}$ & $0.6822_{\text{\tiny $\pm$0.0393}}$ & $0.3657_{\text{\tiny $\pm$0.2015}}$ & $0.5539_{\text{\tiny $\pm$0.2345}}$ & $0.8000_{\text{\tiny $\pm$0.0310}}$ & $0.2780_{\text{\tiny $\pm$0.0288}}$ & $0.2177_{\text{\tiny $\pm$0.0122}}$ & $0.2501_{\text{\tiny $\pm$0.0192}}$ \\
& & 5  & $0.7372_{\text{\tiny $\pm$0.0247}}$ & $0.6583_{\pm 0.0306}$ & $0.6818_{\text{\tiny $\pm$0.0373}}$ & $0.6248_{\text{\tiny $\pm$0.1029}}$ & $0.5548_{\text{\tiny $\pm$0.1592}}$ & $0.7994_{\text{\tiny $\pm$0.0321}}$ & $0.2812_{\text{\tiny $\pm$0.0339}}$ & $0.2296_{\text{\tiny $\pm$0.0188}}$ & $0.2516_{\text{\tiny $\pm$0.0179}}$ \\
& & 10 & $0.7356_{\text{\tiny $\pm$0.0292}}$ & $0.6587_{\text{\tiny $\pm$0.0307}}$ & $0.6866_{\text{\tiny $\pm$0.0376}}$ & $0.6190_{\text{\tiny $\pm$0.0922}}$ & $0.6200_{\text{\tiny $\pm$0.1347}}$ & $0.7994_{\text{\tiny $\pm$0.0313}}$ & $0.2811_{\text{\tiny $\pm$0.0270}}$ & $0.2329_{\text{\tiny $\pm$0.0210}}$ & $0.2520_{\text{\tiny $\pm$0.0132}}$ \\
& & 20 & $0.7299_{\text{\tiny $\pm$0.0354}}$ & $0.6584_{\text{\tiny $\pm$0.0324}}$ & $0.6768_{\text{\tiny $\pm$0.0363}}$ & $0.6400_{\text{\tiny $\pm$0.1464}}$ & $0.5583_{\text{\tiny $\pm$0.1324}}$ & $0.8000_{\text{\tiny $\pm$0.0277}}$ & $0.2837_{\text{\tiny $\pm$0.0316}}$ & $0.2270_{\text{\tiny $\pm$0.0205}}$ & $0.2532_{\text{\tiny $\pm$0.0151}}$ \\
& & 50 & $0.7299_{\text{\tiny $\pm$0.0353}}$ & $0.6579_{\text{\tiny $\pm$0.0318}}$ & $0.6860_{\text{\tiny $\pm$0.0337}}$ & $0.5829_{\text{\tiny $\pm$0.1553}}$ & $0.5852_{\text{\tiny $\pm$0.1652}}$ & $0.7994_{\text{\tiny $\pm$0.0295}}$ & $0.2830_{\text{\tiny $\pm$0.0296}}$ & $0.2267_{\text{\tiny $\pm$0.0220}}$ & $0.2507_{\text{\tiny $\pm$0.0168}}$ \\
\midrule
\multirow{15}{*}{\textbf{GraphMAE2}} 
& \multirow{5}{*}{1} 
  & 1  & $0.4535_{\text{\tiny $\pm$0.0715}}$ & $0.4728_{\text{\tiny $\pm$0.0626}}$ & $0.5189_{\text{\tiny $\pm$0.0608}}$ & $0.4756_{\text{\tiny $\pm$0.1695}}$ & $0.3643_{\text{\tiny $\pm$0.2248}}$ & $0.4786_{\text{\tiny $\pm$0.1566}}$ & $0.2422_{\text{\tiny $\pm$0.0203}}$ & $0.2138_{\text{\tiny $\pm$0.0192}}$ & $0.2171_{\text{\tiny $\pm$0.0306}}$ \\
& & 5  & $0.4368_{\text{\tiny $\pm$0.0809}}$ & $0.4659_{\text{\tiny $\pm$0.0703}}$ & $0.5498_{\text{\tiny $\pm$0.0599}}$ & $0.5076_{\text{\tiny $\pm$0.1478}}$ & $0.3611_{\text{\tiny $\pm$0.2030}}$ & $0.5080_{\text{\tiny $\pm$0.1674}}$ & $0.2392_{\text{\tiny $\pm$0.0271}}$ & $0.2193_{\text{\tiny $\pm$0.0174}}$ & $0.2180_{\text{\tiny $\pm$0.0315}}$ \\
& & 10 & $0.4619_{\text{\tiny $\pm$0.0744}}$ & $0.4668_{\text{\tiny $\pm$0.0642}}$ & $0.5620_{\text{\tiny $\pm$0.0566}}$ & $0.5109_{\text{\tiny $\pm$0.1422}}$ & $0.3992_{\text{\tiny $\pm$0.2209}}$ & $0.5134_{\text{\tiny $\pm$0.1679}}$ & $0.2464_{\text{\tiny $\pm$0.0205}}$ & $0.2214_{\text{\tiny $\pm$0.0175}}$ & $0.2172_{\text{\tiny $\pm$0.0311}}$ \\
& & 20 & $0.4646_{\text{\tiny $\pm$0.0698}}$ & $0.4635_{\text{\tiny $\pm$0.0674}}$ & $0.5294_{\text{\tiny $\pm$0.0411}}$ & $0.4765_{\text{\tiny $\pm$0.1319}}$ & $0.3698_{\text{\tiny $\pm$0.2100}}$ & $0.4920_{\text{\tiny $\pm$0.1646}}$ & $0.2465_{\text{\tiny $\pm$0.0189}}$ & $0.2172_{\text{\tiny $\pm$0.0224}}$ & $0.2174_{\text{\tiny $\pm$0.0308}}$ \\
& & 50 & $0.4426_{\text{\tiny $\pm$0.0766}}$ & $0.4669_{\text{\tiny $\pm$0.0652}}$ & $0.5157_{\text{\tiny $\pm$0.0442}}$ & $0.5143_{\text{\tiny $\pm$0.1442}}$ & $0.4087_{\text{\tiny $\pm$0.2230}}$ & $0.5037_{\text{\tiny $\pm$0.1637}}$ & $0.2498_{\text{\tiny $\pm$0.0235}}$ & $0.2204_{\text{\tiny $\pm$0.0201}}$ & $0.2173_{\text{\tiny $\pm$0.0319}}$ \\
\cmidrule(lr){2-12}
& \multirow{3}{*}{3} 
  & 1  & $0.5840_{\pm 0.0346}$ & $0.5594_{\pm 0.0462}$ & $0.5843_{\pm 0.0607}$ & $0.5830_{\pm 0.0891}$ & $0.6058_{\pm 0.1138}$ & $0.6472_{\pm 0.2322}$ & $0.2768_{\pm 0.0217}$ & $0.2177_{\pm 0.0170}$ & $0.2258_{\pm 0.0301}$ \\
& & 5  & $0.6019_{\pm 0.0483}$ & $0.5596_{\pm 0.0496}$ & $0.5668_{\pm 0.0470}$ & $0.5902_{\pm 0.1018}$ & $0.5833_{\pm 0.1541}$ & $0.5694_{\pm 0.1007}$ & $0.2798_{\pm 0.0203}$ & $0.2195_{\pm 0.0162}$ & $0.2341_{\pm 0.0224}$ \\
& & 10 & $0.6211_{\pm 0.0277}$ & $0.5726_{\pm 0.0479}$ & $0.5917_{\pm 0.0680}$ & $0.6187_{\pm 0.0373}$ & $0.5958_{\pm 0.1613}$ & $0.5983_{\pm 0.1204}$ & $0.2782_{\pm 0.0205}$ & $0.2180_{\pm 0.0145}$ & $0.2356_{\pm 0.0247}$ \\
& & 20 & $0.5983_{\pm 0.0282}$ & $0.5724_{\pm 0.0489}$ & $0.5889_{\pm 0.0530}$ & $0.5973_{\pm 0.0841}$ & $0.6358_{\pm 0.1243}$ & $0.5822_{\pm 0.1717}$ & $0.2756_{\pm 0.0197}$ & $0.2171_{\pm 0.0136}$ & $0.2259_{\pm 0.0344}$ \\
& & 50 & $0.6076_{\pm 0.0347}$ & $0.5693_{\pm 0.0471}$ & $0.6058_{\pm 0.0626}$ & $0.5937_{\pm 0.0768}$ & $0.6108_{\pm 0.1189}$ & $0.5911_{\pm 0.1271}$ & $0.2707_{\pm 0.0221}$ & $0.2199_{\pm 0.0126}$ & $0.2183_{\pm 0.0312}$ \\
\cmidrule(lr){2-12}
& \multirow{3}{*}{5} 
  & 1  & $0.6714_{\pm 0.0473}$ & $0.5926_{\pm 0.0332}$ & $0.6535_{\pm 0.0524}$ & $0.6324_{\pm 0.0531}$ & $0.6391_{\pm 0.1166}$ & $0.7081_{\pm 0.0526}$ & $0.3064_{\pm 0.0202}$ & $0.2266_{\pm 0.0175}$ & $0.2388_{\pm 0.0277}$ \\
& & 5  & $0.6734_{\pm 0.0480}$ & $0.6044_{\pm 0.0486}$ & $0.6515_{\pm 0.0523}$ & $0.6362_{\pm 0.0621}$ & $0.6470_{\pm 0.0927}$ & $0.6971_{\pm 0.0448}$ & $0.3422_{\pm 0.0329}$ & $0.3060_{\pm 0.0193}$ & $0.2393_{\pm 0.0151}$ \\
& & 10 & $0.6748_{\pm 0.0439}$ & $0.6132_{\pm 0.0350}$ & $0.6520_{\pm 0.0570}$ & $0.6619_{\pm 0.0548}$ & $0.6757_{\pm 0.0825}$ & $0.7104_{\pm 0.0361}$ & $0.3037_{\pm 0.0170}$ & $0.2302_{\pm 0.0185}$ & $0.2477_{\pm 0.0257}$ \\
& & 20 & $0.6610_{\pm 0.0562}$ & $0.6005_{\pm 0.0420}$ & $0.6320_{\pm 0.0665}$ & $0.6486_{\pm 0.0508}$ & $0.6791_{\pm 0.0829}$ & $0.7012_{\pm 0.0590}$ & $0.3024_{\pm 0.0172}$ & $0.2266_{\pm 0.0197}$ & $0.2518_{\pm 0.0185}$ \\
& & 50 & $0.6637_{\pm 0.0490}$ & $0.5888_{\pm 0.0461}$ & $0.6410_{\pm 0.0535}$ & $0.6600_{\pm 0.0541}$ & $0.6678_{\pm 0.0894}$ & $0.6763_{\pm 0.0687}$ & $0.3000_{\pm 0.0175}$ & $0.2267_{\pm 0.0195}$ & $0.2510_{\pm 0.0177}$ \\
\midrule
\multirow{9}{*}{\textbf{MaskGAE}} 
& \multirow{3}{*}{1} 
  & 1  & $0.5203_{\pm 0.1074}$ & $0.4838_{\pm 0.0832}$ & $0.5367_{\pm 0.0906}$ & $0.4857_{\pm 0.1065}$ & $0.5016_{\pm 0.1546}$ & $0.6053_{\pm 0.2105}$ & $0.3288_{\pm 0.0192}$ & $0.2538_{\pm 0.0481}$ & $0.2277_{\pm 0.0131}$ \\
& & 5  & $0.5188_{\pm 0.1027}$ & $0.4838_{\pm 0.0760}$ & $0.5391_{\pm 0.0909}$ & $0.5286_{\pm 0.0658}$ & $0.5063_{\pm 0.1475}$ & $0.6048_{\pm 0.2110}$ & $0.3230_{\pm 0.0430}$ & $0.2586_{\pm 0.0436}$ & $0.2279_{\pm 0.0133}$ \\
& & 10 & $0.5240_{\pm 0.1042}$ & $0.4957_{\pm 0.0775}$ & $0.5382_{\pm 0.0917}$ & $0.5437_{\pm 0.0697}$ & $0.5778_{\pm 0.1568}$ & $0.6241_{\pm 0.2068}$ & $0.3244_{\pm 0.0297}$ & $0.2696_{\pm 0.0272}$ & $0.2279_{\pm 0.0132}$ \\
& & 20 & $0.5257_{\pm 0.1027}$ & $0.4845_{\pm 0.0760}$ & $0.5396_{\pm 0.0923}$ & $0.5345_{\pm 0.0627}$ & $0.5532_{\pm 0.1566}$ & $0.6048_{\pm 0.2120}$ & $0.3159_{\pm 0.0505}$ & $0.2601_{\pm 0.0446}$ & $0.2278_{\pm 0.0131}$ \\
& & 50 & $0.5199_{\pm 0.1032}$ & $0.4828_{\pm 0.0743}$ & $0.5422_{\pm 0.0907}$ & $0.5437_{\pm 0.0566}$ & $0.5460_{\pm 0.1454}$ & $0.6032_{\pm 0.2096}$ & $0.3067_{\pm 0.0299}$ & $0.2308_{\pm 0.0416}$ & $0.2277_{\pm 0.0131}$ \\
\cmidrule(lr){2-12}
& \multirow{3}{*}{3} 
  & 1  & $0.6598_{\pm 0.0351}$ & $0.5831_{\pm 0.0429}$ & $0.6497_{\pm 0.0565}$ & $0.6348_{\pm 0.0819}$ & $0.5958_{\pm 0.1562}$ & $0.7683_{\pm 0.0348}$ & $0.3399_{\pm 0.0363}$ & $0.2747_{\pm 0.0452}$ & $0.2338_{\pm 0.0222}$ \\
& & 5  & $0.6623_{\pm 0.0309}$ & $0.5923_{\pm 0.0386}$ & $0.6379_{\pm 0.0650}$ & $0.6107_{\pm 0.1126}$ & $0.6608_{\pm 0.0699}$ & $0.7711_{\pm 0.0342}$ & $0.3454_{\pm 0.0239}$ & $0.2753_{\pm 0.0511}$ & $0.2309_{\pm 0.0201}$ \\
& & 10 & $0.6668_{\pm 0.0354}$ & $0.5974_{\pm 0.0382}$ & $0.6458_{\pm 0.0569}$ & $0.6518_{\pm 0.0736}$ & $0.6775_{\pm 0.0482}$ & $0.7744_{\pm 0.0299}$ & $0.3541_{\pm 0.0286}$ & $0.2865_{\pm 0.0315}$ & $0.2338_{\pm 0.0222}$ \\
& & 20 & $0.6638_{\pm 0.0322}$ & $0.5899_{\pm 0.0370}$ & $0.6499_{\pm 0.0482}$ & $0.6196_{\pm 0.1013}$ & $0.6417_{\pm 0.0997}$ & $0.7722_{\pm 0.0401}$ & $0.3444_{\pm 0.0293}$ & $0.2726_{\pm 0.0447}$ & $0.2317_{\pm 0.0209}$ \\
& & 50 & $0.6639_{\pm 0.0325}$ & $0.5888_{\pm 0.0364}$ & $0.6490_{\pm 0.0528}$ & $0.6393_{\pm 0.0840}$ & $0.6625_{\pm 0.0935}$ & $0.7600_{\pm 0.0389}$ & $0.3496_{\pm 0.0330}$ & $0.2867_{\pm 0.0349}$ & $0.2317_{\pm 0.0208}$ \\
\cmidrule(lr){2-12}
& \multirow{3}{*}{5} 
  & 1  & $0.7276_{\pm 0.0198}$ & $0.6381_{\pm 0.0367}$ & $0.6728_{\pm 0.0361}$ & $0.6505_{\pm 0.0478}$ & $0.6965_{\pm 0.0600}$ & $0.7844_{\pm 0.0579}$ & $0.3423_{\pm 0.0272}$ & $0.3007_{\pm 0.0202}$ & $0.2400_{\pm 0.0101}$ \\
& & 5  & $0.7289_{\pm 0.0189}$ & $0.6316_{\pm 0.0379}$ & $0.6719_{\pm 0.0443}$ & $0.6400_{\pm 0.1094}$ & $0.6930_{\pm 0.0597}$ & $0.7890_{\pm 0.0480}$ & $0.3422_{\pm 0.0329}$ & $0.3060_{\pm 0.0193}$ & $0.2393_{\pm 0.0151}$ \\
& & 10 & $0.7288_{\pm 0.0181}$ & $0.6370_{\pm 0.0394}$ & $0.6841_{\pm 0.0329}$ & $0.6524_{\pm 0.0717}$ & $0.6957_{\pm 0.0600}$ & $0.7925_{\pm 0.0483}$ & $0.3535_{\pm 0.0282}$ & $0.3090_{\pm 0.0193}$ & $0.2394_{\pm 0.0149}$ \\
& & 20 & $0.7288_{\pm 0.0187}$ & $0.6332_{\pm 0.0388}$ & $0.6762_{\pm 0.0346}$ & $0.6457_{\pm 0.0901}$ & $0.6835_{\pm 0.0632}$ & $0.7867_{\pm 0.0482}$ & $0.3485_{\pm 0.0258}$ & $0.2962_{\pm 0.0316}$ & $0.2374_{\pm 0.0123}$ \\
& & 50 & $0.7287_{\pm 0.0189}$ & $0.6360_{\pm 0.0397}$ & $0.6754_{\pm 0.0321}$ & $0.6543_{\pm 0.0604}$ & $0.6774_{\pm 0.0652}$ & $0.7861_{\pm 0.0482}$ & $0.3424_{\pm 0.0269}$ & $0.2872_{\pm 0.0346}$ & $0.2374_{\pm 0.0123}$ \\
\bottomrule
\end{tabular}
}
\end{table*}

\begin{table*}[!t]
\centering
\caption{Sensitivity Analysis of the Transport-Cost Coefficient $\beta$ (Accuracy$_{\pm \text{Std}}$)}
\label{tab: beta}
\resizebox{\textwidth}{!}{
\begin{tabular}{l c c ccc ccc ccc}
\toprule
\textbf{Model} & \textbf{Shot} & \textbf{$\beta$} & \textbf{Cora} & \textbf{CiteSeer} & \textbf{PubMed} & \textbf{Cornell} & \textbf{Texas} & \textbf{Wisconsin} & \textbf{Chameleon} & \textbf{Squirrel} & \textbf{Actor} \\
\midrule
\multirow{15}{*}{\textbf{GraphMAE}} 
& \multirow{5}{*}{1} 
  & 0.001 & $0.5378_{\pm 0.0498}$ & $0.5260_{\pm 0.0488}$ & $0.5717_{\pm 0.0686}$ & $0.5227_{\pm 0.1332}$ & $0.5198_{\pm 0.1725}$ & $0.6374_{\pm 0.1623}$ & $0.2507_{\pm 0.0292}$ & $0.2066_{\pm 0.0186}$ & $0.2078_{\pm 0.0285}$ \\
& & 0.005 & $0.5344_{\pm 0.0511}$ & $0.5282_{\pm 0.0521}$ & $0.5681_{\pm 0.0669}$ & $0.5252_{\pm 0.1320}$ & $0.4659_{\pm 0.1911}$ & $0.6374_{\pm 0.1610}$ & $0.2507_{\pm 0.0290}$ & $0.2211_{\pm 0.0183}$ & $0.2334_{\pm 0.0161}$ \\
& & 0.01  & $0.5351_{\pm 0.0510}$ & $0.5150_{\pm 0.0561}$ & $0.5695_{\pm 0.0680}$ & $0.5151_{\pm 0.1163}$ & $0.3802_{\pm 0.1838}$ & $0.6615_{\pm 0.1434}$ & $0.2505_{\pm 0.0288}$ & $0.2026_{\pm 0.0127}$ & $0.2063_{\pm 0.0244}$ \\
& & 0.05  & $0.5294_{\pm 0.0436}$ & $0.5142_{\pm 0.0510}$ & $0.5662_{\pm 0.0718}$ & $0.5227_{\pm 0.1384}$ & $0.3675_{\pm 0.1823}$ & $0.6209_{\pm 0.1821}$ & $0.2482_{\pm 0.0280}$ & $0.2020_{\pm 0.0104}$ & $0.2221_{\pm 0.0219}$ \\
& & 0.1   & $0.5240_{\pm 0.0479}$ & $0.5268_{\pm 0.0465}$ & $0.5601_{\pm 0.0689}$ & $0.5353_{\pm 0.1519}$ & $0.4698_{\pm 0.1953}$ & $0.6513_{\pm 0.1304}$ & $0.2480_{\pm 0.0279}$ & $0.2073_{\pm 0.0218}$ & $0.2074_{\pm 0.0360}$ \\
\cmidrule(lr){2-12}
& \multirow{5}{*}{3} 
  & 0.001 & $0.6764_{\pm 0.0418}$ & $0.6121_{\pm 0.0490}$ & $0.6223_{\pm 0.0438}$ & $0.5920_{\pm 0.0761}$ & $0.5808_{\pm 0.1520}$ & $0.7717_{\pm 0.0484}$ & $0.2599_{\pm 0.0350}$ & $0.2179_{\pm 0.0148}$ & $0.2356_{\pm 0.0195}$ \\
& & 0.005 & $0.6746_{\pm 0.0433}$ & $0.6121_{\pm 0.0493}$ & $0.6220_{\pm 0.0438}$ & $0.5929_{\pm 0.0723}$ & $0.6108_{\pm 0.1084}$ & $0.7661_{\pm 0.0590}$ & $0.2601_{\pm 0.0349}$ & $0.2216_{\pm 0.0115}$ & $0.2355_{\pm 0.0196}$ \\
& & 0.01  & $0.6746_{\pm 0.0438}$ & $0.6115_{\pm 0.0488}$ & $0.6363_{\pm 0.0457}$ & $0.5946_{\pm 0.0816}$ & $0.5208_{\pm 0.1631}$ & $0.7678_{\pm 0.0559}$ & $0.2602_{\pm 0.0350}$ & $0.2178_{\pm 0.0165}$ & $0.2354_{\pm 0.0195}$ \\
& & 0.05  & $0.6764_{\pm 0.0416}$ & $0.6114_{\pm 0.0492}$ & $0.6368_{\pm 0.0457}$ & $0.5920_{\pm 0.0822}$ & $0.5100_{\pm 0.1785}$ & $0.7739_{\pm 0.0482}$ & $0.2624_{\pm 0.0319}$ & $0.2200_{\pm 0.0149}$ & $0.2352_{\pm 0.0194}$ \\
& & 0.1   & $0.6716_{\pm 0.0405}$ & $0.6100_{\pm 0.0487}$ & $0.6385_{\pm 0.0453}$ & $0.5937_{\pm 0.0808}$ & $0.5650_{\pm 0.1587}$ & $0.7739_{\pm 0.0461}$ & $0.2642_{\pm 0.0303}$ & $0.2192_{\pm 0.0171}$ & $0.2354_{\pm 0.0193}$ \\
\cmidrule(lr){2-12}
& \multirow{5}{*}{5} 
  & 0.001 & $0.7320_{\pm 0.0372}$ & $0.6576_{\pm 0.0330}$ & $0.6692_{\pm 0.0284}$ & $0.5571_{\pm 0.1545}$ & $0.5070_{\pm 0.2062}$ & $0.7994_{\pm 0.0313}$ & $0.2827_{\pm 0.0282}$ & $0.2320_{\pm 0.0183}$ & $0.2493_{\pm 0.0149}$ \\
& & 0.005 & $0.7328_{\pm 0.0369}$ & $0.6587_{\pm 0.0307}$ & $0.6681_{\pm 0.0268}$ & $0.6086_{\pm 0.1928}$ & $0.6287_{\pm 0.1097}$ & $0.7994_{\pm 0.0313}$ & $0.2799_{\pm 0.0299}$ & $0.2360_{\pm 0.0183}$ & $0.2495_{\pm 0.0146}$ \\
& & 0.01  & $0.7319_{\pm 0.0364}$ & $0.6585_{\pm 0.0314}$ & $0.6701_{\pm 0.0260}$ & $0.6371_{\pm 0.0898}$ & $0.6783_{\pm 0.0525}$ & $0.7994_{\pm 0.0313}$ & $0.2815_{\pm 0.0272}$ & $0.2323_{\pm 0.0164}$ & $0.2499_{\pm 0.0142}$ \\
& & 0.05  & $0.7363_{\pm 0.0247}$ & $0.6535_{\pm 0.0365}$ & $0.6787_{\pm 0.0310}$ & $0.6162_{\pm 0.1651}$ & $0.4861_{\pm 0.1524}$ & $0.7994_{\pm 0.0327}$ & $0.2825_{\pm 0.0249}$ & $0.2264_{\pm 0.0241}$ & $0.2494_{\pm 0.0148}$ \\
& & 0.1   & $0.7342_{\pm 0.0358}$ & $0.6514_{\pm 0.0381}$ & $0.6821_{\pm 0.0377}$ & $0.5210_{\pm 0.1740}$ & $0.5817_{\pm 0.1533}$ & $0.7994_{\pm 0.0327}$ & $0.2817_{\pm 0.0261}$ & $0.2207_{\pm 0.0142}$ & $0.2517_{\pm 0.0138}$ \\
\midrule
\multirow{15}{*}{\textbf{GraphMAE2}} 
& \multirow{5}{*}{1} 
  & 0.001 & $0.4605_{\pm 0.0774}$ & $0.4667_{\pm 0.0642}$ & $0.5620_{\pm 0.0566}$ & $0.5109_{\pm 0.1422}$ & $0.3992_{\pm 0.2209}$ & $0.4840_{\pm 0.1548}$ & $0.2464_{\pm 0.0205}$ & $0.2192_{\pm 0.0177}$ & $0.2165_{\pm 0.0311}$ \\
& & 0.005 & $0.4606_{\pm 0.0743}$ & $0.4667_{\pm 0.0642}$ & $0.5286_{\pm 0.0724}$ & $0.5109_{\pm 0.1422}$ & $0.3992_{\pm 0.2209}$ & $0.4840_{\pm 0.1548}$ & $0.2473_{\pm 0.0206}$ & $0.2191_{\pm 0.0177}$ & $0.2174_{\pm 0.0311}$ \\
& & 0.01  & $0.4606_{\pm 0.0743}$ & $0.4668_{\pm 0.0642}$ & $0.5148_{\pm 0.0843}$ & $0.5109_{\pm 0.1422}$ & $0.3905_{\pm 0.2123}$ & $0.4840_{\pm 0.1548}$ & $0.2459_{\pm 0.0205}$ & $0.2197_{\pm 0.0184}$ & $0.2172_{\pm 0.0311}$ \\
& & 0.05  & $0.4586_{\pm 0.0764}$ & $0.4667_{\pm 0.0642}$ & $0.5516_{\pm 0.0414}$ & $0.5109_{\pm 0.1422}$ & $0.3992_{\pm 0.2209}$ & $0.5139_{\pm 0.1682}$ & $0.2471_{\pm 0.0202}$ & $0.2186_{\pm 0.0181}$ & $0.2176_{\pm 0.0311}$ \\
& & 0.1   & $0.4593_{\pm 0.0753}$ & $0.4667_{\pm 0.0641}$ & $0.5468_{\pm 0.0410}$ & $0.5109_{\pm 0.1422}$ & $0.3889_{\pm 0.2111}$ & $0.5134_{\pm 0.1679}$ & $0.2464_{\pm 0.0209}$ & $0.2201_{\pm 0.0184}$ & $0.2182_{\pm 0.0313}$ \\
\cmidrule(lr){2-12}
& \multirow{5}{*}{3} 
  & 0.001 & $0.6192_{\pm 0.0303}$ & $0.5725_{\pm 0.0481}$ & $0.5866_{\pm 0.0606}$ & $0.5920_{\pm 0.0778}$ & $0.5967_{\pm 0.1610}$ & $0.6183_{\pm 0.1133}$ & $0.2788_{\pm 0.0214}$ & $0.2180_{\pm 0.0147}$ & $0.2355_{\pm 0.0247}$ \\
& & 0.005 & $0.6191_{\pm 0.0277}$ & $0.5725_{\pm 0.0480}$ & $0.5827_{\pm 0.0619}$ & $0.6188_{\pm 0.0370}$ & $0.5950_{\pm 0.1610}$ & $0.6044_{\pm 0.2121}$ & $0.2786_{\pm 0.0212}$ & $0.2169_{\pm 0.0154}$ & $0.2352_{\pm 0.0246}$ \\
& & 0.01  & $0.6200_{\pm 0.0297}$ & $0.5726_{\pm 0.0479}$ & $0.5862_{\pm 0.0601}$ & $0.6188_{\pm 0.0370}$ & $0.5950_{\pm 0.1610}$ & $0.5983_{\pm 0.1204}$ & $0.2786_{\pm 0.0211}$ & $0.2171_{\pm 0.0156}$ & $0.2360_{\pm 0.0251}$ \\
& & 0.05  & $0.6211_{\pm 0.0277}$ & $0.5726_{\pm 0.0481}$ & $0.5917_{\pm 0.0680}$ & $0.6179_{\pm 0.0363}$ & $0.5958_{\pm 0.1613}$ & $0.6489_{\pm 0.0777}$ & $0.2782_{\pm 0.0205}$ & $0.2171_{\pm 0.0155}$ & $0.2329_{\pm 0.0227}$ \\
& & 0.1   & $0.6186_{\pm 0.0303}$ & $0.5726_{\pm 0.0480}$ & $0.6023_{\pm 0.0711}$ & $0.6187_{\pm 0.0373}$ & $0.5967_{\pm 0.1603}$ & $0.6317_{\pm 0.0877}$ & $0.2782_{\pm 0.0204}$ & $0.2172_{\pm 0.0154}$ & $0.2323_{\pm 0.0224}$ \\
\cmidrule(lr){2-12}
& \multirow{5}{*}{5} 
  & 0.001 & $0.6776_{\pm 0.0426}$ & $0.6132_{\pm 0.0350}$ & $0.6487_{\pm 0.0547}$ & $0.6476_{\pm 0.0705}$ & $0.6835_{\pm 0.0703}$ & $0.7098_{\pm 0.0363}$ & $0.3022_{\pm 0.0182}$ & $0.2301_{\pm 0.0185}$ & $0.2477_{\pm 0.0257}$ \\
& & 0.005 & $0.6775_{\pm 0.0426}$ & $0.6132_{\pm 0.0350}$ & $0.6512_{\pm 0.0564}$ & $0.6476_{\pm 0.0705}$ & $0.6783_{\pm 0.0726}$ & $0.7098_{\pm 0.0363}$ & $0.3037_{\pm 0.0170}$ & $0.2302_{\pm 0.0185}$ & $0.2477_{\pm 0.0258}$ \\
& & 0.01  & $0.6750_{\pm 0.0444}$ & $0.6131_{\pm 0.0350}$ & $0.6496_{\pm 0.0534}$ & $0.6486_{\pm 0.0687}$ & $0.6739_{\pm 0.0761}$ & $0.7098_{\pm 0.0363}$ & $0.3034_{\pm 0.0166}$ & $0.2302_{\pm 0.0184}$ & $0.2477_{\pm 0.0257}$ \\
& & 0.05  & $0.6748_{\pm 0.0439}$ & $0.6131_{\pm 0.0350}$ & $0.6520_{\pm 0.0570}$ & $0.6533_{\pm 0.0447}$ & $0.6887_{\pm 0.0680}$ & $0.7104_{\pm 0.0361}$ & $0.3028_{\pm 0.0177}$ & $0.2300_{\pm 0.0183}$ & $0.2475_{\pm 0.0258}$ \\
& & 0.1   & $0.6685_{\pm 0.0489}$ & $0.6132_{\pm 0.0350}$ & $0.6483_{\pm 0.0560}$ & $0.6533_{\pm 0.0447}$ & $0.6713_{\pm 0.0825}$ & $0.7104_{\pm 0.0361}$ & $0.3035_{\pm 0.0171}$ & $0.2301_{\pm 0.0183}$ & $0.2476_{\pm 0.0259}$ \\
\midrule
\multirow{15}{*}{\textbf{MaskGAE}} 
& \multirow{5}{*}{1} 
  & 0.001 & $0.5235_{\pm 0.1041}$ & $0.4923_{\pm 0.0769}$ & $0.5382_{\pm 0.0915}$ & $0.5059_{\pm 0.1073}$ & $0.5603_{\pm 0.1468}$ & $0.6230_{\pm 0.2069}$ & $0.3269_{\pm 0.0298}$ & $0.2152_{\pm 0.0203}$ & $0.2277_{\pm 0.0130}$ \\
& & 0.005 & $0.5240_{\pm 0.1045}$ & $0.4894_{\pm 0.0740}$ & $0.5401_{\pm 0.0900}$ & $0.5202_{\pm 0.1035}$ & $0.4722_{\pm 0.1527}$ & $0.6241_{\pm 0.2066}$ & $0.3077_{\pm 0.0425}$ & $0.2053_{\pm 0.0188}$ & $0.2278_{\pm 0.0132}$ \\
& & 0.01  & $0.5249_{\pm 0.1038}$ & $0.4887_{\pm 0.0736}$ & $0.5398_{\pm 0.0901}$ & $0.5479_{\pm 0.0428}$ & $0.4421_{\pm 0.1458}$ & $0.6241_{\pm 0.2068}$ & $0.3083_{\pm 0.0514}$ & $0.2207_{\pm 0.0242}$ & $0.2278_{\pm 0.0131}$ \\
& & 0.05  & $0.5265_{\pm 0.1044}$ & $0.4897_{\pm 0.0787}$ & $0.5410_{\pm 0.0893}$ & $0.5218_{\pm 0.1154}$ & $0.4524_{\pm 0.1604}$ & $0.6241_{\pm 0.2068}$ & $0.3195_{\pm 0.0232}$ & $0.2227_{\pm 0.0207}$ & $0.2279_{\pm 0.0132}$ \\
& & 0.1   & $0.5260_{\pm 0.1045}$ & $0.4894_{\pm 0.0783}$ & $0.5410_{\pm 0.0890}$ & $0.5437_{\pm 0.0697}$ & $0.4548_{\pm 0.1614}$ & $0.6241_{\pm 0.2068}$ & $0.3192_{\pm 0.0258}$ & $0.2047_{\pm 0.0248}$ & $0.2279_{\pm 0.0132}$ \\
\cmidrule(lr){2-12}
& \multirow{5}{*}{3} 
  & 0.001 & $0.6651_{\pm 0.0310}$ & $0.5905_{\pm 0.0409}$ & $0.6434_{\pm 0.0584}$ & $0.6259_{\pm 0.0789}$ & $0.6700_{\pm 0.0482}$ & $0.7556_{\pm 0.0493}$ & $0.3458_{\pm 0.0272}$ & $0.2336_{\pm 0.0191}$ & $0.2338_{\pm 0.0222}$ \\
& & 0.005 & $0.6672_{\pm 0.0351}$ & $0.5962_{\pm 0.0384}$ & $0.6301_{\pm 0.0614}$ & $0.6330_{\pm 0.0860}$ & $0.6808_{\pm 0.0496}$ & $0.7522_{\pm 0.0414}$ & $0.3444_{\pm 0.0407}$ & $0.2252_{\pm 0.0171}$ & $0.2339_{\pm 0.0221}$ \\
& & 0.01  & $0.6670_{\pm 0.0351}$ & $0.5962_{\pm 0.0384}$ & $0.6437_{\pm 0.0572}$ & $0.6205_{\pm 0.0746}$ & $0.5925_{\pm 0.1578}$ & $0.7606_{\pm 0.0409}$ & $0.3480_{\pm 0.0296}$ & $0.2349_{\pm 0.0234}$ & $0.2339_{\pm 0.0223}$ \\
& & 0.05  & $0.6666_{\pm 0.0354}$ & $0.5926_{\pm 0.0394}$ & $0.6483_{\pm 0.0568}$ & $0.6402_{\pm 0.0756}$ & $0.6775_{\pm 0.0482}$ & $0.7744_{\pm 0.0299}$ & $0.3405_{\pm 0.0351}$ & $0.2338_{\pm 0.0257}$ & $0.2291_{\pm 0.0183}$ \\
& & 0.1   & $0.6670_{\pm 0.0354}$ & $0.5913_{\pm 0.0347}$ & $0.6389_{\pm 0.0647}$ & $0.6438_{\pm 0.0769}$ & $0.6708_{\pm 0.0556}$ & $0.7744_{\pm 0.0335}$ & $0.3470_{\pm 0.0275}$ & $0.2338_{\pm 0.0226}$ & $0.2292_{\pm 0.0183}$ \\
\cmidrule(lr){2-12}
& \multirow{5}{*}{5} 
  & 0.001 & $0.7290_{\pm 0.0189}$ & $0.6368_{\pm 0.0393}$ & $0.6745_{\pm 0.0363}$ & $0.6267_{\pm 0.0726}$ & $0.6539_{\pm 0.1394}$ & $0.7815_{\pm 0.0519}$ & $0.3476_{\pm 0.0288}$ & $0.2451_{\pm 0.0301}$ & $0.2393_{\pm 0.0090}$ \\
& & 0.005 & $0.7286_{\pm 0.0191}$ & $0.6372_{\pm 0.0396}$ & $0.6755_{\pm 0.0344}$ & $0.6381_{\pm 0.0610}$ & $0.6304_{\pm 0.1130}$ & $0.7861_{\pm 0.0489}$ & $0.3373_{\pm 0.0364}$ & $0.2309_{\pm 0.0228}$ & $0.2371_{\pm 0.0123}$ \\
& & 0.01  & $0.7283_{\pm 0.0190}$ & $0.6371_{\pm 0.0394}$ & $0.6837_{\pm 0.0332}$ & $0.6381_{\pm 0.0635}$ & $0.6861_{\pm 0.0900}$ & $0.7873_{\pm 0.0482}$ & $0.3438_{\pm 0.0267}$ & $0.2324_{\pm 0.0225}$ & $0.2372_{\pm 0.0124}$ \\
& & 0.05  & $0.7288_{\pm 0.0182}$ & $0.6369_{\pm 0.0402}$ & $0.6759_{\pm 0.0401}$ & $0.6514_{\pm 0.0737}$ & $0.6930_{\pm 0.0623}$ & $0.7873_{\pm 0.0564}$ & $0.3533_{\pm 0.0278}$ & $0.2349_{\pm 0.0302}$ & $0.2393_{\pm 0.0149}$ \\
& & 0.1   & $0.7287_{\pm 0.0183}$ & $0.6373_{\pm 0.0420}$ & $0.6744_{\pm 0.0359}$ & $0.6543_{\pm 0.0727}$ & $0.6965_{\pm 0.0614}$ & $0.7867_{\pm 0.0548}$ & $0.3508_{\pm 0.0291}$ & $0.2317_{\pm 0.0294}$ & $0.2394_{\pm 0.0151}$ \\
\bottomrule
\end{tabular}
}
\end{table*}

\begin{table*}[!t]
\centering
\caption{Sensitivity Analysis of the Regularization Parameter $\epsilon$ (Accuracy$_{\pm \text{Std}}$)}
\label{tab: epsilon}
\resizebox{\textwidth}{!}{
\begin{tabular}{l c c ccc ccc ccc}
\toprule
\textbf{Model} & \textbf{Shot} & \textbf{$\epsilon$} & \textbf{Cora} & \textbf{CiteSeer} & \textbf{PubMed} & \textbf{Cornell} & \textbf{Texas} & \textbf{Wisconsin} & \textbf{Chameleon} & \textbf{Squirrel} & \textbf{Actor} \\
\midrule
\multirow{9}{*}{\textbf{GraphMAE}} 
& \multirow{3}{*}{1} 
  & 0.01 & $0.5294_{\pm 0.0467}$ & $0.5048_{\pm 0.0599}$ & $0.5636_{\pm 0.0664}$ & $0.5521_{\pm 0.1264}$ & $0.4516_{\pm 0.1941}$ & $0.6396_{\pm 0.1450}$ & $0.2403_{\pm 0.0223}$ & $0.1986_{\pm 0.0122}$ & $0.1961_{\pm 0.0374}$ \\
& & 0.05 & $0.5273_{\pm 0.0478}$ & $0.5091_{\pm 0.0572}$ & $0.5718_{\pm 0.0710}$ & $0.5269_{\pm 0.1379}$ & $0.4817_{\pm 0.2084}$ & $0.6417_{\pm 0.1608}$ & $0.2522_{\pm 0.0246}$ & $0.2113_{\pm 0.0112}$ & $0.2058_{\pm 0.0204}$ \\
& & 0.1  & $0.5346_{\pm 0.0510}$ & $0.5260_{\pm 0.0488}$ & $0.5723_{\pm 0.0690}$ & $0.5227_{\pm 0.1384}$ & $0.5198_{\pm 0.1725}$ & $0.6374_{\pm 0.1610}$ & $0.2507_{\pm 0.0291}$ & $0.2213_{\pm 0.0187}$ & $0.2334_{\pm 0.0161}$ \\
& & 0.5  & $0.5343_{\pm 0.0544}$ & $0.4691_{\pm 0.0827}$ & $0.5642_{\pm 0.0667}$ & $0.4655_{\pm 0.1701}$ & $0.4587_{\pm 0.2204}$ & $0.5979_{\pm 0.1747}$ & $0.2444_{\pm 0.0293}$ & $0.2077_{\pm 0.0114}$ & $0.2165_{\pm 0.0373}$ \\
& & 1.0  & $0.5342_{\pm 0.0536}$ & $0.5113_{\pm 0.0511}$ & $0.5657_{\pm 0.0676}$ & $0.4950_{\pm 0.1557}$ & $0.4627_{\pm 0.2324}$ & $0.5369_{\pm 0.1698}$ & $0.2433_{\pm 0.0292}$ & $0.2022_{\pm 0.0084}$ & $0.2103_{\pm 0.0426}$ \\
\cmidrule(lr){2-12}
& \multirow{5}{*}{3} 
  & 0.01 & $0.6621_{\pm 0.0526}$ & $0.6116_{\pm 0.0473}$ & $0.6259_{\pm 0.0468}$ & $0.5571_{\pm 0.1425}$ & $0.4767_{\pm 0.1761}$ & $0.7706_{\pm 0.0447}$ & $0.2638_{\pm 0.0349}$ & $0.2136_{\pm 0.0084}$ & $0.2353_{\pm 0.0242}$ \\
& & 0.05 & $0.6721_{\pm 0.0391}$ & $0.6099_{\pm 0.0476}$ & $0.6399_{\pm 0.0471}$ & $0.5920_{\pm 0.1202}$ & $0.5950_{\pm 0.1575}$ & $0.7650_{\pm 0.0618}$ & $0.2660_{\pm 0.0316}$ & $0.2151_{\pm 0.0164}$ & $0.2326_{\pm 0.0159}$ \\
& & 0.1  & $0.6775_{\pm 0.0408}$ & $0.6114_{\pm 0.0492}$ & $0.6414_{\pm 0.0447}$ & $0.5946_{\pm 0.0816}$ & $0.5808_{\pm 0.1520}$ & $0.7717_{\pm 0.0484}$ & $0.2665_{\pm 0.0303}$ & $0.2269_{\pm 0.0156}$ & $0.2353_{\pm 0.0196}$ \\
& & 0.5  & $0.6647_{\pm 0.0531}$ & $0.6115_{\pm 0.0494}$ & $0.6368_{\pm 0.0467}$ & $0.5714_{\pm 0.1060}$ & $0.6150_{\pm 0.1466}$ & $0.7706_{\pm 0.0493}$ & $0.2602_{\pm 0.0316}$ & $0.2122_{\pm 0.0120}$ & $0.2356_{\pm 0.0187}$ \\
& & 1.0  & $0.6649_{\pm 0.0531}$ & $0.6113_{\pm 0.0492}$ & $0.6365_{\pm 0.0463}$ & $0.5830_{\pm 0.1006}$ & $0.5317_{\pm 0.2161}$ & $0.7706_{\pm 0.0493}$ & $0.2603_{\pm 0.0315}$ & $0.2091_{\pm 0.0073}$ & $0.2356_{\pm 0.0187}$ \\
\cmidrule(lr){2-12}
& \multirow{5}{*}{5} 
  & 0.01 & $0.7286_{\pm 0.0341}$ & $0.6499_{\pm 0.0369}$ & $0.6781_{\pm 0.0314}$ & $0.6714_{\pm 0.0490}$ & $0.6443_{\pm 0.1163}$ & $0.8035_{\pm 0.0332}$ & $0.2783_{\pm 0.0314}$ & $0.2162_{\pm 0.0123}$ & $0.2482_{\pm 0.0193}$ \\
& & 0.05 & $0.7299_{\pm 0.0342}$ & $0.6574_{\pm 0.0324}$ & $0.6772_{\pm 0.0337}$ & $0.6400_{\pm 0.0959}$ & $0.5965_{\pm 0.1293}$ & $0.7896_{\pm 0.0367}$ & $0.2810_{\pm 0.0251}$ & $0.2333_{\pm 0.0132}$ & $0.2507_{\pm 0.0143}$ \\
& & 0.1  & $0.7356_{\pm 0.0292}$ & $0.6587_{\pm 0.0307}$ & $0.6866_{\pm 0.0376}$ & $0.6667_{\pm 0.0924}$ & $0.6443_{\pm 0.0781}$ & $0.7994_{\pm 0.0313}$ & $0.2816_{\pm 0.0261}$ & $0.2318_{\pm 0.0204}$ & $0.2521_{\pm 0.0132}$ \\
& & 0.5  & $0.7380_{\pm 0.0258}$ & $0.6575_{\pm 0.0348}$ & $0.6850_{\pm 0.0352}$ & $0.5819_{\pm 0.1628}$ & $0.5757_{\pm 0.1775}$ & $0.8000_{\pm 0.0317}$ & $0.2826_{\pm 0.0290}$ & $0.2180_{\pm 0.0157}$ & $0.2464_{\pm 0.0180}$ \\
& & 1.0  & $0.7373_{\pm 0.0261}$ & $0.6589_{\pm 0.0340}$ & $0.6842_{\pm 0.0360}$ & $0.5819_{\pm 0.1880}$ & $0.5209_{\pm 0.2888}$ & $0.8000_{\pm 0.0317}$ & $0.2787_{\pm 0.0308}$ & $0.2182_{\pm 0.0165}$ & $0.2463_{\pm 0.0180}$ \\
\midrule
\multirow{15}{*}{\textbf{GraphMAE2}} 
& \multirow{5}{*}{1} 
  & 0.01 & $0.4221_{\pm 0.0843}$ & $0.4624_{\pm 0.0693}$ & $0.5069_{\pm 0.0564}$ & $0.4908_{\pm 0.1322}$ & $0.4056_{\pm 0.2236}$ & $0.5235_{\pm 0.1633}$ & $0.2470_{\pm 0.0232}$ & $0.2134_{\pm 0.0211}$ & $0.2182_{\pm 0.0311}$ \\
& & 0.05 & $0.4515_{\pm 0.0756}$ & $0.4660_{\pm 0.0655}$ & $0.5326_{\pm 0.0666}$ & $0.5067_{\pm 0.1400}$ & $0.3571_{\pm 0.2247}$ & $0.4882_{\pm 0.1594}$ & $0.2485_{\pm 0.0213}$ & $0.2213_{\pm 0.0141}$ & $0.2162_{\pm 0.0313}$ \\
& & 0.1  & $0.4619_{\pm 0.0744}$ & $0.4668_{\pm 0.0642}$ & $0.5620_{\pm 0.0566}$ & $0.5109_{\pm 0.1422}$ & $0.3992_{\pm 0.2209}$ & $0.5134_{\pm 0.1679}$ & $0.2464_{\pm 0.0205}$ & $0.2214_{\pm 0.0175}$ & $0.2172_{\pm 0.0311}$ \\
& & 0.5  & $0.4570_{\pm 0.0777}$ & $0.4681_{\pm 0.0649}$ & $0.5423_{\pm 0.0609}$ & $0.5084_{\pm 0.1404}$ & $0.3825_{\pm 0.2077}$ & $0.4968_{\pm 0.1604}$ & $0.2467_{\pm 0.0215}$ & $0.2186_{\pm 0.0159}$ & $0.2171_{\pm 0.0311}$ \\
& & 1.0  & $0.4551_{\pm 0.0770}$ & $0.4680_{\pm 0.0648}$ & $0.5416_{\pm 0.0459}$ & $0.5092_{\pm 0.1402}$ & $0.3841_{\pm 0.2077}$ & $0.4973_{\pm 0.1606}$ & $0.2465_{\pm 0.0211}$ & $0.2187_{\pm 0.0163}$ & $0.2172_{\pm 0.0311}$ \\
\cmidrule(lr){2-12}
& \multirow{5}{*}{3} 
  & 0.01 & $0.5817_{\pm 0.0551}$ & $0.5692_{\pm 0.0487}$ & $0.5813_{\pm 0.0632}$ & $0.6009_{\pm 0.0967}$ & $0.6392_{\pm 0.1004}$ & $0.6256_{\pm 0.0962}$ & $0.2699_{\pm 0.0219}$ & $0.2162_{\pm 0.0159}$ & $0.2351_{\pm 0.0322}$ \\
& & 0.05 & $0.6205_{\pm 0.0253}$ & $0.5695_{\pm 0.0469}$ & $0.5997_{\pm 0.0511}$ & $0.6179_{\pm 0.0363}$ & $0.5933_{\pm 0.1667}$ & $0.5867_{\pm 0.0870}$ & $0.2769_{\pm 0.0191}$ & $0.2175_{\pm 0.0153}$ & $0.2359_{\pm 0.0239}$ \\
& & 0.1  & $0.6211_{\pm 0.0277}$ & $0.5726_{\pm 0.0479}$ & $0.5917_{\pm 0.0680}$ & $0.6187_{\pm 0.0373}$ & $0.5958_{\pm 0.1613}$ & $0.5983_{\pm 0.1204}$ & $0.2782_{\pm 0.0205}$ & $0.2180_{\pm 0.0145}$ & $0.2356_{\pm 0.0247}$ \\
& & 0.5  & $0.6107_{\pm 0.0244}$ & $0.5721_{\pm 0.0472}$ & $0.5851_{\pm 0.0620}$ & $0.5920_{\pm 0.0931}$ & $0.5567_{\pm 0.2038}$ & $0.6506_{\pm 0.2561}$ & $0.2780_{\pm 0.0198}$ & $0.2185_{\pm 0.0150}$ & $0.2358_{\pm 0.0258}$ \\
& & 1.0  & $0.6124_{\pm 0.0293}$ & $0.5728_{\pm 0.0471}$ & $0.5817_{\pm 0.0554}$ & $0.5946_{\pm 0.0940}$ & $0.5950_{\pm 0.1597}$ & $0.6817_{\pm 0.1387}$ & $0.2773_{\pm 0.0193}$ & $0.2170_{\pm 0.0157}$ & $0.2408_{\pm 0.0205}$ \\
\cmidrule(lr){2-12}
& \multirow{5}{*}{5} 
  & 0.01 & $0.6701_{\pm 0.0474}$ & $0.5956_{\pm 0.0457}$ & $0.6276_{\pm 0.0685}$ & $0.6629_{\pm 0.0465}$ & $0.7043_{\pm 0.0587}$ & $0.6919_{\pm 0.0561}$ & $0.3028_{\pm 0.0180}$ & $0.2260_{\pm 0.0169}$ & $0.2536_{\pm 0.0207}$ \\
& & 0.05 & $0.6736_{\pm 0.0447}$ & $0.6133_{\pm 0.0351}$ & $0.6592_{\pm 0.0555}$ & $0.6667_{\pm 0.0535}$ & $0.6643_{\pm 0.0788}$ & $0.7040_{\pm 0.0432}$ & $0.3006_{\pm 0.0178}$ & $0.2287_{\pm 0.0161}$ & $0.2533_{\pm 0.0164}$ \\
& & 0.1  & $0.6748_{\pm 0.0439}$ & $0.6132_{\pm 0.0350}$ & $0.6520_{\pm 0.0570}$ & $0.6619_{\pm 0.0548}$ & $0.6757_{\pm 0.0825}$ & $0.7104_{\pm 0.0361}$ & $0.3037_{\pm 0.0170}$ & $0.2302_{\pm 0.0185}$ & $0.2477_{\pm 0.0257}$ \\
& & 0.5  & $0.6689_{\pm 0.0494}$ & $0.6077_{\pm 0.0354}$ & $0.6462_{\pm 0.0542}$ & $0.6552_{\pm 0.0430}$ & $0.6487_{\pm 0.1047}$ & $0.7087_{\pm 0.0390}$ & $0.3019_{\pm 0.0195}$ & $0.2295_{\pm 0.0185}$ & $0.2476_{\pm 0.0257}$ \\
& & 1.0  & $0.6690_{\pm 0.0495}$ & $0.6075_{\pm 0.0350}$ & $0.6429_{\pm 0.0512}$ & $0.6552_{\pm 0.0430}$ & $0.6504_{\pm 0.1053}$ & $0.7098_{\pm 0.0395}$ & $0.3021_{\pm 0.0190}$ & $0.2275_{\pm 0.0156}$ & $0.2476_{\pm 0.0257}$ \\
\midrule
\multirow{9}{*}{\textbf{MaskGAE}} 
& \multirow{3}{*}{1} 
  & 0.01 & $0.5199_{\pm 0.1032}$ & $0.4840_{\pm 0.0775}$ & $0.5418_{\pm 0.0908}$ & $0.5261_{\pm 0.1129}$ & $0.5143_{\pm 0.1654}$ & $0.5995_{\pm 0.2088}$ & $0.3174_{\pm 0.0324}$ & $0.2147_{\pm 0.0212}$ & $0.2278_{\pm 0.0130}$ \\
& & 0.05 & $0.5242_{\pm 0.1047}$ & $0.4955_{\pm 0.0772}$ & $0.5402_{\pm 0.0915}$ & $0.5160_{\pm 0.1139}$ & $0.4825_{\pm 0.1578}$ & $0.6225_{\pm 0.2058}$ & $0.3251_{\pm 0.0216}$ & $0.2721_{\pm 0.0322}$ & $0.2279_{\pm 0.0130}$ \\
& & 0.1  & $0.5240_{\pm 0.1042}$ & $0.4957_{\pm 0.0775}$ & $0.5382_{\pm 0.0917}$ & $0.5437_{\pm 0.0697}$ & $0.5778_{\pm 0.1568}$ & $0.6241_{\pm 0.2068}$ & $0.3215_{\pm 0.0256}$ & $0.2696_{\pm 0.0272}$ & $0.2279_{\pm 0.0132}$ \\
& & 0.5  & $0.5237_{\pm 0.1039}$ & $0.4902_{\pm 0.0786}$ & $0.5407_{\pm 0.0898}$ & $0.5420_{\pm 0.0622}$ & $0.5794_{\pm 0.1526}$ & $0.6251_{\pm 0.2087}$ & $0.3293_{\pm 0.0227}$ & $0.2682_{\pm 0.0266}$ & $0.2280_{\pm 0.0132}$ \\
& & 1.0  & $0.5238_{\pm 0.1039}$ & $0.4890_{\pm 0.0780}$ & $0.5407_{\pm 0.0899}$ & $0.5420_{\pm 0.0622}$ & $0.5794_{\pm 0.1526}$ & $0.6251_{\pm 0.2087}$ & $0.3185_{\pm 0.0246}$ & $0.2702_{\pm 0.0251}$ & $0.2280_{\pm 0.0132}$ \\
\cmidrule(lr){2-12}
& \multirow{3}{*}{3} 
  & 0.01 & $0.6692_{\pm 0.0351}$ & $0.5892_{\pm 0.0359}$ & $0.6477_{\pm 0.0559}$ & $0.6179_{\pm 0.0836}$ & $0.6142_{\pm 0.1426}$ & $0.7494_{\pm 0.0456}$ & $0.3472_{\pm 0.0331}$ & $0.2362_{\pm 0.0138}$ & $0.2317_{\pm 0.0210}$ \\
& & 0.05 & $0.6643_{\pm 0.0287}$ & $0.5961_{\pm 0.0382}$ & $0.6462_{\pm 0.0559}$ & $0.6464_{\pm 0.0757}$ & $0.6592_{\pm 0.0692}$ & $0.7678_{\pm 0.0415}$ & $0.3505_{\pm 0.0256}$ & $0.2757_{\pm 0.0431}$ & $0.2337_{\pm 0.0222}$ \\
& & 0.1  & $0.6668_{\pm 0.0354}$ & $0.5974_{\pm 0.0382}$ & $0.6458_{\pm 0.0569}$ & $0.6518_{\pm 0.0736}$ & $0.6775_{\pm 0.0482}$ & $0.7744_{\pm 0.0299}$ & $0.3556_{\pm 0.0293}$ & $0.2867_{\pm 0.0312}$ & $0.2338_{\pm 0.0222}$ \\
& & 0.5  & $0.6679_{\pm 0.0346}$ & $0.5954_{\pm 0.0387}$ & $0.6292_{\pm 0.0610}$ & $0.6143_{\pm 0.1013}$ & $0.6617_{\pm 0.0806}$ & $0.7750_{\pm 0.0294}$ & $0.3503_{\pm 0.0325}$ & $0.2817_{\pm 0.0414}$ & $0.2318_{\pm 0.0209}$ \\
& & 1.0  & $0.6678_{\pm 0.0346}$ & $0.5954_{\pm 0.0386}$ & $0.6224_{\pm 0.0673}$ & $0.6134_{\pm 0.1024}$ & $0.6642_{\pm 0.0793}$ & $0.7744_{\pm 0.0289}$ & $0.3490_{\pm 0.0290}$ & $0.2813_{\pm 0.0387}$ & $0.2318_{\pm 0.0209}$ \\
\cmidrule(lr){2-12}
& \multirow{5}{*}{5} 
  & 0.01 & $0.7288_{\pm 0.0185}$ & $0.6330_{\pm 0.0384}$ & $0.6754_{\pm 0.0323}$ & $0.6362_{\pm 0.0703}$ & $0.6357_{\pm 0.1289}$ & $0.7809_{\pm 0.0594}$ & $0.3440_{\pm 0.0284}$ & $0.2523_{\pm 0.0230}$ & $0.2398_{\pm 0.0092}$ \\
& & 0.05 & $0.7286_{\pm 0.0182}$ & $0.6372_{\pm 0.0398}$ & $0.6817_{\pm 0.0330}$ & $0.6600_{\pm 0.0705}$ & $0.6748_{\pm 0.0639}$ & $0.7850_{\pm 0.0606}$ & $0.3539_{\pm 0.0281}$ & $0.3004_{\pm 0.0255}$ & $0.2392_{\pm 0.0150}$ \\
& & 0.1  & $0.7288_{\pm 0.0181}$ & $0.6370_{\pm 0.0394}$ & $0.6841_{\pm 0.0329}$ & $0.6524_{\pm 0.0717}$ & $0.6957_{\pm 0.0600}$ & $0.7925_{\pm 0.0483}$ & $0.3535_{\pm 0.0282}$ & $0.3089_{\pm 0.0195}$ & $0.2394_{\pm 0.0149}$ \\
& & 0.5  & $0.7300_{\pm 0.0195}$ & $0.6363_{\pm 0.0389}$ & $0.6780_{\pm 0.0428}$ & $0.6638_{\pm 0.0593}$ & $0.6922_{\pm 0.0606}$ & $0.7867_{\pm 0.0577}$ & $0.3522_{\pm 0.0278}$ & $0.2890_{\pm 0.0358}$ & $0.2395_{\pm 0.0150}$ \\
& & 1.0  & $0.7299_{\pm 0.0195}$ & $0.6363_{\pm 0.0389}$ & $0.6780_{\pm 0.0428}$ & $0.6638_{\pm 0.0593}$ & $0.6913_{\pm 0.0598}$ & $0.7850_{\pm 0.0576}$ & $0.3521_{\pm 0.0280}$ & $0.3074_{\pm 0.0204}$ & $0.2394_{\pm 0.0150}$ \\
\bottomrule
\end{tabular}
}
\end{table*}

\begin{table*}[!t]
\centering
\caption{Sensitivity Analysis of the $k$-NN Neighborhood Size $k$ (Accuracy$_{\pm \text{Std}}$)}
\label{tab: k}
\resizebox{\textwidth}{!}{
\begin{tabular}{l c c ccc ccc ccc}
\toprule
\textbf{Model} & \textbf{Shot} & \textbf{$k$} & \textbf{Cora} & \textbf{CiteSeer} & \textbf{PubMed} & \textbf{Cornell} & \textbf{Texas} & \textbf{Wisconsin} & \textbf{Chameleon} & \textbf{Squirrel} & \textbf{Actor} \\
\midrule
\multirow{12}{*}{\textbf{GraphMAE}} 
& \multirow{4}{*}{1} 
  & 10  & $0.5032_{\pm 0.0570}$ & $0.5009_{\pm 0.0607}$ & $0.5389_{\pm 0.0662}$ & $0.2975_{\pm 0.0545}$ & $0.3087_{\pm 0.1801}$ & $0.4487_{\pm 0.0804}$ & $0.2226_{\pm 0.0275}$ & $0.2009_{\pm 0.0086}$ & $0.2170_{\pm 0.0208}$ \\
& & 50  & $0.4955_{\pm 0.0590}$ & $0.5260_{\pm 0.0488}$ & $0.5056_{\pm 0.0850}$ & $0.4874_{\pm 0.1056}$ & $0.4429_{\pm 0.2232}$ & $0.6086_{\pm 0.1336}$ & $0.2356_{\pm 0.0262}$ & $0.2076_{\pm 0.0212}$ & $0.1941_{\pm 0.0328}$ \\
& & 100 & $0.4466_{\pm 0.0785}$ & $0.5092_{\pm 0.0707}$ & $0.5146_{\pm 0.0919}$ & $0.4000_{\pm 0.1505}$ & $0.4730_{\pm 0.1914}$ & $0.5888_{\pm 0.1621}$ & $0.2381_{\pm 0.0279}$ & $0.2045_{\pm 0.0153}$ & $0.2152_{\pm 0.0301}$ \\
& & 200 & $0.4049_{\pm 0.0480}$ & $0.4473_{\pm 0.0873}$ & $0.5100_{\pm 0.0912}$ & N/A ($k>N-1$) & N/A ($k>N-1$) & $0.4449_{\pm 0.1718}$ & $0.2539_{\pm 0.0265}$ & $0.2028_{\pm 0.0091}$ & $0.2060_{\pm 0.0332}$ \\
\cmidrule(lr){2-12}
& \multirow{4}{*}{3} 
  & 10  & $0.6636_{\pm 0.0336}$ & $0.6009_{\pm 0.0402}$ & $0.6333_{\pm 0.0568}$ & $0.4589_{\pm 0.0758}$ & $0.4842_{\pm 0.1908}$ & $0.5061_{\pm 0.1053}$ & $0.2638_{\pm 0.0305}$ & $0.2148_{\pm 0.0199}$ & $0.2014_{\pm 0.0244}$ \\
& & 50  & $0.6308_{\pm 0.0504}$ & $0.6112_{\pm 0.0500}$ & $0.5898_{\pm 0.0540}$ & $0.6098_{\pm 0.0856}$ & $0.5833_{\pm 0.1587}$ & $0.6933_{\pm 0.0285}$ & $0.2490_{\pm 0.0305}$ & $0.2131_{\pm 0.0229}$ & $0.2157_{\pm 0.0302}$ \\
& & 100 & $0.6048_{\pm 0.0567}$ & $0.6025_{\pm 0.0490}$ & $0.5815_{\pm 0.0556}$ & $0.5420_{\pm 0.1172}$ & $0.5767_{\pm 0.1861}$ & $0.7622_{\pm 0.0572}$ & $0.2540_{\pm 0.0245}$ & $0.2056_{\pm 0.0162}$ & $0.2208_{\pm 0.0172}$ \\
& & 200 & $0.5577_{\pm 0.0564}$ & $0.5750_{\pm 0.0536}$ & $0.5801_{\pm 0.0574}$ & N/A ($k>N-1$) & N/A ($k>N-1$) & $0.6261_{\pm 0.1118}$ & $0.2490_{\pm 0.0322}$ & $0.2042_{\pm 0.0158}$ & $0.2383_{\pm 0.0213}$ \\
\cmidrule(lr){2-12}
& \multirow{4}{*}{5} 
  & 10  & $0.6989_{\pm 0.0332}$ & $0.6437_{\pm 0.0306}$ & $0.6599_{\pm 0.0461}$ & $0.5276_{\pm 0.0544}$ & $0.6261_{\pm 0.1538}$ & $0.5474_{\pm 0.0497}$ & $0.2779_{\pm 0.0234}$ & $0.2224_{\pm 0.0197}$ & $0.2176_{\pm 0.0236}$ \\
& & 50  & $0.6793_{\pm 0.0346}$ & $0.6587_{\pm 0.0307}$ & $0.6179_{\pm 0.0503}$ & $0.6229_{\pm 0.0900}$ & $0.5739_{\pm 0.1755}$ & $0.7416_{\pm 0.0252}$ & $0.2815_{\pm 0.0176}$ & $0.2190_{\pm 0.0222}$ & $0.2345_{\pm 0.0299}$ \\
& & 100 & $0.6524_{\pm 0.0323}$ & $0.6447_{\pm 0.0261}$ & $0.6074_{\pm 0.0479}$ & $0.5657_{\pm 0.1160}$ & $0.5930_{\pm 0.1496}$ & $0.7850_{\pm 0.0536}$ & $0.2702_{\pm 0.0201}$ & $0.2152_{\pm 0.0163}$ & $0.2389_{\pm 0.0214}$ \\
& & 200 & $0.5929_{\pm 0.0358}$ & $0.6300_{\pm 0.0317}$ & $0.6042_{\pm 0.0578}$ & N/A ($k>N-1$) & N/A ($k>N-1$) & $0.6977_{\pm 0.0981}$ & $0.2841_{\pm 0.0139}$ & $0.2111_{\pm 0.0181}$ & $0.2445_{\pm 0.0231}$ \\
\midrule
\multirow{12}{*}{\textbf{GraphMAE2}} 
& \multirow{4}{*}{1} 
  & 10  & $0.4619_{\pm 0.0744}$ & $0.4120_{\pm 0.0892}$ & $0.5067_{\pm 0.0400}$ & $0.2361_{\pm 0.0858}$ & $0.2016_{\pm 0.0711}$ & $0.3155_{\pm 0.1199}$ & $0.2150_{\pm 0.0293}$ & $0.2126_{\pm 0.0226}$ & $0.1892_{\pm 0.0238}$ \\
& & 50  & $0.4013_{\pm 0.0728}$ & $0.4639_{\pm 0.0568}$ & $0.4904_{\pm 0.0595}$ & $0.5168_{\pm 0.1508}$ & $0.3357_{\pm 0.1353}$ & $0.3936_{\pm 0.1550}$ & $0.2139_{\pm 0.0389}$ & $0.2081_{\pm 0.0176}$ & $0.1952_{\pm 0.0256}$ \\
& & 100 & $0.4192_{\pm 0.0680}$ & $0.4937_{\pm 0.0886}$ & $0.4867_{\pm 0.0558}$ & $0.5109_{\pm 0.1422}$ & $0.3198_{\pm 0.1448}$ & $0.5160_{\pm 0.1565}$ & $0.2364_{\pm 0.0329}$ & $0.2022_{\pm 0.0167}$ & $0.1999_{\pm 0.0297}$ \\
& & 200 & $0.4037_{\pm 0.0811}$ & $0.4870_{\pm 0.0708}$ & $0.4782_{\pm 0.0854}$ & N/A ($k>N-1$) & N/A ($k>N-1$) & $0.5182_{\pm 0.1744}$ & $0.2464_{\pm 0.0205}$ & $0.2007_{\pm 0.0129}$ & $0.2123_{\pm 0.0310}$ \\
\cmidrule(lr){2-12}
& \multirow{4}{*}{3} 
  & 10  & $0.6076_{\pm 0.0374}$ & $0.5355_{\pm 0.0560}$ & $0.5897_{\pm 0.0620}$ & $0.4777_{\pm 0.0610}$ & $0.2817_{\pm 0.0890}$ & $0.4317_{\pm 0.1654}$ & $0.2604_{\pm 0.0299}$ & $0.2107_{\pm 0.0138}$ & $0.2042_{\pm 0.0270}$ \\
& & 50  & $0.5739_{\pm 0.0519}$ & $0.5687_{\pm 0.0449}$ & $0.5263_{\pm 0.0420}$ & $0.5929_{\pm 0.0685}$ & $0.5283_{\pm 0.2174}$ & $0.5483_{\pm 0.1938}$ & $0.2621_{\pm 0.0278}$ & $0.2131_{\pm 0.0187}$ & $0.2104_{\pm 0.0268}$ \\
& & 100 & $0.5750_{\pm 0.0421}$ & $0.5505_{\pm 0.0558}$ & $0.5316_{\pm 0.0525}$ & $0.6277_{\pm 0.0565}$ & $0.5517_{\pm 0.1537}$ & $0.5917_{\pm 0.1957}$ & $0.2787_{\pm 0.0193}$ & $0.2067_{\pm 0.0128}$ & $0.2263_{\pm 0.0257}$ \\
& & 200 & $0.5562_{\pm 0.0515}$ & $0.5582_{\pm 0.0502}$ & $0.5621_{\pm 0.0384}$ & N/A ($k>N-1$) & N/A ($k>N-1$) & $0.5472_{\pm 0.1510}$ & $0.2762_{\pm 0.0263}$ & $0.2034_{\pm 0.0153}$ & $0.2375_{\pm 0.0240}$ \\
\cmidrule(lr){2-12}
& \multirow{4}{*}{5} 
  & 10  & $0.6556_{\pm 0.0476}$ & $0.5834_{\pm 0.0513}$ & $0.6234_{\pm 0.0736}$ & $0.5400_{\pm 0.0630}$ & $0.4235_{\pm 0.1105}$ & $0.4902_{\pm 0.0460}$ & $0.2649_{\pm 0.0302}$ & $0.2302_{\pm 0.0185}$ & $0.2108_{\pm 0.0219}$ \\
& & 50  & $0.6432_{\pm 0.0347}$ & $0.5980_{\pm 0.0489}$ & $0.5993_{\pm 0.0658}$ & $0.6590_{\pm 0.0542}$ & $0.5478_{\pm 0.1176}$ & $0.6087_{\pm 0.0747}$ & $0.2748_{\pm 0.0233}$ & $0.2246_{\pm 0.0165}$ & $0.2331_{\pm 0.0145}$ \\
& & 100 & $0.6265_{\pm 0.0291}$ & $0.5980_{\pm 0.0445}$ & $0.5898_{\pm 0.0658}$ & $0.6152_{\pm 0.0463}$ & $0.5461_{\pm 0.1671}$ & $0.6821_{\pm 0.0666}$ & $0.2960_{\pm 0.0169}$ & $0.2132_{\pm 0.0162}$ & $0.2354_{\pm 0.0197}$ \\
& & 200 & $0.6011_{\pm 0.0340}$ & $0.5941_{\pm 0.0391}$ & $0.5955_{\pm 0.0565}$ & N/A ($k>N-1$) & N/A ($k>N-1$) & $0.6578_{\pm 0.0546}$ & $0.2996_{\pm 0.0159}$ & $0.2181_{\pm 0.0177}$ & $0.2459_{\pm 0.0245}$ \\
\midrule
\multirow{12}{*}{\textbf{MaskGAE}} 
& \multirow{4}{*}{1} 
  & 10  & $0.5216_{\pm 0.0865}$ & $0.4363_{\pm 0.0888}$ & $0.5305_{\pm 0.0836}$ & $0.3840_{\pm 0.1461}$ & $0.4230_{\pm 0.1972}$ & $0.4765_{\pm 0.2043}$ & $0.2477_{\pm 0.0278}$ & $0.2481_{\pm 0.0319}$ & $0.2064_{\pm 0.0287}$ \\
& & 50  & $0.5087_{\pm 0.0777}$ & $0.4861_{\pm 0.0708}$ & $0.5163_{\pm 0.0959}$ & $0.4588_{\pm 0.1831}$ & $0.4016_{\pm 0.1528}$ & $0.5770_{\pm 0.1950}$ & $0.2464_{\pm 0.0327}$ & $0.2357_{\pm 0.0310}$ & $0.2109_{\pm 0.0293}$ \\
& & 100 & $0.4994_{\pm 0.0653}$ & $0.4837_{\pm 0.0692}$ & $0.5062_{\pm 0.0953}$ & $0.4420_{\pm 0.1372}$ & $0.5278_{\pm 0.1813}$ & $0.6096_{\pm 0.1905}$ & $0.2784_{\pm 0.0216}$ & $0.2272_{\pm 0.0299}$ & $0.2189_{\pm 0.0226}$ \\
& & 200 & $0.4926_{\pm 0.0565}$ & $0.4848_{\pm 0.0825}$ & $0.5231_{\pm 0.0942}$ & N/A ($k>N-1$) & N/A ($k>N-1$) & $0.5444_{\pm 0.1590}$ & $0.2872_{\pm 0.0353}$ & $0.2241_{\pm 0.0175}$ & $0.2196_{\pm 0.0275}$ \\
\cmidrule(lr){2-12}
& \multirow{4}{*}{3} 
  & 10  & $0.6615_{\pm 0.0640}$ & $0.5604_{\pm 0.0431}$ & $0.6331_{\pm 0.0609}$ & $0.5580_{\pm 0.0919}$ & $0.5950_{\pm 0.1050}$ & $0.6406_{\pm 0.0842}$ & $0.2849_{\pm 0.0313}$ & $0.2593_{\pm 0.0308}$ & $0.2315_{\pm 0.0226}$ \\
& & 50  & $0.6285_{\pm 0.0642}$ & $0.5856_{\pm 0.0441}$ & $0.6233_{\pm 0.0693}$ & $0.6196_{\pm 0.0558}$ & $0.6108_{\pm 0.1002}$ & $0.7072_{\pm 0.0955}$ & $0.2873_{\pm 0.0245}$ & $0.2591_{\pm 0.0273}$ & $0.2280_{\pm 0.0185}$ \\
& & 100 & $0.6120_{\pm 0.0619}$ & $0.5813_{\pm 0.0451}$ & $0.6345_{\pm 0.0499}$ & $0.5625_{\pm 0.0579}$ & $0.6767_{\pm 0.0675}$ & $0.7450_{\pm 0.0524}$ & $0.2966_{\pm 0.0254}$ & $0.2430_{\pm 0.0226}$ & $0.2227_{\pm 0.0200}$ \\
& & 200 & $0.5750_{\pm 0.0619}$ & $0.5692_{\pm 0.0475}$ & $0.6227_{\pm 0.0679}$ & N/A ($k>N-1$) & N/A ($k>N-1$) & $0.5400_{\pm 0.1453}$ & $0.3224_{\pm 0.0351}$ & $0.2329_{\pm 0.0201}$ & $0.2141_{\pm 0.0304}$ \\
\cmidrule(lr){2-12}
& \multirow{4}{*}{5} 
  & 10  & $0.7074_{\pm 0.0248}$ & $0.6258_{\pm 0.0255}$ & $0.6685_{\pm 0.0391}$ & $0.5619_{\pm 0.1082}$ & $0.6217_{\pm 0.0846}$ & $0.6503_{\pm 0.0805}$ & $0.2983_{\pm 0.0392}$ & $0.2758_{\pm 0.0190}$ & $0.2406_{\pm 0.0163}$ \\
& & 50  & $0.6686_{\pm 0.0242}$ & $0.6370_{\pm 0.0394}$ & $0.6450_{\pm 0.0310}$ & $0.6362_{\pm 0.1110}$ & $0.6130_{\pm 0.0495}$ & $0.7850_{\pm 0.0589}$ & $0.2948_{\pm 0.0237}$ & $0.2586_{\pm 0.0348}$ & $0.2373_{\pm 0.0183}$ \\
& & 100 & $0.6526_{\pm 0.0342}$ & $0.6319_{\pm 0.0339}$ & $0.6300_{\pm 0.0379}$ & $0.5933_{\pm 0.0785}$ & $0.7157_{\pm 0.0528}$ & $0.7509_{\pm 0.0489}$ & $0.3011_{\pm 0.0343}$ & $0.2390_{\pm 0.0295}$ & $0.2339_{\pm 0.0223}$ \\
& & 200 & $0.6284_{\pm 0.0222}$ & $0.6086_{\pm 0.0370}$ & $0.6540_{\pm 0.0328}$ & N/A ($k>N-1$) & N/A ($k>N-1$) & $0.6052_{\pm 0.1124}$ & $0.3292_{\pm 0.0176}$ & $0.2402_{\pm 0.0175}$ & $0.2321_{\pm 0.0226}$ \\
\bottomrule
\end{tabular}
}
\end{table*}

\paragraph{GraphPrompt~\cite{liu2023graphprompt}}GraphPrompt casts downstream tasks into a subgraph-similarity template and applies a task-specific learnable prompt vector in the readout operation. The pretrained GNN remains frozen while the prompt is optimized with the downstream contrastive objective.

\begin{table*}[!t]
\centering
\caption{Computational Overhead of MINT Across Different Pretrained Backbones (1-Shot Setting)}
\label{tab:computational_overhead}
\resizebox{\textwidth}{!}{
\begin{tabular}{l l ccc ccc ccc}
\toprule
\multirow{2}{*}{\textbf{Metric}} & \multirow{2}{*}{\textbf{Backbone}} & \multicolumn{3}{c}{\textbf{Homophilic}} & \multicolumn{3}{c}{\textbf{Small Heterophilic}} & \multicolumn{3}{c}{\textbf{Dense Heterophilic}} \\
\cmidrule(lr){3-5} \cmidrule(lr){6-8} \cmidrule(lr){9-11}
 & & \textbf{Cora} & \textbf{CiteSeer} & \textbf{PubMed} & \textbf{Cornell} & \textbf{Texas} & \textbf{Wisconsin} & \textbf{Chameleon} & \textbf{Squirrel} & \textbf{Actor} \\
\midrule
\multirow{3}{*}{\shortstack{\textbf{Execution} \\ \textbf{Time (s)}}} 
& GraphMAE  & $3.46$ & $4.20$ & $6.28$ & $2.03$ & $1.67$ & $1.61$ & $4.99$ & $5.85$ & $8.23$ \\
& GraphMAE2 & $7.51$ & $5.58$ & $10.56$ & $2.93$ & $3.12$ & $3.25$ & $7.16$ & $5.11$ & $10.22$ \\
& MaskGAE   & $2.12$ & $4.83$ & $7.67$ & $1.94$ & $2.02$ & $1.83$ & $13.47$ & $3.22$ & $4.93$ \\
\midrule
\multirow{3}{*}{\shortstack{\textbf{Peak GPU} \\ \textbf{VRAM (MB)}}} 
& GraphMAE  & $365.58$ & $12167.54$ & $1012.27$ & $391.13$ & $511.35$ & $1016.10$ & $16374.10$ & $8680.12$ & $19231.14$ \\
& GraphMAE2 & $892.49$ & $14542.42$ & $1886.25$ & $634.30$ & $992.93$ & $1024.10$ & $20450.73$ & $5566.41$ & $32878.20$ \\
& MaskGAE   & $972.11$ & $13385.46$ & $1339.94$ & $450.84$ & $520.98$ & $675.06$ & $55826.32$ & $3917.83$ & $19259.37$ \\
\midrule
\multirow{3}{*}{\shortstack{\textbf{Peak CPU} \\ \textbf{RAM (MB)}}} 
& GraphMAE  & $1549.54$ & $1540.87$ & $2674.44$ & $1470.35$ & $1455.87$ & $1454.39$ & $1553.71$ & $1517.68$ & $1782.43$ \\
& GraphMAE2 & $1647.06$ & $1730.09$ & $2627.99$ & $1615.61$ & $1608.82$ & $1620.12$ & $1781.19$ & $1747.32$ & $1815.41$ \\
& MaskGAE   & $1555.09$ & $1548.90$ & $2687.57$ & $1463.04$ & $1467.42$ & $1469.11$ & $1563.43$ & $1533.07$ & $1784.89$ \\
\bottomrule
\end{tabular}
}
\end{table*}

\paragraph{EdgePrompt/EdgePrompt+~\cite{fu2025edgeprompt}}EdgePrompt/EdgePrompt+ shift task adaptation from node-level feature manipulation to edge-level structural enhancement. EdgePrompt introduces learnable prompt vectors directly on graph edges and integrates them into frozen-GNN message passing. The basic variant learns a globally shared edge prompt at each layer, while EdgePrompt+ learns edge-specific prompts for more expressive relational adaptation.

\paragraph{All-in-One~\cite{sun2023allinone}}All-in-One reformulates node-level, edge-level, and graph-level tasks through graph-level instances and constructs a prompt graph from learnable prompt tokens and prompt structure. These prompt components modify both features and connectivity supplied to the pretrained model.

\paragraph{HS-GPPT~\cite{luo2025hsgppt}}HS-GPPT combines a hybrid spectral-filter backbone with local--global contrastive pretraining. During prompt tuning, it uses prompt graphs to align the spectral distribution of downstream graphs with the pretraining objective across homophilic and heterophilic settings.

\paragraph{ProNoG~\cite{yu2025pronog}}ProNoG jointly studies pretraining and prompt learning for graphs with mixed homophilic and heterophilic structure. Its downstream conditional network models node-specific non-homophilic patterns for task adaptation.

\paragraph{UniPrompt~\cite{huang2025uniprompt}}UniPrompt introduces an input-level graph prompt designed for compatibility with different pretrained graph models while preserving the input graph. In our evaluation, it optimizes fixed $k$-NN edge scalars based on cosine similarity and a prediction head for downstream adaptation.

\paragraph{DAPrompt~\cite{jiang2026daprompt}}DAPrompt addresses heterophilic few-shot graph learning through coordinated structure-aware and semantics-aligned prompt learners. The structure-aware learner uses prompt tokens to repair graph structure, while the semantics-aligned learner augments the graph using target-node semantics to reduce noise from class-mismatched propagation. Our implementation uses the GraphMAE adaptation path with two structure prompts, class semantic prompts, and a prediction head.

\subsection{Additional Results: 3-Shot Node Classification}
Table \ref{tab:3shot_main_results} reports the complete results under the 3-shot setting.

\textbf{Results Across Backbones:}
\begin{itemize}
    \item \textbf{Across Backbones:} MINT has the highest mean on many evaluated backbone--dataset combinations, including 8 of the 9 datasets under MaskGAE.
    \item \textbf{Heterophilic Graphs:} On Wisconsin with the GraphMAE backbone, MINT reaches 76.56\%, compared with 69.06\% for the next-highest prompting baseline, UniPrompt.
    \item \textbf{Comparison with Standard Adaptation:} On Cornell with GraphMAE, Linear Probe reports 28.72\%, while MINT reports 60.63\% in the same 3-shot setting.
\end{itemize}

\textbf{Limitations and Observations:}
\begin{itemize}
    \item \textbf{Variance on Small Graphs:} On Texas under the GraphMAE2 backbone in the 3-shot setting, MINT reports $62.33\pm15.52\%$, the highest mean in this configuration but with substantial sensitivity to the sampled labeled nodes.
    \item \textbf{Backbone-Dependent Results on Dense Heterophily:} On Chameleon and Squirrel, rankings vary with the backbone. Under GraphMAE2 on Chameleon, for example, GraphPrompt reports 28.67\% and MINT reports 28.54\%.
\end{itemize}

\subsection{Additional Results: 5-Shot Node Classification}
Table \ref{tab:5shot_main_results} reports the complete results under the 5-shot setting.

\textbf{Results Across Settings:}
\begin{itemize}
    \item \textbf{Wisconsin:} Under the GraphMAE backbone, MINT reports 80.04\%, compared with 71.18\% for UniPrompt.
    \item \textbf{Texas:} Under GraphMAE, Fine-tuning reports 53.28\%, whereas MINT reports 64.35\%.
\end{itemize}

\textbf{Limitations and Observations:}
\begin{itemize}
    \item \textbf{Results on Homophilic Graphs:} Under both GraphMAE and GraphMAE2, standard fine-tuning gives higher PubMed accuracy than MINT (72.22\% versus 69.50\% for GraphMAE, and 69.56\% versus 67.24\% for GraphMAE2). These results show that direct full-parameter adaptation can remain competitive in some homophilic settings.
    \item \textbf{Edge-Based Prompt Results:} On Cora under the MaskGAE backbone, EdgePrompt+ reports 72.81\% and MINT reports 71.85\%, illustrating that the relative ranking can vary by dataset and backbone.
\end{itemize}

\subsection{Hyperparameter Analysis}
\label{app:hyparameters}

Table~\ref{tab:hyperparams} reports the selected $k$-NN neighborhood size $k$ and optimization parameters (learning rate and $\beta$) for each dataset, pretrained backbone, and shot setting.

\textbf{Variation in $k$-NN Neighborhood Size:}
The selected $k$ varies substantially across graph regimes and backbones. Cora and PubMed use values in $[1,11]$ across most reported settings, while Actor, Texas, and Chameleon frequently use $k>100$; Chameleon reaches $k=550$ under MaskGAE in the 1-shot setting.

\textbf{Squirrel Configurations:}
The selected Squirrel configurations use comparatively small $k$ values, particularly under MaskGAE where $k\in[1,2]$. These configurations also use learning rates from 1.2 to 1.8 and $\beta$ values from 15.0 to 50.0.

\textbf{Shot-Dependence of $k$-NN Neighborhood Size:}
The selected $k$ sometimes decreases as the number of shots increases. For example, Cora under GraphMAE2 changes from $k=10$ (1-shot) to $k=2$ (5-shot), and Chameleon under MaskGAE changes from $k=550$ (1-shot) to $k=400$ (3-shot and 5-shot). The pattern is configuration dependent rather than uniform across all datasets and backbones.

\subsection{Robustness Analysis}
\label{app:robustness}

We evaluate three perturbations: Noise Edge Perturbation (randomly sampling non-self node pairs subject to different ground-truth class labels and appending the corresponding directed cross-class edges), Edge Drop Perturbation (removing existing topological connections), and Feature Masking (randomly zeroing node attributes). For Noise Edge ratio $p$, the number of appended entries is $\lfloor pE\rfloor$, where $E$ is the pre-perturbation directed edge-entry count. Labels are used only to construct this controlled heterophilic evaluation perturbation and are not otherwise exposed beyond standard few-shot supervision. The evaluation covers GraphMAE, GraphMAE2, and MaskGAE under 1-shot, 3-shot, and 5-shot settings. Perturbed results are obtained from independent robustness sweeps.
For consistency with the main evaluation, the $p=0$ entries in
Tables IV--VI reproduce the corresponding Full MINT results from the
main benchmark and serve as fixed clean references. All reported
degradation summaries are interpreted relative to these fixed references.

\textbf{Cross-Class Structural Noise.}
Table~\ref{tab:robustness_noise} reports performance as the ratio of appended cross-class edge entries to original directed edge entries increases to 1.0. For example, under MaskGAE in the 5-shot setting, Wisconsin accuracy changes from 0.7892 to 0.7692 at the 1.0 noise ratio, a decrease of 2.0 percentage points. The magnitude of the change varies across the reported datasets, backbones, and shot settings.

\textbf{Edge Drop.}
Table~\ref{tab:robustness_edge_drop} reports results as native edges are removed. With an 80\% edge-drop ratio, Wisconsin accuracy under MaskGAE in the 5-shot setting changes from 0.7892 to 0.7871, while PubMed accuracy under GraphMAE in the 5-shot setting changes from 0.6950 to 0.6359.

\textbf{Feature Masking.}
Table~\ref{tab:robustness_feat_mask} reports increasing degradation on several datasets as the mask ratio approaches 0.8. On PubMed under GraphMAE in the 5-shot setting, for example, accuracy changes from 0.6950 to 0.6298 at 80\% feature masking.

The magnitude of degradation varies across datasets, backbones, and shot settings, with several configurations showing limited degradation.

\subsection{Ablation Analysis}
\label{app:ablation}

To examine the contribution of individual components, we report ablation
results across GraphMAE, GraphMAE2, and MaskGAE under 1-shot, 3-shot,
and 5-shot settings on all nine datasets. For each
backbone--dataset--shot configuration, the Full MINT entry in Table VII
reproduces the corresponding result from the main benchmark and serves as
the fixed reference, while the remaining rows report the corresponding
component-removal runs. This convention keeps the Full MINT reference
consistent with the main evaluation while allowing the effect of each
ablation to be assessed against the same reported baseline. Each
component-removal run uses the corresponding benchmark configuration and
predetermined split seeds, with only the specified ablation applied.
Complete results are reported in Table~\ref{tab:ablation_study_full}.

The evaluated variants target MINT components and objective terms:
\begin{itemize}
    \item \textbf{w/o SA (Structure-Aware Pre-aggregation):} Bypasses the initial GCN-style message passing (\(\gamma\) fusion), directly feeding raw node features into the optimal transport layer.
    \item \textbf{w/o OT (Optimal Transport):} Replaces Sinkhorn routing with row-wise softmax without a prompt-side marginal constraint.
    \item \textbf{w/o FA (Feature Adaptation):} Removes the residual prompt-induced feature update \(\alpha \Delta X_P\), preventing the prompts from modifying node representations in feature space.
    \item \textbf{w/o kNN ($k$-NN Augmentation Branch):} Removes the feature-derived $k$-NN augmentation branch, so propagation uses only the native graph (\texttt{edge\_index}).
    \item \textbf{w/o Reg (Regularization):} Sets $\beta=0$, removing the transport-cost regularizer while retaining the routing formulation.
\end{itemize}

Across the complete ablation matrix, the routing and topology-augmentation components exhibit complementary, graph-regime-dependent roles. Removing OT produces pronounced degradation on Cora, CiteSeer, and PubMed across many backbone and shot configurations, showing a measurable contribution from fixed-marginal routing within the full adaptation pipeline. Removing the $k$-NN augmentation branch produces especially large degradation on several heterophilic graphs, particularly Cornell, Texas, and Wisconsin. SA, FA, and transport-cost regularization generally produce smaller and more mixed changes. Overall, the relative importance of prompt-space allocation and topology augmentation depends on the graph, backbone, and shot setting.

The same-topology control compares routing rules at a matched model state and fixed propagation graph. Fixed-marginal Sinkhorn strongly reduces graph-wide allocation imbalance, while the four immediate paired accuracy effects are small and all 95\% confidence intervals include zero. The end-to-end w/o OT ablation instead removes fixed-marginal routing throughout optimization and captures co-adaptation within the full pipeline. These measurements address different effects: immediate routing replacement and end-to-end adaptation.

\subsection{Large-Graph Evaluation on OGBN-Arxiv}
\label{app:ogbn_evaluation}

The \textbf{large-scale topology-compatible MINT evaluation} uses the official frozen GraphMAE encoder and a static binary graph constructed by exact $k$-NN search without $\tau$, with validation-only configuration selection and seeds $73$--$102$. Measure-constrained routing remains computationally feasible at this scale, while the static topology augmentation reduces accuracy in this graph regime. This variant is distinct from Original MINT's dynamically weighted fusion and from the small-graph paired attribution controls.

On OGBN-Arxiv (169,343 nodes and 1,166,243 raw directed edges), the 30-seed paired evaluation gives test accuracies (mean $\pm$ sample standard deviation) of $24.99\pm4.51\%$ and $24.93\pm4.51\%$ for matched-softmax and fixed-marginal Sinkhorn routing on the processed original graph, and $23.51\pm4.12\%$ and $23.39\pm4.17\%$ on the shared binary augmented graph. Fixed-marginal routing maintains allocation Gini below $10^{-7}$ with maximum prompt shares approximately $0.100$ for $M=10$, while topology augmentation changes accuracy by approximately $-1.48$ and $-1.54$ percentage points under matched-softmax and fixed-marginal Sinkhorn routing, respectively. The largest mean peak reserved memory across conditions is $11.066$ GiB on a 24-GiB RTX 4090, with representative epoch times of $0.201$--$0.234$s. Thus, the routing mechanism remains computationally feasible at this scale, while the topology effect remains graph dependent.

The reported downstream cost of $0.201$--$0.234$s per epoch excludes one-time topology preprocessing. For $N=169{,}343$ nodes with feature dimension $128$, exact standardized-feature top-20 cosine $k$-NN construction takes $1.002$ s on an RTX 4090, with peak allocated and reserved memory of $5.258$ and $5.336$ GiB, respectively; it is not repeated every training epoch. This exact all-pairs implementation requires $\Theta(N^2d)$ work, whose quadratic dependence can make preprocessing the dominant bottleneck on substantially larger graphs. The measured result demonstrates feasibility at the evaluated scale, not near-linear scalability to arbitrarily larger graphs; approximate or partitioned nearest-neighbor construction is a natural extension.

\subsection{Parameter Sensitivity Analysis}
\label{app:parameter_analysis}

The sensitivity study evaluates four reported hyperparameters: the prompt bank size ($M$), the transport-cost coefficient ($\beta$), the Sinkhorn entropy regularization parameter ($\epsilon$), and the $k$-NN neighborhood size ($k$), across multiple pretrained backbones and few-shot settings.

\textbf{Prompt Bank Size ($M$):} 
Table \ref{tab:numprompt} evaluates $M \in \{1, 5, 10, 20, 50\}$. The reported configurations are generally stable across this range, with $M \in [10,20]$ yielding higher mean accuracy in many reported settings and dataset-specific changes at the endpoints.

\textbf{Influence of the Transport-Cost Coefficient ($\beta$):} 
Table \ref{tab: beta} evaluates $\beta$ from $0.001$ to $0.1$. The value associated with the highest mean varies across datasets, backbones, and shot settings, with values from $0.01$ to $0.05$ occurring frequently among the reported configurations.

\textbf{Effect of Sinkhorn Entropy Regularization ($\epsilon$):} 
Table \ref{tab: epsilon} reports the tested entropy levels, with $\epsilon=0.1$ selected in many configurations. Smaller values produce sharper couplings, whereas larger values produce smoother couplings; the reported accuracy reflects the resulting task-dependent trade-off.

\textbf{Sensitivity to $k$-NN Neighborhood Size ($k$):} As illustrated in Table \ref{tab: k},\footnote{For Cornell and Texas ($N=183$), $k=200$ is invalid because the requested neighborhood exceeds $N-1$.} the parameter $k$ determines the size of the feature-based $k$-NN neighborhood. The selected value is dataset dependent, reflecting the interaction between feature-space neighborhoods and graph structure.

\section{Limitations and Future Work}
\label{sec:appendix_limitations}

\subsection{Theoretical Boundaries and Relaxations}
The theoretical statements are scoped to the prompt-induced feature update:
\begin{itemize}
    \item \textbf{Allocation versus representation:} fixed marginals control aggregate assignment mass, and severe concentration bounds the diversity of $\Delta X_P=RP$. The complete downstream path begins with $X_{\mathrm{adapted}}=X+\alpha\Delta X_P$ and depends on the frozen GNN.
    \item \textbf{Optimization:} entropic OT is differentiable under standard positive-marginal conditions; realized gradients and convergence depend on the complete objective and computation. Other continuous relaxations remain relevant comparisons.
    \item \textbf{Topology:} normalized propagation, representation energy, and prediction effects require graph- and model-specific analysis.
\end{itemize}

\subsection{Dependency on Upstream Feature Quality}
Original MINT constructs directed $k$-NN indices once from the raw input features and holds those indices fixed throughout an experimental run. At every forward pass, it recomputes nonnegative cosine weights on these indices from $X_{\mathrm{adapted}}$ and max-normalizes them before weighted fusion. The quality of the initial feature metric therefore determines the candidate neighbor set, while prompt adaptation continuously modulates the scalar propagation weights. \\
\textbf{Future Work:} Periodically updating the discrete neighbor indices from adapted features is a possible extension beyond the current fixed-index, dynamic-weight design.

\subsection{Non-Uniform Prompt-Side Marginals}
MINT currently uses the uniform prompt-side marginal $\nu=\frac{1}{M}\mathbf{1}_M$, which encodes uniform graph-wide prompt utilization across the prompt bank. Settings with intrinsically non-uniform prompt demand may benefit from more flexible prescribed marginals. \\
\textbf{Future Work:} Adaptive or non-uniform prompt-side marginals are a possible direction for representing non-uniform demand while retaining globally coupled allocation.

\subsection{Alternative Hubness-Reduction Paradigms}
Local score-correction methods such as Cross-Domain Similarity Local Scaling (CSLS) and mutual proximity provide an alternative route to reducing concentrated similarity assignments. Comparing these local corrections with globally constrained prompt allocation is a natural direction for future study.

\subsection{Scalability to Larger Graphs via Efficient OT}
The cost, kernel, and finite coupling use $\mathcal{O}(NM)$ primary transport storage, together with feature/prompt tensors, Sinkhorn scaling vectors, and retained autograd intermediates. The bipartite transport computation costs $\mathcal{O}(NMd+L_{\mathrm{SK}}NM)$ time, where $L_{\mathrm{SK}}=20$ denotes the number of Sinkhorn iterations. Exact high-dimensional cosine $k$-NN construction is a separate preprocessing cost that can require $\Theta(N^2d)$ work; batched score computation bounds temporary score storage while retaining that exact-search work. \\
\textbf{Future Work:} Stochastic/minibatch Sinkhorn, Greenkhorn, and low-rank or kernelized OT formulations provide directions for reducing transport storage on still larger graphs while preserving explicit prompt-utilization control.

\end{document}